\newif\ifFINAL
\theoremstyle{plain}
\newtheorem{theorem}{Theorem}[section]
\newtheorem{proposition}[theorem]{Proposition}
\newtheorem{lemma}[theorem]{Lemma}
\newtheorem{corollary}[theorem]{Corollary}
\theoremstyle{definition}
\newtheorem{assumption}{Assumption}
\theoremstyle{remark}
\def\safedef#1{%
   \ifx#1\undefined
      \expandafter\def\expandafter#1%
   \else
      \errmessage{The \string#1 is defined already}%
      \expandafter\def\expandafter\tmp
   \fi
}
\newcommand{\kj}[1]{{\color{RedOrange}[KJ: #1]}}
\newcommand{\gray}[1]{{ \color[rgb]{.6,.6,.6} #1 }}
\newcommand{\blue}[1]{{\color[rgb]{.3,.5,1}#1}}
\newcommand{\tblue}[1]{{\color[rgb]{0,0,1}#1}} 
\newcommand{\guide}[1]{{[\color{Violet}#1]}}
\definecolor{kjgray}{rgb}{.7,.7,.7}
\newtheoremstyle{kjstyle}
{1ex} 
{\topsep} 
{\itshape} 
{} 
{\bfseries} 
{.} 
{.5em} 
{} 
\newtheoremstyle{kjstyle2}
{.0em} 
{.0em} 
{\itshape} 
{} 
{\bfseries} 
{.} 
{.5em} 
{} 
\newtheoremstyle{kjstylenoitalic}
{1ex} 
{\topsep} 
{} 
{} 
{\bfseries} 
{.} 
{.5em} 
{} 
\definecolor{kjgray}{rgb}{.7,.7,.7}
\renewcommand{\paragraph}{%
  \@startsection{paragraph}{4}%
  {\z@}{0.50ex \@plus 1ex \@minus .2ex}{-1em}%
  {\normalfont\normalsize\bfseries}%
}
\safedef\horizontalline{\noindent\rule{\textwidth}{1pt} }
\newcolumntype{P}[1]{>{\centering\arraybackslash}p{#1}}
\newcolumntype{M}[1]{>{\centering\arraybackslash}m{#1}}
\def\ddefloop#1{\ifx\ddefloop#1\else\ddef{#1}\expandafter\ddefloop\fi}
\def\ddef#1{\expandafter\def\csname #1#1\endcsname{\ensuremath{\mathbb{#1}}}}
\def\ddef#1{\expandafter\def\csname c#1\endcsname{\ensuremath{\mathcal{#1}}}}
\def\ddef#1{\expandafter\def\csname b#1\endcsname{\ensuremath{{\mathbf{#1}}}}}
\def\ddef#1{\expandafter\def\csname b#1\endcsname{\ensuremath{{\boldsymbol{#1}}}}}
\def\ddef#1{\expandafter\def\csname h#1\endcsname{\ensuremath{\hat{#1}}}}
\def\ddef#1{\expandafter\def\csname hc#1\endcsname{\ensuremath{\hat{\mathcal{#1}}}}}
\def\ddef#1{\expandafter\def\csname hb#1\endcsname{\ensuremath{\hat{\mathbf{#1}}}}}
\def\ddef#1{\expandafter\def\csname hb#1\endcsname{\ensuremath{\hat{\boldsymbol{#1}}}}}
\def\ddef#1{\expandafter\def\csname t#1\endcsname{\ensuremath{\tilde{#1}}}}
\def\ddef#1{\expandafter\def\csname tc#1\endcsname{\ensuremath{\tilde{\mathcal{#1}}}}}
\def\ddef#1{\expandafter\def\csname tb#1\endcsname{\ensuremath{\tilde{\mathbf{#1}}}}}
\def\ddef#1{\expandafter\def\csname tb#1\endcsname{\ensuremath{\tilde{\boldsymbol{#1}}}}}
\def\ddef#1{\expandafter\def\csname bar#1\endcsname{\ensuremath{\bar{#1}}}}
\def\ddef#1{\expandafter\def\csname barc#1\endcsname{\ensuremath{\bar{\mathcal{#1}}}}}
\def\ddef#1{\expandafter\def\csname barb#1\endcsname{\ensuremath{\bar{\mathbf{#1}}}}}
\def\ddef#1{\expandafter\def\csname barb#1\endcsname{\ensuremath{\bar{\boldsymbol{#1}}}}}
\def\ddef#1{\expandafter\def\csname war#1\endcsname{\ensuremath{\overline{#1}}}}
\def\ddef#1{\expandafter\def\csname warc#1\endcsname{\ensuremath{\overline{\mathcal{#1}}}}}
\def\ddef#1{\expandafter\def\csname warb#1\endcsname{\ensuremath{\overline{\mathbf{#1}}}}}
\def\ddef#1{\expandafter\def\csname warb#1\endcsname{\ensuremath{\overline{\boldsymbol{#1}}}}}
\safedef\tilr{\tilde r}
\safedef\bff{{\boldsymbol f}}
\safedef\hbff{{\hat{\boldsymbol f}}}
\safedef\hatt{\hat{t}}
\safedef\tilo{{\tilde{o}}}
\safedef\tilh{{\tilde{h}}}
\safedef\bell{{{\boldsymbol\ell}}}
\safedef\tell{\ensuremath{\tilde{\ell}}} 
\safedef\btell{\ensuremath{\widetilde{\boldsymbol{\ell}}}} 
\safedef\hell{{{\hat\ell}}}
\safedef\rialpha{\ensuremath{{\mathring{\alpha}}}} 
\safedef\riz{\ensuremath{\mathring{z}}} 
\safedef\ribeta{\ensuremath{\mathring{\beta}}} 
\newcommand{\fr}[2]{ { \frac{#1}{#2} }}
\newcommand{\wbar}[1]{{\ensuremath{\overline{#1}}}}
\newcommand{\T}{\top}
\safedef\wed{\wedge}
\safedef\tsty{\textstyle}
\safedef\bec{\because}
\safedef\cd{\cdot}
\safedef\cc{{\circ}}
\safedef\la{\langle}
\safedef\ra{\rangle}
\safedef\dsum{\ensuremath{\displaystyle\sum}} 
\safedef\der{\ensuremath{\partial}\xspace}
\safedef\llfl{\left\lfloor} 
\safedef\rrfl{\right\rfloor}  
\safedef\llcl{\left\lceil}  
\safedef\rrcl{\right\rceil}  
\safedef\lfl{\lfloor} 
\safedef\rfl{\rfloor}  
\safedef\lcl{\lceil}  
\safedef\rcl{\rceil}  
\safedef\larrow{\ensuremath{\leftarrow}\xspace} 
\safedef\rarrow{\ensuremath{\rightarrow}\xspace} 
\safedef\sm{{\ensuremath{\setminus}\xspace} }
\safedef\grad{\ensuremath{\mathbf{\nabla}}\xspace}  
\safedef\lt{\left}
\safedef\rt{\right}
\definecolor{mygrn}{rgb}{0,.8,0}
\definecolor{myred}{rgb}{.8,0,0}
\safedef\sig{\sigma}
\safedef\om{\omega}
\safedef\dt{\delta}
\safedef\gam{\gamma}
\safedef\lam{\lambda}
\safedef\kap{\kappa}
\safedef\eps{\varepsilon}
\def\epsilon{\varepsilon}
\def\th{\theta}
\safedef\Lam{\Lambda}
\safedef\Dt{\Delta}
\safedef\Gam{\Gamma}
\safedef\Sig{\Sigma}
\safedef\Th{\Theta} 
\safedef\Om{\Omega}
\safedef\greeksymbols{alpha,beta,gamma,gam,delta,dt,eps,epsilon,zeta,eta,theta,th,iota,kappa,kap,lambda,lam,mu,nu,xi,pi,rho,sigma,sig,tau,phi,chi,psi,omega,om,Gamma,Gam,Delta,Dt,Theta,Th,Lambda,Lam,Pi,Sigma,Sig,Phi,Psi,Omega,Om}
\safedef\greeksymbolsnoeta{alpha,beta,gamma,gam,delta,dt,eps,epsilon,zeta,theta,th,iota,kappa,kap,lambda,lam,mu,nu,xi,pi,rho,sigma,sig,tau,phi,chi,psi,omega,om,Gamma,Gam,Delta,Dt,Theta,Th,Lambda,Lam,Pi,Sigma,Sig,Phi,Psi,Omega,Om} 
\safedef\bfeta{{\boldsymbol \eta}}
\safedef\hbfeta{{\hat{\boldsymbol \eta}}}
\xdef\csname barb\x\endcsname{\noexpand\ensuremath{\noexpand\bar{\noexpand\boldsymbol{ \csname \x\endcsname}}}}
\safedef\barbfeta{{\bar{\boldsymbol \eta}}}
\safedef\tbfeta{{\tilde{\boldsymbol \eta}}}
\safedef\dmu{{\dot\mu}}
\safedef\ddmu{{\ddot\mu}}
\DeclareMathOperator{\EE}{\mathbb{E}} 
\DeclareMathOperator{\PP}{\mathbb{P}}
\DeclareMathOperator{\one}{\mathds{1}\hspace{-.1em}}
\newcommand{\normz}[1]{{\norm[0]{#1}}}
\DeclarePairedDelimiterX{\inp}[2]{\langle}{\rangle}{#1, #2}
\newcommand\declareop[3]{%
  \newcommand#1{%
    \mskip\muexpr\medmuskip*#2\relax
    {#3}%
    \mskip\muexpr\medmuskip*#2\relax
}}
\declareop\capprox{1}{{\sr{\const}{\approx}}} 
\declareop\logapprox{1}{{\sr{\mathsf{log}}{\approx}}} 
\newcommand{\lsim}{\mathop{}\!\lesssim}
\safedef\Bin{\mathsf{Bin}}
\safedef\Uniform{{\mathsf{Uniform}}}
\safedef\Bernoulli{{\ensuremath{\mathsf{Bernoulli}}}}
\safedef\kt{{\mathsf{kt}}}
\safedef\mle{{\mathsf{mle}}}
\safedef\Approx{{\mathsf{Approx}} }
\safedef\denom{{\mathsf{denom}}}
\safedef\eff{{\mathsf{eff}}}
\safedef\Seff{{S_{\mathsf{eff}}}}
\safedef\opt{{\mathsf{opt}}}
\safedef\pes{{\mathsf{pes}}}
\safedef\faury{{\mathsf{faury}}}
\safedef\nice{{\textsf{nice} } } 
\safedef\ErrPrb{{\mathsf{ErrPrb}}}
\safedef\Seg{{\mathsf{Seg}}}
\safedef\COM{\mathsf{COM}}
\safedef\const{\mathsf{const}}
\safedef\wo{{\ensuremath{\mathsf{wo}}}}
\safedef\Top{\mathsf{Top}}
\safedef\Bot{\mathsf{Bot}}
\safedef\Sim{\mathsf{Sim}}
\safedef\TV{\mathsf{TV}}
\safedef\tmin{{\min}}
\safedef\tmax{{\max}}
\safedef\kl{{\mathsf{kl}}}
\safedef\err{\mathsf{err}} 
\safedef\logloss{{\mathsf{logloss}}}
\safedef\Ber{{\mathsf{Ber}}}
\safedef\erf{{\text{erf}}}
\safedef\erfc{\mathsf{erfc}}
\safedef\rcF{\ensuremath{\mathring{\cF}}} 
\safedef\Cf{{\ensuremath{{\normalfont{\text{Cf}}}}}}
\safedef\barCf{{\ensuremath{\wbar{\text{Cf}}}}}
\safedef\SR{{\ensuremath{\text{SR}}}\xspace}
\safedef\lin{{\ensuremath{\mathsf{lin}}}}
\safedef\IC{{\ensuremath{\normalfont{\text{IC}}}}}
\safedef\Reward{\ensuremath{\text{Reward}}}
\safedef\poly{\operatorname{poly}}
\safedef\Misid{\operatorname{Misid}}
\safedef\Corral{\ensuremath{\normalfont{\textsc{Corral}}}\xspace}
\safedef\AUL{{\ensuremath{\normalfont{\text{AUL}}}}} 
\safedef\Rel{{\ensuremath{\normalfont{\text{Rel}}}}} 
\safedef\Mis{{\ensuremath{\normalfont{\text{Mis}}}}} 
\safedef\Rad{\ensuremath{\text{\normalfont{Rad}}}}
\safedef\Reg{{\mathsf{Reg}}}
\safedef\Regret{\ensuremath{\normalfont{\text{Regret}}}}
\safedef\Wealth{\ensuremath{\normalfont{\text{Wealth}}}}
\safedef\Active{\ensuremath{\text{Active}}}
\safedef\decomp{\ensuremath{\mbox{decomp}}\xspace}
\safedef\sym{{\ensuremath{\text{Sym}}\xspace}} 
\safedef\suchthat{\ensuremath{\text{ s.t. }}}
\newcommand{\sr}{\stackrel}
\safedef\bigmid{\,\middle|\,\xspace}
\newcommand{\vast}{\bBigg@{3}}
\newcommand{\Vast}{\bBigg@{4}}
\safedef\rhoX{{\rho_{\mathcal{X}}}}
\safedef\lammin{{\lambda_{\min}}}
\safedef\elllog{{\ell^{\mathsf{log}}}}
\safedef\lampp{{\lam_\pp}}
\safedef\resh{\text{resh}}
\safedef\SVD{\text{SVD}}
\safedef\op{{\text{op}}}
\safedef\sT{{*\T}}
\safedef\Vol{{\text{Vol}}}
\safedef\pp{\perp}
\safedef\rell{{\mathring{\ell}}}
\safedef\lamI{{\lam I}}
\def\semi{{\normalfont\text{semi}}}
\def\full{{\normalfont\text{full}}}
\def\polylog{{\normalfont\text{polylog}}}
\def\barSig{{\wbar{\Sig}}}
\DeclareMathOperator{\barlnln}{\overline{\ln\ln}}
\def\ellw{\ell^{\mathsf{w}}}
\def\reg{\text{reg}}
\def\portfolio{{\text{portfolio}}}
\let\cite\citep 
\setlist[itemize]{topsep=.5pt,itemsep=0pt,parsep=2pt}
\setlist[enumerate]{topsep=.5pt,itemsep=0pt,parsep=2pt}
\def\Reg{{\normalfont\text{Reg}}}
  \def\blue#1{#1}
  \def\guide#1{}
  \def\gray#1{}
  \def\kj#1{}
\icmltitlerunning{Noise-Adaptive Confidence Sets for Linear Bandits and Application to Bayesian Optimization}
\begin{document}
\textfloatsep=.6em
  
\doparttoc 
\faketableofcontents 

\twocolumn[
\icmltitle{Noise-Adaptive Confidence Sets for Linear Bandits and Application to Bayesian Optimization}



\icmlsetsymbol{equal}{*}

\begin{icmlauthorlist}
\icmlauthor{Kwang-Sung Jun}{yyy}
\icmlauthor{Jungtaek Kim}{zzz}
\end{icmlauthorlist}

\icmlaffiliation{yyy}{University of Arizona}
\icmlaffiliation{zzz}{University of Pittsburgh}

\icmlcorrespondingauthor{Kwang-Sung Jun}{kjun@cs.arizona.edu}

\icmlkeywords{Contextual bandits, multi-armed bandits, sequential decision-making}

\vskip 0.3in
]



\printAffiliationsAndNotice{}

\begin{abstract}
Adapting to a priori unknown noise level is a very important but challenging problem in sequential decision-making as efficient exploration typically requires knowledge of the noise level, which is often loosely specified. We report significant progress in addressing this issue for linear bandits in two respects. First, we propose a novel confidence set that is `semi-adaptive' to the unknown sub-Gaussian parameter $\sigma_*^2$ in the sense that the (normalized) confidence width scales with $\sqrt{d\sigma_*^2 + \sigma_0^2}$ where $d$ is the dimension and $\sigma_0^2$ is the  specified sub-Gaussian parameter (known) that can be much larger than $\sigma_*^2$. This is a significant improvement over $\sqrt{d\sigma_0^2}$ of the standard confidence set of Abbasi-Yadkori et al. (2011), especially when $d$ is large. We show that this leads to an improved regret bound in linear bandits. Second, for bounded rewards, we propose a novel variance-adaptive confidence set that has much improved numerical performance upon prior art. We then apply this confidence set to develop, as we claim, the first practical variance-adaptive linear bandit algorithm via an optimistic approach, which is enabled by our novel regret analysis technique. Both of our confidence sets rely critically on `regret equality' from online learning. Our empirical evaluation in diverse Bayesian optimization tasks shows that our proposed algorithms demonstrate better or comparable performance compared to existing methods.
\end{abstract}

\section{Introduction}
\label{sec:intro}

In linear bandits~\cite{abe99associative,auer02using,dani08stochastic,ay11improved}, the learner faces the challenge of making judicious sequential decisions with observed features so that the rewards obtained from those decisions would be maximized.\footnote{The implementation of our proposed methods is available at \url{https://github.com/jungtaekkim/LOSAN-LOFAV}.}
Specifically, at each time step $t=1,2,\ldots,n$, the learner is given an arm set $\blue{\cX_t} \subset \RR^d$ and chooses an arm $\blue{x_t} \in \cX_t$ to observe a reward
\begin{align}\label{eq:model} 
  \blue{y_t} = \la x_t, \th^* \ra + \eta_t,
\end{align}
where $\blue{\th^*} \in \RR^d$ is unknown, $\blue{\eta_t} \mid \cF_{t-1}$ is a zero-mean stochastic noise, and $\blue{\cF_{t-1}} := \allowbreak \sigma(x_1,\allowbreak y_1,\allowbreak\ldots,\allowbreak x_{t-1},\allowbreak y_{t-1}, x_t)$.
The goal is to maximize the cumulative reward $\sum_{t=1}^n y_t$. 
The standard theoretical performance measure is cumulative pseudo-regret (or simply `regret') defined as
\begin{align}
  \Reg_n := \sum_{t=1}^n \la x_{t,*}, \th^*\ra - \la x_{t}, \th^*\ra,
\end{align}
where $x_{t,*} = \arg \max_{x\in\cX_t} \la x, \th^*\ra$ is the arm with the highest mean reward.
The significance of this problem lies in its wide applicability, ranging from recommendation systems~\cite{li10acontextual} to online advertising~\cite{avadhanula21stochastic} where the learner must balance between exploration (acquiring information about the model) and exploitation (pulling the arm estimated to give high reward).

A critical aspect of the linear bandit problem is the fact that an algorithm requires prior knowledge of the noise level (or an upper bound of it) and that its performance critically depends on the \textit{specified} noise level rather than the \textit{actual} noise level.
In this paper, we make significant progress in addressing this issue by proposing two algorithms that can semi or fully adapt to the actual noise level in two different setups.

The first setup is when the noise $\eta_t \mid \cF_{t-1}$ is $\sig_*^2$-sub-Gaussian where the sub-Gaussian parameter specified to the algorithm is $\sigma_0^2$ that can be much larger than $\sig_*^2$.
We propose a novel linear bandit algorithm called \textbf{LOSAN} (Linear Optimism with Semi-Adaptivity to Noise).
The critical ingredient for this algorithm is a novel confidence set whose (normalized) confidence width contains online variance estimators and is no larger than $\tcO(\sqrt{d\sig_*^2 + \sig_0^2})$ with high probability where $\tcO$ hides polylogarithmic factors.
This is no worse than $\tcO(\sqrt{d\sig_0^2 + \sig_0^2})$ of the standard confidence set of~\citet{ay11improved}, and in fact our confidence set can be significantly smaller when $\sig_0^2$ is largely over-specified.
LOSAN leverages our novel confidence set to perform optimistic exploration, which we prove to have a regret bound of $\tcO(\sig_0 \sqrt{dn} + \sig_* d\sqrt{n})$.
This bound, compared to the state-of-the-art bound of $\tcO(\sig_0d\sqrt{n})$, effectively achieves a factor of $\sqrt{d}$ improvement when $\sig^2_* \ll \sig^2_0$.
We present our confidence set and LOSAN in Section~\ref{sec:semi}.

The second setup is the case of bounded noise; i.e., $\forall t, \eta_t \in [-R,R]$ almost surely for some $R\ge0$.
Among the recent studies reporting regret bounds that adapt to the unknown variance $\sig_t^2$ of the noise $\eta_t$~\cite{zhang21variance,kim22improved,xu23noise,zhao23variance}, the seminal work of~\citet{zhao23variance} proposed an algorithm called SAVE and proved a regret bound of $\tcO(d\sqrt{R^2 + \sum_{t=1}^n \sig_t^2})$, which is unimprovable in general as it matches the optimal worst-case regret~\cite{lattimore20bandit}.
Note that this rate is orderwise never worse than existing regret bounds with the sub-Gaussian assumption like OFUL~\cite{ay11improved} that achieves $d\sqrt{R^2 n}$.

However, SAVE follows the SupLinRel~\cite{auer02using} style whose numerical performance is bad compared to other approaches such as optimistic~\cite{dani08stochastic} or posterior sampling~\cite{agrawal13thompson} strategies since it builds confidence bounds based on a small subset of samples only.
Furthermore, their algorithm adapts to the variance that is unknown yet \textit{fixed deterministically} before the bandit game starts. 
This severely limits its applicability.
For example, such a setting cannot incorporate the environment where the variance at time $t$ changes as a function of the chosen arm at time $t$ or the past behavior of the learner.

To overcome these limitations, we propose a novel confidence set that not only removes the limiting assumption on the noise discussed above but also significantly improves the numerical performance.
We then propose a novel optimistic-style linear bandit algorithm called \textbf{LOFAV} (Linear Optimism with Full Adaptivity to Variance) that computes the confidence set with all the available samples rather than a small subset of them.
Our analysis shows that LOFAV enjoys the same order of regret as SAVE, which is optimal up to logarithmic factors.
LOFAV can be implemented with the computational complexity of $\tcO((d^2\max_{t=1}^n|\cX_t|)n)$, which is the same as SAVE and is within a logarithmic factor of that of the standard algorithm OFUL~\cite{ay11improved}.
We present our confidence set, LOFAV, and their analysis in~\cref{sec:fully}.

Finally, we empirically validate the performance of both of our algorithms in synthetic experiments and the benchmarks widely used in the Bayesian optimization community including NATS-Bench~\citep{DongX2021ieeetpami}.
In this empirical analysis,
our algorithms yield better or comparable performance compared to OFUL and potentially the simple discrete Bayesian optimization strategy~\citep{GarridoEC2020neucom}.
We present our empirical results in~\cref{sec:expr}, discuss related work in~\cref{sec:related}, and conclude our paper with exciting future research directions in~\cref{sec:conclusion}.

\paragraph{Preliminaries.}
Throughout, we assume that both the confidence set and bandit algorithms have prior knowledge of $\blue{S}$ such that $\normz{\th^*} \le S$.
Note that techniques employed in~\citet{gales22norm} can be readily applied to remove this assumption with only a constant factor inflation in the leading term of $\sqrt{n}$ in the regret bound and a polynomial factor in the lower order term.
We assume that $\cX_t \subseteq \{x\in \RR^d: \normz{x}_2 \le 1\}$.
For a vector $x$, we define $\blue{\normz{x}} := \|x\|_2$ as the euclidean norm.
We define $\blue{\normz{x}_{V}} := \sqrt{x^\T V x}$ where $V$ is a positive semi-definite matrix.
We use $\blue{a \lsim b}$ to denote that there exists an absolute constant $c>0$ such that $a \le cb$. 
We use $\tcO$ as the big-O notation that omits polylogarithmic factors.
Let $\blue{a \vee b} := \max\{a,b\}$ and $a \wed b := \min\{a,b\}$.
Define $\blue{\barlnln(x)} := \ln\ln(e \vee x)$ and $[a..b] := \{a, a+1, \ldots, b\}$.

\section{Semi-Adaptation for Sub-Gaussian Noise}
\label{sec:semi}

In this section, we assume the standard linear bandit setup with the reward model of \eqref{eq:model} where the conditional noise $\eta_t \mid \cF_{t-1}$ satisfies the following assumption, which is standard in linear bandits.
\begin{assumption}\label{ass:semi}
  The noise $\eta_t \mid \cF_{t-1}$ is $\sig_*^2$-sub-Gaussian, $\forall t$.
  The algorithm has a prior knowledge of $\blue{\sig_0^2}$ such that $\sig_*^2 \le \sig_0^2$.
\end{assumption}
We first introduce our confidence set.
Our intuition comes from the standard FTRL (Follow-The-Regularized-Leader) regret equality~\citep[Lemma 7.1]{orabona19modern} (restated in~\cref{lem:regret_equality}).
For the case of the squared loss $\ell(\th) = \fr12(x_t^\T \th - y_t)^2$ with the standard online ridge regression estimator $\th_t = \arg \min_{\th} \sum_{s=1}^t \ell(\th) + \fr\lam2 \normz{\th}^2$, the regret equality implies
\begin{align*}
  \sum_{s=1}^t \ell_s(\th_{s-1}) - \ell_s(\th^*)
  &= \fr\lam2\normz{\th^*}^2 + \sum_{s=1}^t \ell_s(\th_{s-1}) \normz{x_s}^2_{V_s^{-1}}
\\&\quad - \fr12 \normz{\th_t - \th^*}^2_{V_t},
\end{align*}
where
\begin{align*}
    \blue{V_t} := \lam I + \sum_{s=1}^t x_s x_s^\T.
\end{align*}
The negative term on the RHS is often not useful and thus ignored in online learning.
However, this term exactly appears in confidence sets for linear bandits such as the standard self-normalized confidence set (SNCS)~\cite{ay11improved}.
Using the fact that $\sum_{s=1}^t \ell_s(\th_{s-1}) - \ell_s(\th^*) \ge - \sig_*^2 \ln(1/\dt) $ with high probability (see \cref{lem:negregret}), which we call the `negative regret bound', we obtain
\begin{align*}
  \fr12\normz{\th_t - \th^*}^2_{V_t} 
  &\le \fr{\lam}{2} \normz{\th^*}^2 \nonumber\\
  &\quad+ \underbrace{\sum_{s=1}^t \ell_s(\th_{s-1}) \normz{x_s}^2_{V_s^{-1}}}_{=: A_t} + \sig_*^2\ln(\frac1\dt)~.
\end{align*}
One can bound $\|\th^*\| \le S$ and use \cref{ass:semi} to bound $\sig_*^2 \le \sig_0^2$.
One can further try to upper bound $A_t$ to construct a confidence set.
Indeed, this is the approach taken by~\citet{dekel12selective} for online selective sampling (with minor differences) where they assume bounded noise; i.e., $\eta_t \mid \cF_{t-1} \in [0,1]$ almost surely.
This means that we have $\sig_0^2 = 1/4$.
This allows them to control $\ell_s(\th_{s-1})$ loosely such that $A_t \le \blue{\barA_t} := O(\sig_0^2 d\ln(t))$ and construct a confidence set
\begin{align*}
  \cbr[2]{\th \in\RR^d: \fr12 \normz{\th_t - \th}^2_{V_t} \le \fr\lam2 S^2 + \barA_t + \sig_0^2\ln(\frac1\dt)}~.
\end{align*}
While details differ, this is the essence of~\citet{dekel12selective}.
However, this does not provide an improvement over SNCS.
In another work of \citet{jun17scalable}, a slightly different technique of online-to-confidence-set conversion (see \cref{sec:related} for details) also introduces a term like the second term above; they use a union bound over the time steps to control $A_t \le O(\sig_0^2 S d \ln^2(t))$, which leads to an even looser bound than SNCS.

Departing from prior work, we propose to keep $A_t$ as is, which motivates the following confidence set:
\begin{align}\label{eq:conf-set-0}
  \cbr[2]{\th \in\RR^d: \fr12\normz{\th_t - \th}^2_{V_t} \le \fr\lam2 S^2 + A_t + \sig_0^2\ln(\frac1\dt)} ~.
\end{align}
The benefit is the following observation: If $\ell_s(\th_{s-1}) \approx \ell_s(\th^*)$, then $A_t = O(\sig_*^2 d \ln(t))$ with high probability.
Then, we can have a confidence set whose (normalized) confidence width (i.e., the upper bound on $\|\th_t - \th\|_{V_t}$) is of order $\cO\del[0]{\sqrt{d\sig_*^2 + \sig_0^2}}$ rather than $\cO(\sqrt{d\sig_0^2})$ of SNCS.
Note that the assumption of $\ell_s(\th_{s-1}) \approx \ell_s(\th^*)$ is sensible since for large enough $s$, the loss of $\th_{s-1}$ should be sufficiently similar to the true parameter $\th^*$.

However, our intuition does not easily lead to a confidence set whose radius is $\sqrt{d\sig_*^2 + \sig_0^2}$ for technical reasons that we omit here.
Still, one can see that the proposed confidence set above cannot be strictly better than SNCS.
The reason is that $A_t$ contains $\ell_1(\hth_0) = \ell_1(0) = y_1^2/2$, which can be $\Omega(B^2)$ where $B = \max_t \max_{x\in\cX_1}|\inp{x}{\th^*}|$ that can be as large as $S$.
Such a dependence does not appear in SNCS (note the factor $S$ can be canceled out by setting a large $\lam$), and thus~\eqref{eq:conf-set-0} cannot be strictly better than SNCS.
Instead, we found that an added assumption of $\| x_s \|^2_{V_s^{-1}} \le \fr12$ helps attain the desired inequality (\cref{lem:empirical-var}):
\begin{align*}
  A_t \lsim \sum_{s=1}^t \ell_s(\th^*) D_s^2 + \lam S^2 + \sig_0^2 \ln(1/\dt)~.
\end{align*}
While the tools proposed by~\citet{zhao23variance} can be used to satisfy $\| x_s \|^2_{V_s^{-1}} \le \fr12$.
However, this requires using weighted ridge regression, which changes the left-hand side $\fr12 \|\th_t - \th\|^2_{V_t}$ of the constraint now involves $V_t$ that consists of the weighted versions of $\{x_s\}_{s=1}^t$, which will block us from applying the standard elliptical potential lemma (e.g., \citet[Lemma 11]{ay11improved}) from the existing analysis technique. 
That is, even if we can achieve the target confidence width, the regret analysis must be done differently to accommodate the change.
Therefore, both our confidence set and the regret analysis are our novelty.

\paragraph{Proposed confidence set.}
Departing from the standard ridge regression, we use a weighted ridge regression estimator inspired by~\citet{zhao23variance}. 
Define the weight $\blue{w_t}$:
\begin{equation*}
    \blue{w_t} = 1 \wed \fr{1}{\normz{x_t}_{\Sig^{-1}_{t-1}}} \in \lparen0,1\rbrack,
\end{equation*}
where $\blue{\Sig_t} = \lam I + \sum_{s=1}^t w_s^2 x_s x_s^\T$.
Since $\| w_t x_t \|^2_{\Sig_{t-1}^{-1}} \le 1$, using Woodbury matrix identity, we obtain 
\begin{align*}
  \blue{D_t^2} :=  \| w_t x_t \|^2_{\Sig_t^{-1}}\le \fr12~,
\end{align*}
which will be the key property that enables our semi-adaptive bound.
The weighted ridge regression estimator we use is then defined as
\begin{align*}
  \hth_t = \arg \min_{\th} \sum_{s=1}^t w_s^2\ell_s(\th) + \fr{\lam}{2} \normz{\th}^2_2,
\end{align*}
where $\blue{\ell_s(\th)} := \fr12(x_s^\T \th - y_s)^2$ is the squared loss.
Equivalently, one can assume the noise model of
\begin{align*}
  w_t y_t = \la w_t x_t, \th^*\ra + w_t \eta_t,
\end{align*}
and write down the weighted estimator as
\begin{align*}
  \hth_t = \arg \min_{\th }  \sum_{s=1}^t \ellw_s(\th) + \fr{\lam}{2} \normz{\th}^2_2,
\end{align*}
where $\blue{\ellw_s(\th)} := \fr12(w_s x_s^\T \th - w_s y_s)^2 = w_s^2\ell(\th)$.

We then construct the following confidence set
\begin{align} \label{eq:confset}
  &\cC_{t}^\semi = \{\th \in \RR^d:
      \fr12\normz{\hth_{t} - \th}_{\Sig_{t}}^2 \notag
    \\&\quad\quad\quad\quad\le\! \fr{\lam}{2}   S^2 \!+\! \sum_{s=1}^{t} \ellw_s(\hth_{s-1}) D^2_s \!+\! \sig_0^2 \ln(1/\dt) =: \blue{\gam_t}
  \},
\end{align}
and show that the confidence set is time-uniformly valid as follows.
Throughout, all the proofs are deferred to the appendix.
\begin{theorem}\label{thm:confset-semi}
  Take \cref{ass:semi}. Then,
  \begin{align*}
    \PP(\forall t\ge1, \th^* \in \cC^\semi_t) \ge 1-\dt.
  \end{align*}
\end{theorem}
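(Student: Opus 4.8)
The plan is to convert the FTRL regret equality into an \emph{exact} expression for $\fr12\normz{\hth_t-\th^*}_{\Sig_t}^2$ and then bound the single term in that expression that is not already contained in $\gam_t$. First I would instantiate \cref{lem:regret_equality} with the weighted squared losses $\ellw_s$ and the weighted ridge estimator $\hth_t$ (whose regularizer is $\fr\lam2\normz{\th}^2$ and whose initial iterate is $\hth_0=0$). Since each $\ellw_s$ is quadratic with Hessian $w_s^2 x_sx_s^\T$, the role of $V_s$ in the unweighted statement is taken by $\Sig_s=\lam I+\sum_{r\le s}w_r^2 x_rx_r^\T$, and the per-step stability term becomes $\ellw_s(\hth_{s-1})\normz{w_sx_s}_{\Sig_s^{-1}}^2=\ellw_s(\hth_{s-1})D_s^2$ by the very definition of $D_s^2$. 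Taking the comparator to be $\th^*$ (the identity holds for any fixed comparator) gives
\begin{align*}
  \sum_{s=1}^t\del[1]{\ellw_s(\hth_{s-1})-\ellw_s(\th^*)}
  &=\fr\lam2\normz{\th^*}^2+\sum_{s=1}^t\ellw_s(\hth_{s-1})D_s^2\\
  &\quad-\fr12\normz{\hth_t-\th^*}_{\Sig_t}^2.
\end{align*}

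Rearranging isolates the quantity of interest:
\begin{align*}
  \fr12\normz{\hth_t-\th^*}_{\Sig_t}^2
  &=\fr\lam2\normz{\th^*}^2+\sum_{s=1}^t\ellw_s(\hth_{s-1})D_s^2\\
  &\quad-\sum_{s=1}^t\del[1]{\ellw_s(\hth_{s-1})-\ellw_s(\th^*)}.
\end{align*}
The bound $\normz{\th^*}\le S$ controls the first term by $\fr\lam2 S^2$, and the middle term is exactly the summand appearing in $\gam_t$, so it remains only to show that the negated weighted regret $-\sum_{s=1}^t(\ellw_s(\hth_{s-1})-\ellw_s(\th^*))$ is at most $\sig_0^2\ln(1/\dt)$ simultaneously for all $t$. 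This is where I would invoke \cref{lem:negregret}, applied to the reweighted observations $w_sy_s=\la w_sx_s,\th^*\ra+w_s\eta_s$: the effective noise $w_s\eta_s\mid\cF_{s-1}$ is $w_s^2\sig_*^2$-sub-Gaussian, hence $\sig_*^2$-sub-Gaussian because $w_s\le1$, while the online prediction $\la w_sx_s,\hth_{s-1}\ra$ is $\cF_{s-1}$-measurable since $\hth_{s-1}$, $x_s$, and $w_s$ (through $\Sig_{s-1}$ and $x_s$) are all determined by $\cF_{s-1}$; here it is essential that the filtration $\cF_{s-1}$ already contains $x_s$. \cref{lem:negregret} then yields, with probability at least $1-\dt$ and uniformly over $t$, $-\sum_{s=1}^t(\ellw_s(\hth_{s-1})-\ellw_s(\th^*))\le\sig_*^2\ln(1/\dt)\le\sig_0^2\ln(1/\dt)$, the last inequality being \cref{ass:semi}.

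Combining the three estimates on the same high-probability event gives $\fr12\normz{\hth_t-\th^*}_{\Sig_t}^2\le\fr\lam2 S^2+\sum_{s=1}^t\ellw_s(\hth_{s-1})D_s^2+\sig_0^2\ln(1/\dt)=\gam_t$ for all $t$, which is precisely the statement $\th^*\in\cC^\semi_t$ for all $t$, proving the claim. The one genuinely delicate ingredient is the negative regret bound: it is the only probabilistic step, and obtaining it time-uniformly (so that no union bound over $t$ is needed) forces a self-normalized supermartingale argument closed via Ville's inequality, with the sub-Gaussian constant tracked carefully so that the weighting $w_s\le1$ keeps the martingale exponent controlled and produces the clean additive $\sig_0^2\ln(1/\dt)$. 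Everything else reduces to the exact regret identity together with the deterministic fact $\normz{\th^*}\le S$; notably, the key property $D_s^2\le\fr12$ is \emph{not} needed for validity and is instead what later yields the semi-adaptive confidence width.
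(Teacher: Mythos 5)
Your proposal is correct and follows essentially the same route as the paper: instantiate the FTRL regret equality (\cref{lem:regret_equality}) for the weighted losses $\ellw_s$ with comparator $\th^*$, rearrange to isolate $\fr12\normz{\hth_t-\th^*}_{\Sig_t}^2$, and control the negated regret term time-uniformly via \cref{lem:negregret} applied to the $\sig_*^2$-sub-Gaussian noise $w_s\eta_s$ (using $w_s\le1$), then bound $\sig_*^2\le\sig_0^2$ and $\normz{\th^*}\le S$. Your closing remarks — that the negative regret bound is the only probabilistic step and that $D_s^2\le\fr12$ is irrelevant to validity and only matters for the later width analysis — are also exactly consistent with the paper's treatment.
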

We show later (Appendix~\ref{sec:details-semi}) that, with high probability,
\begin{align*}
  \gam_t = O\del{ \lam S^2 + \sig_*^2 d \ln(1 + \fr{t}{d\lam}) + \sig_0^2 \ln(1/\dt)}  ~,
\end{align*}
which shows an orderwise improvement upon SNCS that has $O\del{ \lam S^2 + \sig_0^2 d \ln(1 + \fr{t}{d\lam}) + \sig_0^2 \ln(1/\dt)}$.
The gap becomes much larger when $\sig_*^2 \ll \sig_0^2$ or even $\sig_*^2 = 0$.

\textbf{Proposed bandit algorithm.~}
We are now ready to present our algorithm, which follows the optimistic approach~\cite{dani08stochastic,ay11improved} with our confidence set $\cC^\semi_t$, which pulls the arm with the largest upper confidence bound.
The full pseudo-code is presented in \cref{alg:semi-adaptive}.
\begin{algorithm}[t]
  \caption{LOSAN (Linear Optimism with Semi-Adaptivity to Noise)}
  \label{alg:semi-adaptive}
  \begin{algorithmic}[1]
    \STATE {\bfseries Input:} norm bound $S$, sub-Gaussian parameter $\sig_0^2$
    \FOR{$t=1,2,\ldots$ }
    \STATE Observe the arm set $\cX_t$.
    \STATE Pull $x_t 
    = \arg \max_{x\in \cX_t} \max_{\th \in \cC^\semi_{t-1}}~ \la x, \th \ra~$
    where $\cC^\semi_{t-1}$ is defined in~\eqref{eq:confset} and
    \begin{align*}
      \max_{\th \in \cC^\semi_{t-1}} \la x, \th \ra = \inp{x}{\hth_{t-1}} + \sqrt{2\gam_{t-1}} \normz{x}_{\Sig^{-1}_{t-1}}~.
    \end{align*}
    \STATE Receive reward $y_t$.
    \ENDFOR
  \end{algorithmic}
\end{algorithm}

\begin{theorem}\label{thm:semi-regret-bound}
  Let $B = \max_{t=1}^\infty \max_{x\in \cX_t} |\inp{x}{\th^*}|$.
  Under \cref{ass:semi}, \cref{alg:semi-adaptive} with $\lam = \sig_0^2/S^2$ satisfies that, with probability at least $1-O(\dt)$, $\forall n\ge1$,
  \begin{align*}
    \Reg_n \lsim \sig_* d \sqrt{n} + \sig_0 \sqrt{dn\ln(1/\dt)}+ dB,
  \end{align*}
  where we omit $\polylog(d,n,S,\sig^2_0)$ factors.
\end{theorem}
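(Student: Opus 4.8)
The plan is to run a standard optimistic (UCB) regret decomposition, but with care to convert between the unweighted features $x_t$ that appear in the regret and the weighted features $w_t x_t$ that drive the confidence set $\cC^\semi_t$ and the elliptical potential. First I would condition on the event that $\th^* \in \cC^\semi_t$ for all $t$, which holds with probability at least $1-\dt$ by \cref{thm:confset-semi}. On this event, optimism gives, for the instantaneous regret $r_t := \inp{x_{t,*}}{\th^*} - \inp{x_t}{\th^*}$, that $\inp{x_{t,*}}{\th^*} \le \max_{\th\in\cC^\semi_{t-1}}\inp{x_{t,*}}{\th} \le \inp{x_t}{\hth_{t-1}} + \sqrt{2\gam_{t-1}}\normz{x_t}_{\Sig_{t-1}^{-1}}$, the last step using that $x_t$ maximizes the UCB together with the closed form in \cref{alg:semi-adaptive}. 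Combining this with Cauchy--Schwarz, $\inp{x_t}{\th^*-\hth_{t-1}} \le \normz{x_t}_{\Sig_{t-1}^{-1}}\normz{\th^*-\hth_{t-1}}_{\Sig_{t-1}} \le \sqrt{2\gam_{t-1}}\normz{x_t}_{\Sig_{t-1}^{-1}}$ (again using $\th^*\in\cC^\semi_{t-1}$), yields the per-step bound $r_t \le 2\sqrt{2\gam_{t-1}}\normz{x_t}_{\Sig_{t-1}^{-1}}$.

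The key difficulty, and the place where the weighting interacts with the analysis, is that $\normz{x_t}_{\Sig_{t-1}^{-1}}$ is unweighted while the potential is controlled only for $w_t x_t$. I would split the rounds by the value of $w_t = 1\wed 1/\normz{x_t}_{\Sig_{t-1}^{-1}}$. When $w_t = 1$ (equivalently $\normz{x_t}_{\Sig_{t-1}^{-1}}\le 1$) we have $\normz{x_t}_{\Sig_{t-1}^{-1}} = \normz{w_t x_t}_{\Sig_{t-1}^{-1}}$, so the per-step bound becomes $2\sqrt{2\gam_{t-1}}\normz{w_t x_t}_{\Sig_{t-1}^{-1}}$, which is amenable to the elliptical potential lemma. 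When $w_t < 1$ (equivalently $\normz{x_t}_{\Sig_{t-1}^{-1}} > 1$) I would instead bound $r_t \le 2B$ trivially and argue there are few such rounds: in exactly these rounds $\normz{w_t x_t}^2_{\Sig_{t-1}^{-1}} = 1$, and since each contributes $\Theta(1)$ to $\sum_t \ln(1+\normz{w_t x_t}^2_{\Sig_{t-1}^{-1}}) = \ln(\det\Sig_n/\det\Sig_0) = O(d\ln(1+n/(d\lam)))$, their count is $O(d\ln n)$. This produces the $dB$ term up to logs.

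For the $w_t=1$ rounds I would use monotonicity of $\gam_t$ to replace $\gam_{t-1}$ by $\gam_n$, apply Cauchy--Schwarz across rounds, and invoke the weighted elliptical potential lemma (in the style of \citet[Lemma 11]{ay11improved}), namely $\sum_t \normz{w_t x_t}^2_{\Sig_{t-1}^{-1}} \le 2\ln(\det\Sig_n/\det\Sig_0) = O(d\ln(1+n/(d\lam)))$, where I use $u \le 2\ln(1+u)$ for $u = \normz{w_t x_t}^2_{\Sig_{t-1}^{-1}} = D_t^2 \le \tfr12 \le 1$. This gives $\sum_{t:w_t=1} r_t \lsim \sqrt{\gam_n}\,\sqrt{n}\,\sqrt{d\ln(1+n/(d\lam))}$. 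I would then substitute the high-probability bound $\gam_n = O(\lam S^2 + \sig_*^2 d\ln(1+n/(d\lam)) + \sig_0^2\ln(1/\dt))$ from Appendix~\ref{sec:details-semi} (holding on a further event of probability $1-O(\dt)$), take $\lam = \sig_0^2/S^2$ so that $\lam S^2 = \sig_0^2$ is absorbed, and split $\sqrt{a+b}\le\sqrt a+\sqrt b$ to separate the $\sig_*^2 d$ and $\sig_0^2\ln(1/\dt)$ contributions. This produces $\sqrt{\sig_*^2 d\cdot dn} + \sqrt{\sig_0^2\ln(1/\dt)\cdot dn} = \sig_* d\sqrt n + \sig_0\sqrt{dn\ln(1/\dt)}$ up to polylogarithmic factors, which combined with the $dB$ term and a union bound over the two events ($1-O(\dt)$ overall) gives the claim.

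The main obstacle I anticipate is twofold. The first is the case split on $w_t$: bounding the high-norm rounds by $2B$ and counting them via the log-determinant is what converts the mismatch between unweighted $x_t$ and weighted $w_t x_t$ into the mild additive $dB$ term, and getting this clean is exactly the ``novel regret analysis technique'' the paper advertises. The second, and more substantive, is establishing the $\gam_n$ bound itself: controlling $\sum_s \ellw_s(\hth_{s-1})D_s^2$ by order $\sig_*^2 d$ rather than $\sig_0^2 d$ requires the negative-regret argument (\cref{lem:negregret}) together with \cref{lem:empirical-var} so that $\ell_s(\hth_{s-1})\approx \ell_s(\th^*)$, which is precisely the source of the semi-adaptivity and the $\sqrt d$ improvement over the SNCS-based bound.
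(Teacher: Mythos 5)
Your proposal is correct and follows essentially the same route as the paper's proof: the same optimistic per-step bound $\reg_t \le 2\sqrt{2\gam_{t-1}}\,\normz{x_t}_{\Sig_{t-1}^{-1}}$, the same case split on $w_t=1$ versus $w_t<1$ (with the latter rounds bounded by $2B$ and counted via the elliptical potential count argument, yielding the $dB$ term), the same Cauchy--Schwarz plus elliptical potential step for the $w_t=1$ rounds, and the same substitution of the high-probability bound on $\gam_n$ from \cref{prop:gam_t-bound} with $\lam=\sig_0^2/S^2$. The only cosmetic difference is that you inline the log-determinant counting argument where the paper invokes \cref{lem:epc} directly (which gives an $n$-independent count, but both versions are absorbed into the omitted polylogarithmic factors).
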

\cref{thm:semi-regret-bound} shows an improvement upon the state-of-the-art regret bound of $\sig_0 d\sqrt{n\ln(1/\dt)} + dB$ which is reported in~\citet[Exercise 19.3]{lattimore20bandit}.
When the sub-Gaussian parameter is largely over-specified (i.e., $\sig_0^2 \gg \sig_*^2$), our algorithm achieves a factor of $\sqrt{d}$ improvement, which is significant when $d$ is not too small.
To our knowledge, our confidence set and LOSAN are the first ones to achieve the semi-adaptivity under the sub-Gaussian noise assumption.

\section{Full Adaptation to Bounded Noise}
\label{sec:fully}

In this section, we turn to the bounded noise case.
Specifically, we assume the model of~\eqref{eq:model} with the following noise assumption.
\begin{assumption}\label{ass:fully}
  The reward noise $\eta_t$ satisfy $\eta_t\in[-R,R]$ for some $R \ge 0$ with probability 1 for every $t$.
  The algorithm has a priori knowledge of $R$.
\end{assumption}

Of recently proposed studies on variance-adaptive linear bandits (see \cref{sec:related} for more discussion), \citet{zhao23variance} for the first time proposed an algorithm called SAVE that enjoys $\tcO(d\sqrt{(R^2 + \sum_{t=1}^n \sig_t^2)})$ with polynomial time and space complexity.
In fact, its space and time complexity is of the same order (up to logarithmic factors) as the standard linear bandits such as OFUL~\cite{ay11improved}, which is $O(d^2 \max_{t=1}^n|\cX_t|  n)$ time complexity and $O(d^2)$ space complexity.
SAVE has improved both the regret bound (a factor of $\sqrt{d}$) and the time complexity (exponential to polynomial) upon the previous state-of-the-art~\citet{kim22improved}.

However, SAVE is inherently based on SupLinRel~\cite{auer02using} or SupLinUCB~\cite{chu11contextual}, which maintains $L$ disjoint buckets of the observed samples.
Since each estimator $\hth_\ell$ is computed from the samples from $\ell$-th bucket only, SupLinRel-style algorithms are usually an order of magnitude worse than the standard algorithmic frameworks such as optimistic approach~\cite{ay11improved} or posterior sampling approach~\cite{agrawal13thompson}.

\textit{Is it possible to achieve the same order of regret bound and computational complexity without wasting samples?}
We answer this question in the affirmative by developing an optimistic-style algorithm.
As is usual, we first need to construct a confidence set for the unknown parameter $\th^*$.
While one can leverage the existing confidence set used for SAVE, it works under the assumption where the conditional variance of $\eta_t \mid \cF_{t-1}$ at time $t$ is fixed ahead of time before the bandit game starts.
That is, they assume that there exists a sequence of deterministic values $\barsig^2_1,\ldots,\barsig^2_n$ such that
\begin{align}\label{eq:zhao23-assumption}
  \forall t\in[n], (\sig_t^2 \mid \cF_{t-1}) = \barsig^2_t~.
\end{align}
We found this unrealistic as it can only deal with the noise that varies only as a function of the time step $t$ (e.g., a seasonal effect of the customers' behavior in recommendation systems).
For example, such an assumption cannot effectively capture the case where the noise variance changes as a function of the specific arm being pulled.
Furthermore, we have found that the variance-adaptive confidence set used in SAVE is quite loose, which requires a lot of samples until outperforms even SNCS; see \cref{fig:conf-set}.

Motivated by the limitations of the sample-inefficiency of the prior art, we propose a novel confidence set and bandit algorithm that remove the limited assumption on the variance and exhibit much improved numerical performance.
Our proposed confidence set computes $L$ estimators and builds a confidence set as an intersection of $L$ base confidence sets.
Inspired by the base confidence set of SAVE~\cite{zhao23variance}, we construct our base confidence sets by leveraging weighted ridge regression estimators, but with a critical difference that (i) we leverage the regret equality (\cref{lem:regret_equality}), which results in a significantly tightened confidence set (see \cref{fig:conf-set} for numerical results) and (ii) we use an exponential cover to remove the restrictive assumption~\eqref{eq:zhao23-assumption} and adapt to any conditional variance, and (iii) compute a set of secondary estimator that will be the center of our confidence ellipsoid.

\begin{figure}[t]
  \centering
  \includegraphics[width=0.5\linewidth]{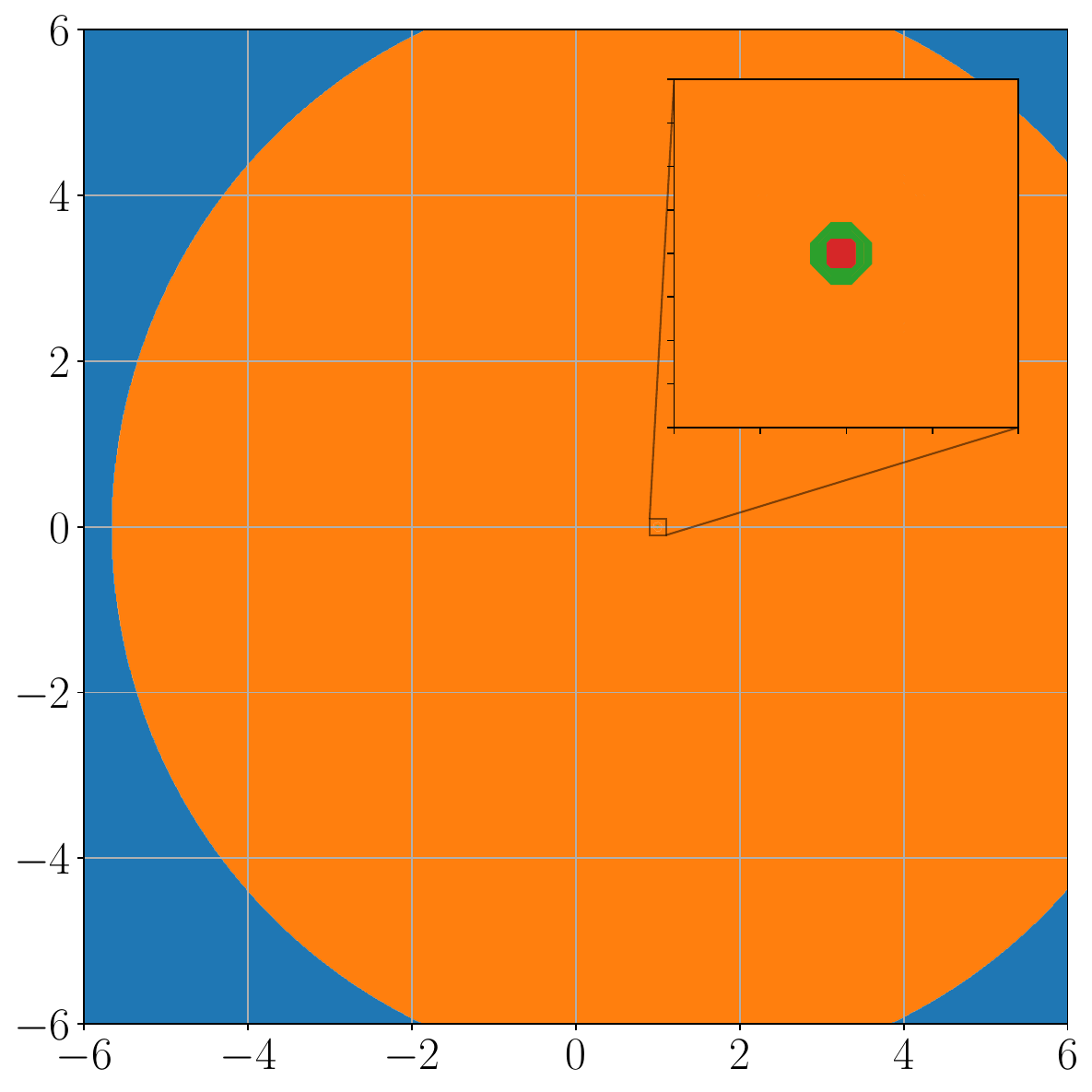}
  \caption{The green line (very small) represents our confidence set $\cC^\full_t$ and the orange area represents the confidence set of~\citet{zhao23variance}, which is also implemented as an intersection of $L$ confidence sets like ours.
    We use $n=$ 500,000 samples, $d=2$, $L=9$, and $\sig_t^2=0.1,\forall t$. 
    With $\th^* = (1,0)$, the upper confidence bound on the mean reward of the arm $x=(1,0)$ is $1.01$ with our method while it is $7.66$ with their method and $1.05$ with SNCS.}
  \label{fig:conf-set}
\end{figure}

\paragraph{Proposed confidence set.}
Fix $L \in \NN_+$ and let $\ell \in [L]$.
We let $\blue{\rho_\ell} = 2^{-\ell}$ and set the regularization parameter $\blue{\lam_\ell} = \fr{R^2}{S^2} \rho_\ell^2$.
We set the weights 
\begin{align*}
  \blue{w_{s,\ell}} = 1 \wed \fr{\rho_\ell}{\normz{x_s}_{\Sig^{-1}_{s-1,\ell} } }~,
\end{align*}
where
\begin{align}\label{eq:Sig_t_ell}
  \blue{\Sig_{t,\ell}} = \lam_\ell I + \sum_{s=1}^{t} w^2_{s,\ell} x_s x_s^\T ~.
\end{align}
The weight ensures that $\normz{w_{t,\ell} x_t}_{\Sig^{-1}_{t-1,\ell} } \le \rho_\ell$.
Define $\ellw_{s,\ell}(\th) := w^2_{s,\ell}\ell_s(\th)$ and $\ellw_{s,\ell}(\th; \th') = \fr{w^2_{s,\ell}}{2}(x_s^\T(\th - \th'))^2$.
We then compute the weighted ridge regression estimator:
\begin{align*}
  \blue{\hth_{t,\ell}} 
  = \arg \min_{\th} L_{t,\ell}(\th),
\end{align*}
where
\begin{align*}
  \blue{L_{t,\ell}(\th)} = \sum_{s=1}^t \ellw_{s,\ell}(\th) + \fr{\lam_\ell}{2}  \normz{\th}^2_2  ~.
\end{align*}
We then compute our secondary estimators as follows:
\begin{align}
  \blue{K_{t,\ell}(\th)} &:= \sum_{s=1}^t \ellw_{s,\ell}(\th) + \sum_{s=1}^t \ellw_{s,\ell}(\th; \hth_{s-1}) + \fr{\lam_\ell}2 \normz{\th}^2, \notag
  \\  \blue{\barth_{t,\ell}} &:= \arg \min_{\th} K_{t,\ell}(\th), \label{eq:barth}
\end{align}
With $\blue{\barSig_{t,\ell}} := \lam_\ell I + 2\sum_{s=1}^t w^2_{s,\ell}  x_s x_s^\T$, we define our confidence set as an intersection of confidence ellipsoids centered at $\barth_{t,\ell}$ as follows:
\begin{align}\label{eq:confset-fa}
  \cC^\full_t := \cap_{\ell=1}^{L} \cC^\full_{t,\ell},
\end{align}
where
\begin{align}
  \forall \ell\in[L], \blue{\cC^\full_{t,\ell}} = \cbr{\th\in\RR^d: \fr12 \normz{\th - \barth_{t,\ell}}_{\barSig_{t,\ell}}^2 \le \beta_{t,\ell}}, \notag
\end{align}
and $\beta_{t,\ell}$ is defined based on its previous version $\tblue{\beta_{t-1,\ell}}$. 
Specifically, with $\blue{D_{t,\ell}^2} := \normz{w_{t,\ell} x_t}^2_{\Sig_{t,\ell}^{-1}}$,
\begin{align}
  \blue{\beta_{t,\ell}} 
  &\!:=\! L_{t,\ell}(\hth_{t,\ell}) \!-\! K_{t,\ell}(\barth_{t,\ell}) \!+\! \fr{\lam_\ell} 2 S^2 \!+\! \sum_{s=1}^{t} \ellw_{s,\ell}(\hth_{s-1,\ell}) D_{s,\ell}^2 \notag
  \\&~ +\sqrt{8\rho_\ell^2\tblue{\barbeta_{t-1,\ell}}\del{\tsty\sum_{s=1}^t \ellw_{s,\ell}(\hth_{s-1}) + R^2\ln(2L/\dt)} \xi_{t,\ell} }  \notag
  \\&~ + 2^{k_{t,\ell}} \rho_\ell R\sqrt{2\beta_{0,\ell}}\xi_{t,\ell},\label{eq:beta_t_ell}
\end{align}
where $\tblue{\barbeta_{t-1,\ell}} := \max_{s=0}^{t-1} \tblue{\beta_{s,\ell}}$, $\blue{\beta_{0,\ell}} := \fr{\lam_\ell}{2}S^2$, $\blue{\xi_{t,\ell}} := \ln( \sqrt{\pi(t+1)} \cd \fr{ 6.8L\cd k_{t,\ell} \ln^2(1+k_{t,\ell}) }{\dt})$, and $\blue{k_{t,\ell}} := 1 \vee \lcl \log_2(\sqrt{\tblue{\barbeta_{t-1,\ell}}/\beta_{0,\ell}}) \rcl$.

Our confidence set enjoys the following correctness guarantee.
\begin{theorem}
  \label{thm:confset-fa}
  Under \cref{ass:fully}, we have
  \begin{align*}
    \PP(\forall t\ge1, \th^* \in \cC^\full_t) \ge 1-\dt.
  \end{align*}
\end{theorem}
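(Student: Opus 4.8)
The plan is to prove the per-bucket bound $\PP(\forall t\ge1,\th^*\in\cC^\full_{t,\ell})\ge 1-\dt/L$ for each fixed $\ell\in[L]$ and then union bound over the $L$ buckets. Fix $\ell$ and abbreviate $\Delta_s := \hth_{s-1,\ell}-\th^*$. The starting observation is that $K_{t,\ell}$ is a quadratic in $\th$ whose Hessian is exactly $\barSig_{t,\ell}$ and whose minimizer is $\barth_{t,\ell}$, so $\fr12\normz{\th^*-\barth_{t,\ell}}^2_{\barSig_{t,\ell}} = K_{t,\ell}(\th^*)-K_{t,\ell}(\barth_{t,\ell})$. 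Hence the event $\th^*\in\cC^\full_{t,\ell}$ is precisely $K_{t,\ell}(\th^*)-K_{t,\ell}(\barth_{t,\ell})\le\beta_{t,\ell}$, which is what I would establish.

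Next I would isolate the stochastic part. Using $K_{t,\ell}(\th)=L_{t,\ell}(\th)+\sum_{s}\ellw_{s,\ell}(\th;\hth_{s-1,\ell})$, the quadratic identity $L_{t,\ell}(\th^*)=L_{t,\ell}(\hth_{t,\ell})+\fr12\normz{\th^*-\hth_{t,\ell}}^2_{\Sig_{t,\ell}}$ (Hessian $\Sig_{t,\ell}$, minimizer $\hth_{t,\ell}$), and the FTRL regret equality (\cref{lem:regret_equality}) applied to the weighted online estimator $\hth_{t,\ell}$, which rewrites $\fr12\normz{\hth_{t,\ell}-\th^*}^2_{\Sig_{t,\ell}}$ as $\fr{\lam_\ell}2\normz{\th^*}^2 + \sum_s\ellw_{s,\ell}(\hth_{s-1,\ell})D_{s,\ell}^2 - \sum_s\del{\ellw_{s,\ell}(\hth_{s-1,\ell})-\ellw_{s,\ell}(\th^*)}$, the deterministic leftovers match the first line of $\beta_{t,\ell}$ after bounding $\normz{\th^*}^2\le S^2$. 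The remainder collapses: the per-step combination $\ellw_{s,\ell}(\th^*)-\ellw_{s,\ell}(\hth_{s-1,\ell})+\ellw_{s,\ell}(\th^*;\hth_{s-1,\ell})$ equals the single term $Z_s := w_{s,\ell}^2\eta_s\la x_s,\Delta_s\ra$, so everything reduces to the martingale $M_t:=\sum_{s=1}^t Z_s$. Since $\hth_{s-1,\ell},x_s,w_{s,\ell}$ are $\cF_{s-1}$-measurable and $\EE[\eta_s\mid\cF_{s-1}]=0$, $(M_t)$ is a martingale, and it remains to show $M_t$ is dominated by the two concentration terms in \eqref{eq:beta_t_ell}.

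To bound $M_t$ I would use a time-uniform empirical-Bernstein inequality. The weight construction gives $\normz{w_{s,\ell}x_s}_{\Sig^{-1}_{s-1,\ell}}\le\rho_\ell$, so $w_{s,\ell}^2\la x_s,\Delta_s\ra^2\le\rho_\ell^2\normz{\Delta_s}^2_{\Sig_{s-1,\ell}}$. Under the inductive assumption that $\th^*\in\cC^\full_{s-1,\ell}$ for all $s\le t$, the center bound $\normz{\th^*-\barth_{s-1,\ell}}^2_{\barSig_{s-1,\ell}}\le 2\beta_{s-1,\ell}$ holds and, since $\Sig_{s-1,\ell}\preceq\barSig_{s-1,\ell}$, transfers (after a deterministic step relating $\barth$ and $\hth$) to $\normz{\Delta_s}^2_{\Sig_{s-1,\ell}}\lsim\barbeta_{s-1,\ell}\le\barbeta_{t-1,\ell}$. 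This yields a uniform increment bound $|Z_s|\lsim R\rho_\ell\sqrt{\barbeta_{t-1,\ell}}$ and, via $\EE[\eta_s^2\mid\cF_{s-1}]\le R^2$ together with $w_{s,\ell}^2\la x_s,\Delta_s\ra^2\lsim\EE[\ellw_{s,\ell}(\hth_{s-1,\ell})\mid\cF_{s-1}]$, the variance bound $\sum_s\EE[Z_s^2\mid\cF_{s-1}]\lsim\rho_\ell^2\barbeta_{t-1,\ell}\sum_s\EE[\ellw_{s,\ell}(\hth_{s-1,\ell})\mid\cF_{s-1}]$. The increment bound feeds the linear term $2^{k_{t,\ell}}\rho_\ell R\sqrt{2\beta_{0,\ell}}\xi_{t,\ell}$ (noting $2^{k_{t,\ell}}\sqrt{\beta_{0,\ell}}\asymp\sqrt{\barbeta_{t-1,\ell}}$), while replacing the predictable variation by the observed $\sum_s\ellw_{s,\ell}(\hth_{s-1,\ell})$, at the cost of the additive $R^2\ln(2L/\dt)$, produces the square-root term.

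The hard part is that both the range and the variance of $M_t$ scale with the random, data-dependent $\barbeta_{t-1,\ell}$, so no fixed-range concentration applies off the shelf. I would resolve this with an exponential cover: peel the range of $\barbeta_{t-1,\ell}$ into the geometric scales indexed by $k_{t,\ell}=1\vee\lcl\log_2\sqrt{\barbeta_{t-1,\ell}/\beta_{0,\ell}}\rcl$, apply a time-uniform Freedman bound at each scale, and union bound over scales, over $t$, and over the $L$ buckets; this bookkeeping is exactly what the factor $\xi_{t,\ell}$ records, and it is what removes the restrictive fixed-variance assumption \eqref{eq:zhao23-assumption}. Because the increment and variance control presupposes validity at earlier rounds, I would close the argument by induction on $t$: on the single high-probability event furnished by the concentration lemma, validity at every $s<t$ yields the two bounds above, hence $K_{t,\ell}(\th^*)-K_{t,\ell}(\barth_{t,\ell})\le\beta_{t,\ell}$ and therefore $\th^*\in\cC^\full_{t,\ell}$. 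A final union bound over $\ell\in[L]$ gives \cref{thm:confset-fa}.
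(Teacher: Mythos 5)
Your proposal is correct and follows essentially the same route as the paper's proof of \cref{thm:confset-fa} (via \cref{thm:fully-conf-set-each}): a per-bucket reduction with a union bound over $\ell$, the quadratic identities and the regret equality (\cref{lem:regret_equality}) that collapse everything onto the martingale $\sum_s w_{s,\ell}^2\eta_s\la x_s,\hth_{s-1,\ell}-\th^*\ra$, an induction on $t$ to control its range and variance through $\barbeta_{t-1,\ell}$, the negative-regret bound to replace $\sum_s \nu_s^2$ by $\sum_s\ellw_{s,\ell}(\hth_{s-1,\ell})+R^2\ln(2L/\dt)$, and a geometric peeling over the scales $k_{t,\ell}$ with a time-uniform self-normalized concentration. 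The only cosmetic difference is the concentration tool: the paper instantiates the stratified bound via a clipped coin-betting martingale (\cref{lem:cb-bounded-rv}, \cref{cor:23-1118-cb-v3}) rather than a generic time-uniform Freedman/empirical-Bernstein inequality, but the resulting terms in $\beta_{t,\ell}$ are identical.
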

As one will see from the proof, the introduction of the secondary estimator $\barth_{t,\ell}$ is a nonessential part of the theoretical guarantee as it only helps the numerical tightness.
The secondary estimator only appears during the attempt to write down the confidence set as a canonical quadratic form -- such an estimator also appears for the same reason in~\citet{ay12online}.
One can easily see that it is possible to use $\hth_{t,\ell}$ directly with a slightly looser confidence set.

We later show that (\cref{lem:beta_t-true-variance}) 
\begin{align*}
  \beta_{t,\ell} \le \barbeta_{t,\ell} = \tcO(\rho_\ell^2 (R^2 + \sum_{s=1}^t \sig_s^2) \ln^2(t/\dt))~,
\end{align*}
which scales with the conditional variances as desired and matches the order of the confidence set of \citet{zhao23variance}.
Note that due to the particular weights being used, each confidence set $\ell$ is tight in certain regimes only -- this is the reason why we take an intersection over $L$ of them.

\begin{algorithm}[t]
  \caption{LOFAV (Linear Optimism with Full Adaptivity to Variance)}
  \label{alg:fully-adaptive}
  \begin{algorithmic}[1]
    \STATE {\bfseries Input:} norm bound $S$, time horizon $n$, the number of levels $L$
    \FOR{$t=1,2,\ldots,n$ }
    \STATE Observe the arm set $\cX_t$.
    \STATE Pull $x_t 
    = \arg \max_{x\in \cX_t} \max_{\th \in \cC^\full_{t-1}}~ \la x, \th \ra~$
    where 
    \begin{align*}
      \max_{\th \in \cC^\full_{t-1}} \!\! \la x, \th \ra \! = \! \min_{\ell \in [L]} \inp{x}{\barth_{t-1,\ell}} \!+\!  \sqrt{2\beta_{t-1,\ell}} \normz{x}_{\barSig^{-1}_{t-1,\ell}}.
    \end{align*}
    and $\cC^\full_{t-1}$, $\barth_{t-1,\ell}$, $\beta_{t-1,\ell}$, and $\barSig_{t-1,\ell}$ are defined in~\eqref{eq:confset-fa}, \eqref{eq:Sig_t_ell}, \eqref{eq:barth}, and \eqref{eq:beta_t_ell}, respectively.
    \STATE Receive reward $y_t$.
    \ENDFOR
  \end{algorithmic}
\end{algorithm}

\paragraph{Proposed bandit algorithm.}

Equipped with our improved confidence set, we construct an OFUL-style algorithm, which we call \textbf{LOFAV} (Linear Optimism with Full Adaptivity to Variance); see \cref{alg:fully-adaptive}.
The time and space complexity is the same as OFUL, i.e., $O(d^2n\max_{t=1}^n|\cX_t|)$, up to logarithmic factors when setting $L = O(\ln(n/d))$ as suggested in \cref{thm:fully-regret-bound} below.
For efficient implementation, one needs to maintain sufficient statistics for the estimators $\hth_{t,\ell}$ and $\barth_{t,\ell}$, and update the inverse of the covariance matrices $\Sig^{-1}_{t,\ell}$ and $\barSig^{-1}_{t,\ell}$ using the matrix inversion lemma.
Evaluating the loss functions such as $L_{t,\ell}(\hth_{t,\ell})$ can be done incrementally as well by expanding the square and extracting sufficient statistics.

We report our regret analysis result in \cref{thm:fully-regret-bound} below, which attains the optimal variance-adaptive regret bound up to logarithmic factors.
\begin{theorem}\label{thm:fully-regret-bound}
  Under \cref{ass:fully}, \cref{alg:fully-adaptive} with $\dt \in \lparen 0,1/2 \rbrack$ and $\blue{L} = 1 \vee \lcl \fr12 \log_2 (n/d) \rcl$ achieves, with probability at least $1-\dt$, 
  \begin{align*}
    \Reg_n \lsim d\sqrt{\del[2]{R^2 + \sum_{t=1}^n \sig_t^2}\ln^2\del[2]{\fr{1}{\dt} }} + dB,
  \end{align*}
  where we omit $\polylog(d,n,S,R)$ factors.
\end{theorem}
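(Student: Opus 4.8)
The plan is to run the usual optimistic regret decomposition, but to evaluate the per-step confidence width at an adaptively chosen level $\ell$, and then to control the resulting multi-level sum by a peeling argument. It suffices to prove the bound on the event $\mathcal{E}:=\{\forall t\ge1,\ \th^*\in\cC^\full_t\}$, which has probability at least $1-\dt$ by \cref{thm:confset-fa}; off $\mathcal{E}$ the claim is vacuous. On $\mathcal{E}$, optimism holds: since $\th^*\in\cC^\full_{t-1}$ and $x_t$ maximizes the upper confidence bound, the instantaneous regret is at most $\max_{\th\in\cC^\full_{t-1}}\la x_t,\th\ra-\la x_t,\th^*\ra$. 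Because this upper bound equals $\min_{\ell\in[L]}[\inp{x_t}{\barth_{t-1,\ell}}+\sqrt{2\beta_{t-1,\ell}}\normz{x_t}_{\barSig^{-1}_{t-1,\ell}}]$ and each $\cC^\full_{t-1,\ell}$ is an ellipsoid centered at $\barth_{t-1,\ell}$ containing $\th^*$, for \emph{every} $\ell$ we have both $\max_{\th\in\cC^\full_{t-1}}\la x_t,\th\ra\le\inp{x_t}{\barth_{t-1,\ell}}+\sqrt{2\beta_{t-1,\ell}}\normz{x_t}_{\barSig^{-1}_{t-1,\ell}}$ (minimum is below each term) and, by Cauchy--Schwarz on the ellipsoid, $\la x_t,\th^*\ra\ge\inp{x_t}{\barth_{t-1,\ell}}-\sqrt{2\beta_{t-1,\ell}}\normz{x_t}_{\barSig^{-1}_{t-1,\ell}}$. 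Subtracting, minimizing over $\ell$, and using $|\la x,\th^*\ra|\le B$ gives
\[
  \la x_{t,*},\th^*\ra-\la x_t,\th^*\ra
  \le \min\del{ 2B,\ \min_{\ell\in[L]} 2\sqrt{2\beta_{t-1,\ell}}\,\normz{x_t}_{\barSig^{-1}_{t-1,\ell}} }.
\]

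Next I would simplify each term. Since $\barSig_{t-1,\ell}\succeq\Sig_{t-1,\ell}$ we have $\normz{x_t}_{\barSig^{-1}_{t-1,\ell}}\le\normz{x_t}_{\Sig^{-1}_{t-1,\ell}}$, and \cref{lem:beta_t-true-variance} gives $\beta_{t-1,\ell}\lsim\rho_\ell^2\del{R^2+\sum_{s=1}^{n}\sig_s^2}\ln^2(n/\dt)$ on $\mathcal{E}$. Hence the instantaneous regret is $\lsim\min\del{B,\ \sqrt{R^2+\sum_s\sig_s^2}\,\ln(n/\dt)\,\min_\ell\rho_\ell\normz{x_t}_{\Sig^{-1}_{t-1,\ell}}}$, and it remains to show $\sum_{t=1}^n\min_\ell\rho_\ell\normz{x_t}_{\Sig^{-1}_{t-1,\ell}}\lsim Ld$ up to the steps charged to the $B$ cap.

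The heart of the proof is this sum, which I would handle by level selection. I rely on two per-level facts: (i) since the weights enforce $\normz{w_{t,\ell}x_t}_{\Sig^{-1}_{t-1,\ell}}\le\rho_\ell\le1$, the elliptical potential lemma (\citet[Lemma 11]{ay11improved}) yields $\sum_{t=1}^n\normz{w_{t,\ell}x_t}^2_{\Sig^{-1}_{t-1,\ell}}\lsim d\ln(1+n/(d\lam_\ell))=\tcO(d)$; and (ii) whenever $w_{t,\ell}<1$ the width saturates, $\normz{w_{t,\ell}x_t}^2_{\Sig^{-1}_{t-1,\ell}}=\rho_\ell^2$, so the number of saturated steps at level $\ell$ is $\lsim d\ln/\rho_\ell^2$. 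Assign each $t$ to the \emph{largest} level $\ell_t$ with $w_{t,\ell_t}=1$, and split into three cases. If no level is unsaturated ($w_{t,1}<1$), charge the step to the $2B$ cap; these steps are saturated at level $1$, so there are only $\tcO(d)$ of them, contributing $\tcO(dB)$. If $\ell_t=L$, the width is $\le\rho_L$ so the term is $\le\rho_L^2$, and summed over $\le n$ steps this is $\le n\rho_L^2=\tcO(d)$ for $L=1\vee\lcl\fr12\log_2(n/d)\rcl$ since then $\rho_L^2=4^{-L}\lsim d/n$. If $1\le\ell_t=\ell<L$, maximality forces $w_{t,\ell+1}<1$, so such steps are saturated at level $\ell+1$ and number at most $\tcO(d/\rho_{\ell+1}^2)=\tcO(d/\rho_\ell^2)$; using the width bound $\normz{x_t}_{\Sig^{-1}_{t-1,\ell}}\le\rho_\ell$ (unsaturation at $\ell$) together with Cauchy--Schwarz against fact (i),
\[
  \sum_{t:\ell_t=\ell}\rho_\ell\normz{x_t}_{\Sig^{-1}_{t-1,\ell}}
  \le \rho_\ell\sqrt{\abs{\cbr{t:\ell_t=\ell}}}\,\sqrt{\textstyle\sum_{t=1}^n\normz{w_{t,\ell}x_t}^2_{\Sig^{-1}_{t-1,\ell}}}
  \lsim \rho_\ell\cdot\fr{\sqrt{d}}{\rho_\ell}\cdot\sqrt{d} = \tcO(d).
\]
Summing over the $L-1$ interior levels and the catch-all level $L$ gives $\tcO(Ld)$; combining with the $\tcO(dB)$ term and reinstating the prefactor $\sqrt{R^2+\sum_s\sig_s^2}\,\ln(n/\dt)$ and $L=\tcO(\log(n/d))$ yields $\Reg_n\lsim d\sqrt{\del{R^2+\sum_t\sig_t^2}\ln^2(1/\dt)}+dB$ after absorbing the polylogarithmic-in-$(d,n)$ factors into $\tcO$.

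\textbf{Main obstacle.} The delicate point is the level-selection step, because the matrices $\Sig_{t-1,\ell}$ differ across levels, so widths at adjacent levels cannot be compared directly. The fix is to decouple the two uses of the level structure: bound the \emph{number} of steps in each level's set via saturation at level $\ell+1$ (intrinsic to $\Sig_{t-1,\ell+1}$), and bound the \emph{width} via unsaturation at level $\ell$ (intrinsic to $\Sig_{t-1,\ell}$), so no cross-level matrix comparison is ever needed. A secondary point requiring care is verifying that $L=\lcl\fr12\log_2(n/d)\rcl$ exactly balances the catch-all contribution $n\rho_L^2$ against the $\tcO(d)$ budget, and that the self-referential definition of $\beta_{t,\ell}$ in \eqref{eq:beta_t_ell} is legitimately replaced by the closed-form upper bound of \cref{lem:beta_t-true-variance}.
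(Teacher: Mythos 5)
Your proposal is correct and reaches the paper's bound, but the peeling is organized differently from the paper's proof, so a comparison is worth recording. Both arguments share the same skeleton: restrict to the event $\{\forall t,\ \th^*\in\cC^\full_t\}$, use optimism to get $\reg_t\le\min\bigl(2B,\ \min_\ell\sqrt{8\beta_{t-1,\ell}}\normz{x_t}_{\barSig^{-1}_{t-1,\ell}}\bigr)$, replace $\beta_{t-1,\ell}$ by the closed-form bound $\rho_\ell^2\beta^*_{n,0}$ from \cref{lem:beta_t-true-variance}, and finish with the elliptical potential count lemma (\cref{lem:epc}). The difference is the peeling index. The paper buckets time steps by the \emph{realized magnitude of the instantaneous regret}: $\cT_1$ (regret above $\sqrt{8\beta^*_{n,0}}$), $\cT_2$ (below $2^{-2L}\sqrt{8\beta^*_{n,0}}$), and $\cT_{3,\ell}$ (in the $\ell$-th dyadic band), and then \emph{derives} saturation of $w_{t,\ell}$ at level $\ell$ from band membership combined with the width bound, so that EPC applied to the restricted covariance $\Sig_{t',\ell}[\cT_{3,\ell}]$ counts each band. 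You instead bucket by the \emph{saturation structure of the weights themselves} (the largest level with $w_{t,\ell}=1$), which never references the regret values: the count comes from saturation at level $\ell_t+1$ and the width from unsaturation at level $\ell_t$, exactly as you flag in your ``main obstacle'' paragraph. Your version is arguably cleaner in that EPC is applied to the full sequence at each level rather than to a regret-defined subsequence, and it makes explicit that the decomposition is purely geometric; the paper's version has the advantage that each bucket carries an explicit per-step regret cap $2^{-2(\ell-1)}\sqrt{8\beta^*_{n,0}}$, so no Cauchy--Schwarz is needed (your interior case could likewise be closed by multiplying the count $\tcO(d/\rho_{\ell+1}^2)$ by the per-step bound $\rho_\ell^2\sqrt{8\beta^*_{n,0}}$, making the appeal to \cref{lem:epl} unnecessary). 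Both routes give $\tcO(Ld)\sqrt{\beta^*_{n,0}}+\tcO(dB)$ and the same choice of $L$ balances the catch-all term $n4^{-L}$ against $d$.
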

The regret analysis is challenging since, unlike SAVE and other variance-adaptive algorithms such as LinNATS~\cite{xu23noise}, LOFAV does not perform the stratification of the arms that provides an easy control of how many samples fall in each bucket, which is directly related to the regret in a systematic manner.
That is, each bucket $\ell$ is explicitly associated with a particular instantaneous regret bound of order $2^{-\ell}$).
We overcome this difficulty by turning to the peeling-based regret analysis~\cite{kim22improved,he21logarithmic}, which provides a strong control on the instantaneous regret when used with the elliptical potential `count' lemma \cite{kim22improved} that we restate in \cref{lem:epc}.

\paragraph{Practical version.}
The performance of LOFAV can be further improved by adding extra $L$ confidence sets without further splitting the target failure rate $\dt$.
This is because there is an event being assumed in the proof of confidence set $\cC^\full_t$ that automatically implies a semi-adaptive style confidence set (one set for each $\ell\in[L]$).
The reason why it helps is that there are extra logarithmic factors and constants in $\cC^\full_t$, which can only be overcome after observing a large number of samples.
Equipping LOFAV with semi-adaptive style confidence sets helps avoid excessive exploration for the small-sample regime.
We provide a precise description of this modification in \cref{sec:lofav-implementation}.

\paragraph{Anytime version.}
The current version of LOFAV requires knowledge of the time horizon $n$.
This can be easily lifted by changing the union bound over $L$ confidence sets into a union bound over $\NN_+$.
Specifically, one can divide the failure rate $\dt$ into $\{\dt_\ell\}_{\ell=1}^\infty$ such that $\sum_{\ell=1}^\infty \dt_\ell = \dt$; e.g., $\dt_\ell = \fr{\dt}{\ell^2}\cd \fr{6}{\pi^2}$.
The algorithm needs to maintain only $L_t = \Theta(\ln(t/d))$ confidence sets up to time step $t$.

\begin{figure}[t]
    \centering
    \subfigure[Sub-Gaussian noise]{
        \includegraphics[width=0.225\textwidth]{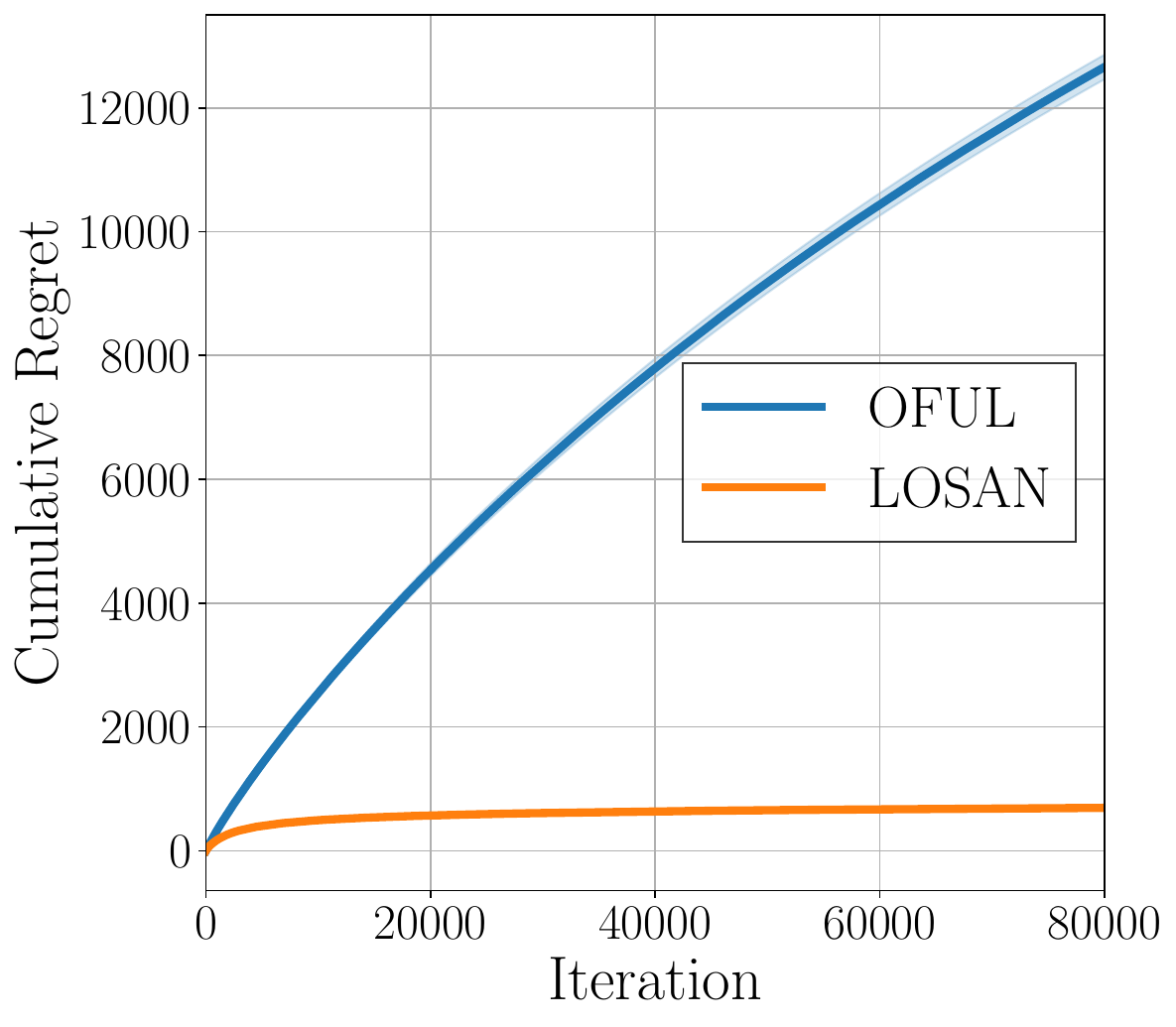}
    }
    \subfigure[Bounded noise]{
        \includegraphics[width=0.225\textwidth]{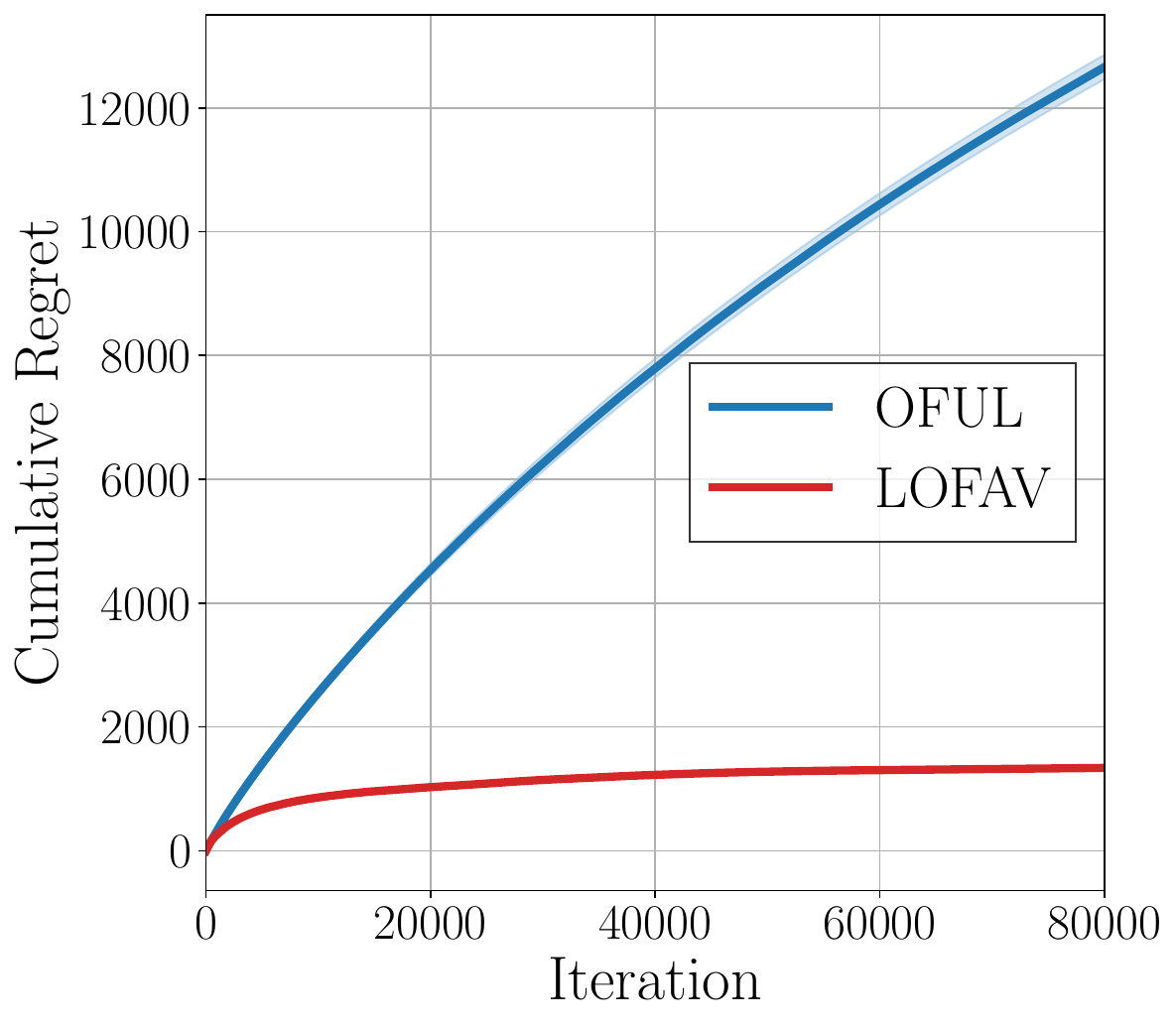}
    }
    \caption{Results of synthetic experiments with LOSAN and LOFAV. To fairly compare our algorithms to OFUL, we perform each experiment over 50 rounds where $S = 1.0$, $d = 32$, $|\cX_t| = 128$, and $\sigma_0 \ \textrm{or} \ R = 1.0$.}
    \label{fig:synthetic}
\end{figure}

\section{Experiments}
\label{sec:expr}

\begin{figure*}[t]
    \centering
    \subfigure[Beale]{
        \includegraphics[width=0.225\textwidth]{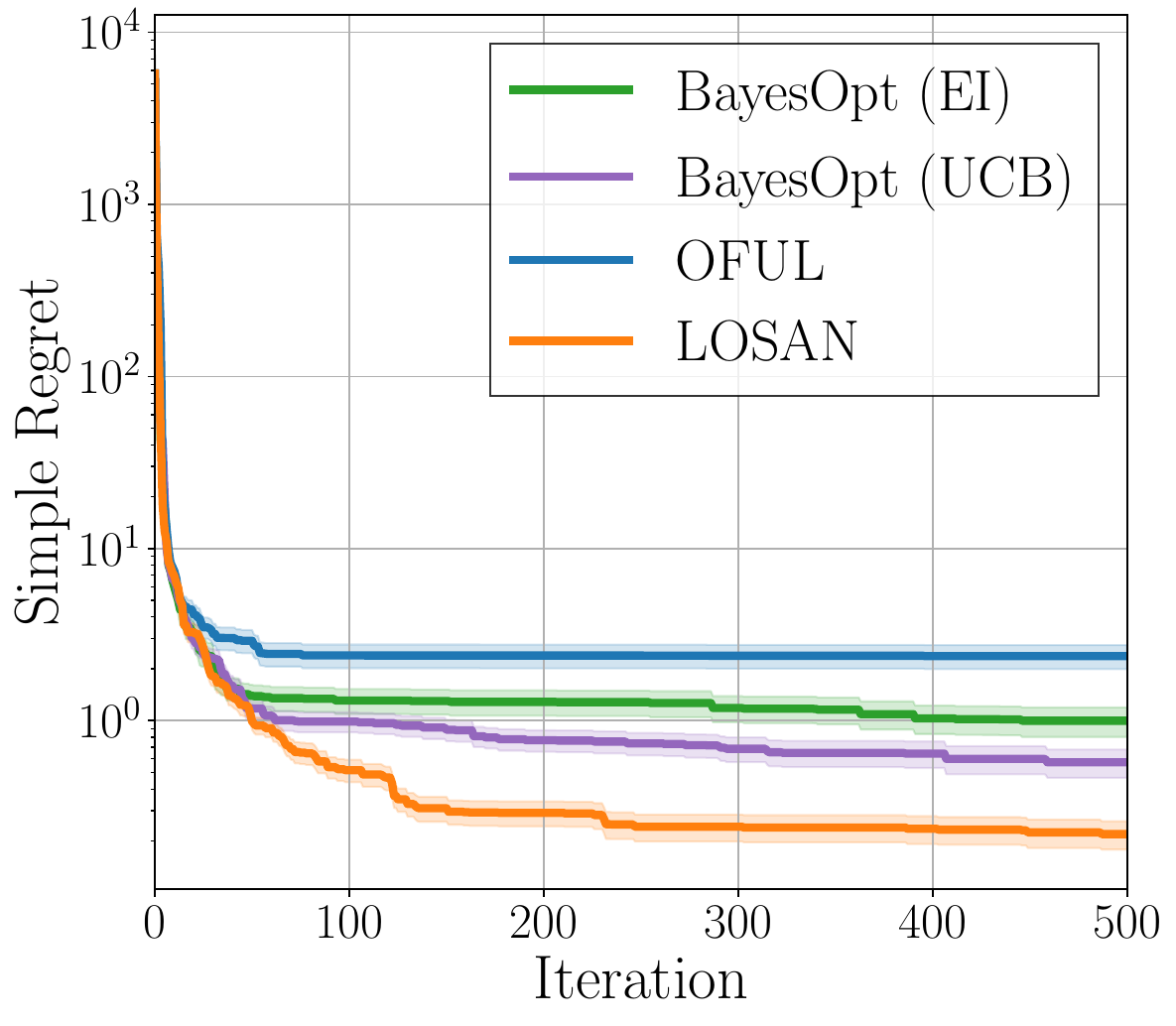}
    }
    \subfigure[Branin]{
        \includegraphics[width=0.225\textwidth]{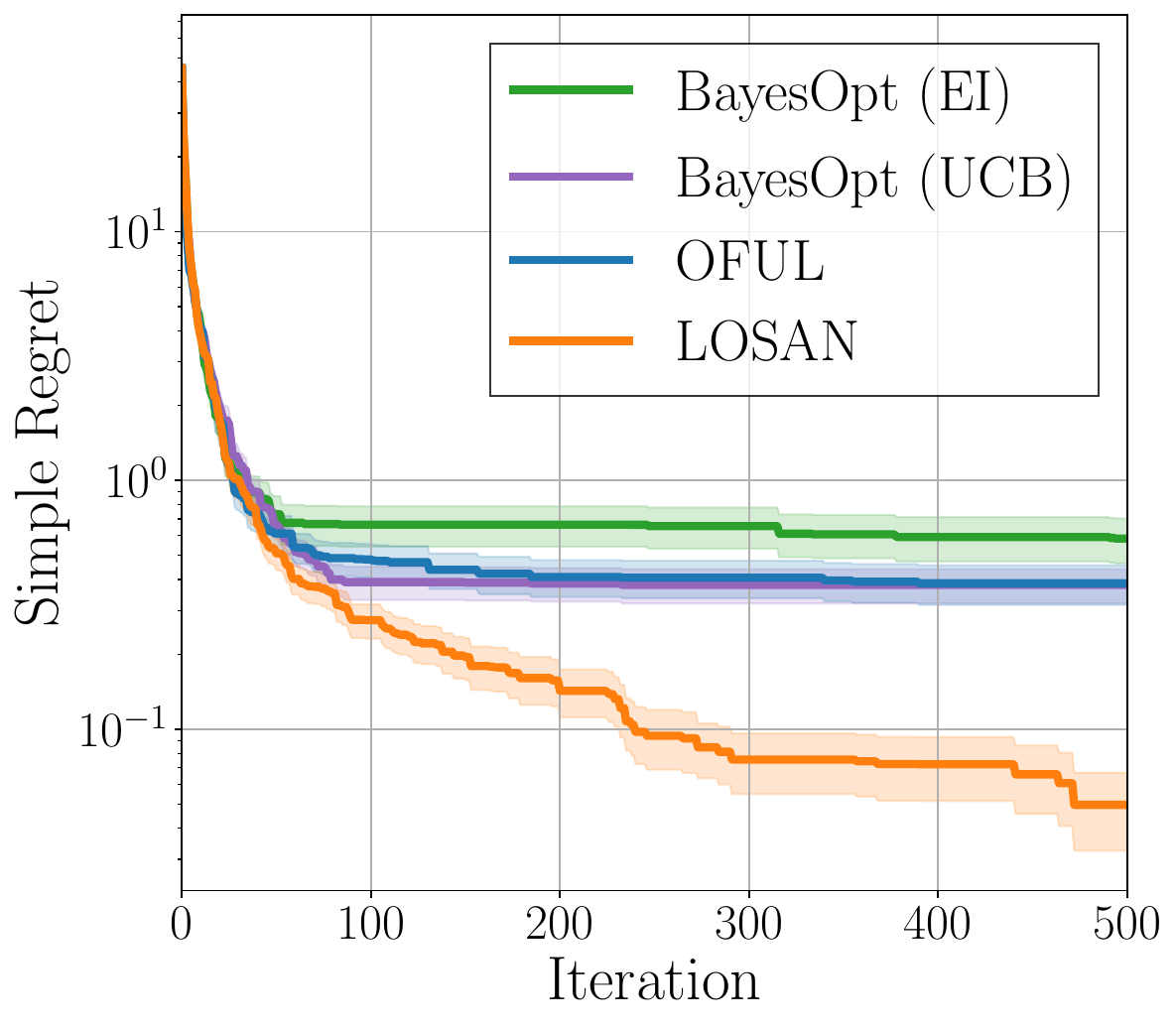}
    }
    \subfigure[Three-Hump Camel]{
        \includegraphics[width=0.225\textwidth]{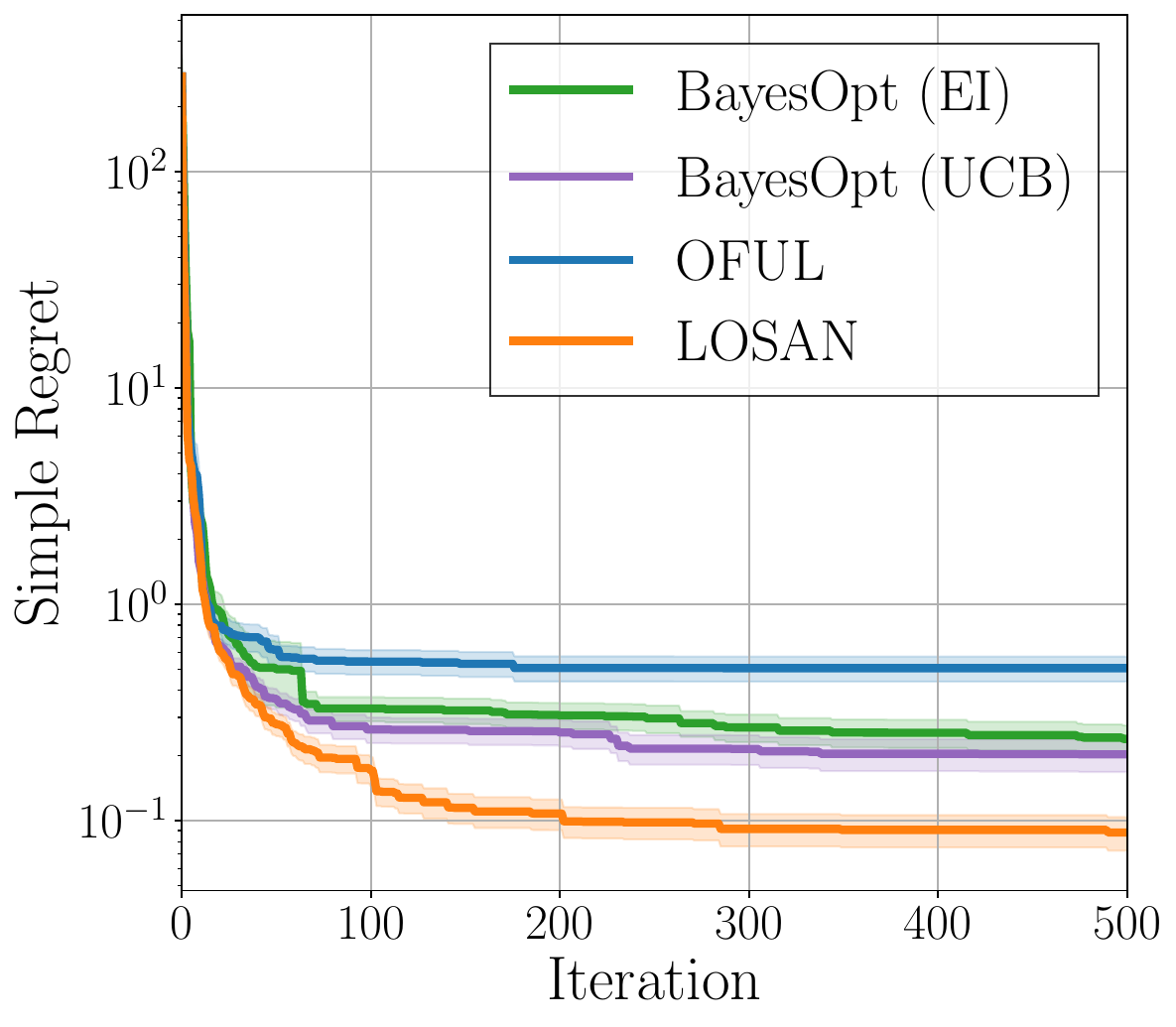}
    }
    \subfigure[Zakharov 4D]{
        \includegraphics[width=0.225\textwidth]{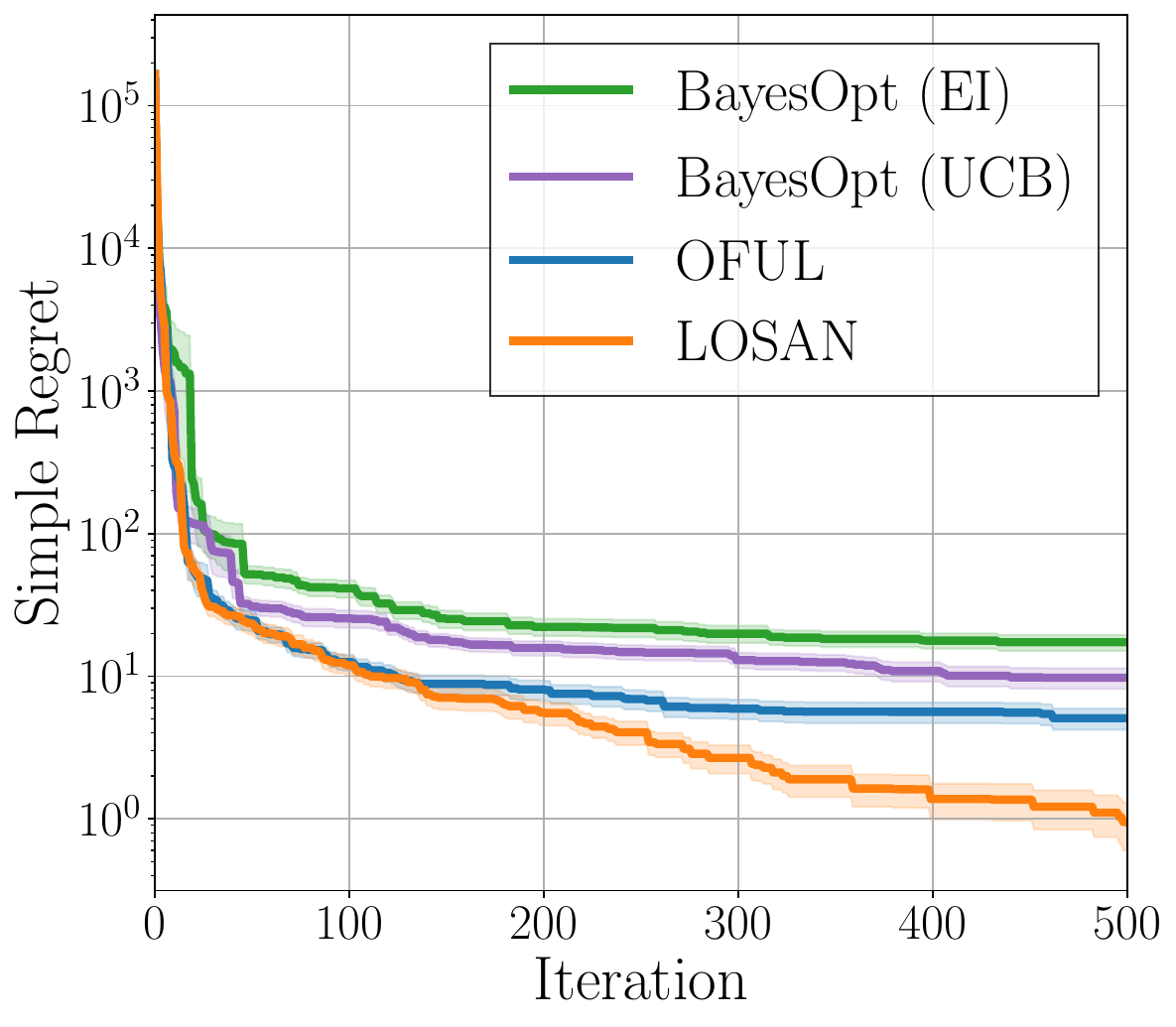}
    }
    \caption{Bayesian optimization results of LOSAN with random Fourier features and sub-Gaussian noises for four benchmark functions. We perform each experiment over 50 rounds where $S= 1.0$, $d = 128$, $|\cX_t| = 512$, and $\sigma_0 = 1.0$.}
    \label{fig:bo_benchmarks_gaussian}
\end{figure*}

\begin{figure*}[t]
    \centering
    \subfigure[Beale]{
        \includegraphics[width=0.225\textwidth]{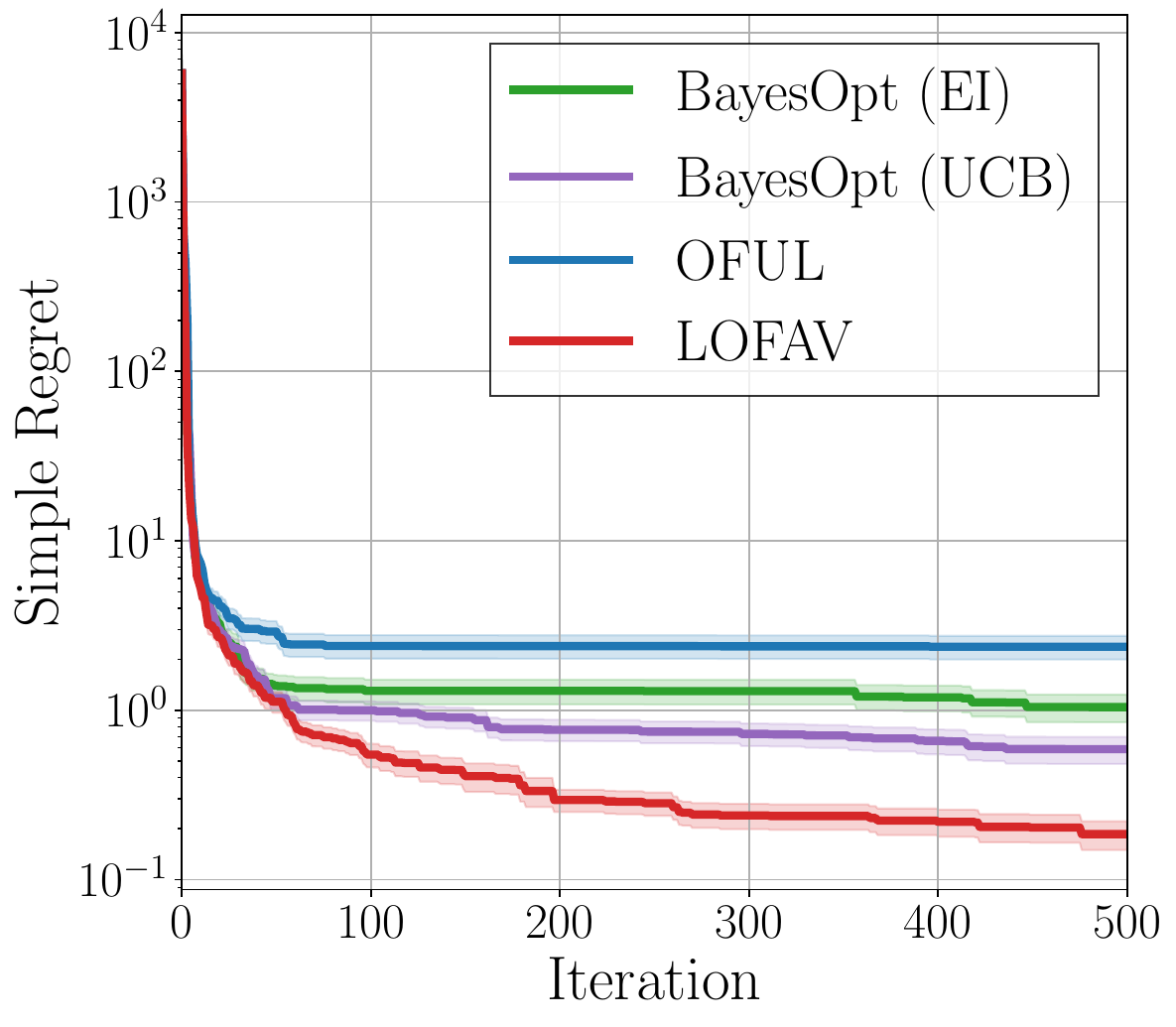}
    }
    \subfigure[Branin]{
        \includegraphics[width=0.225\textwidth]{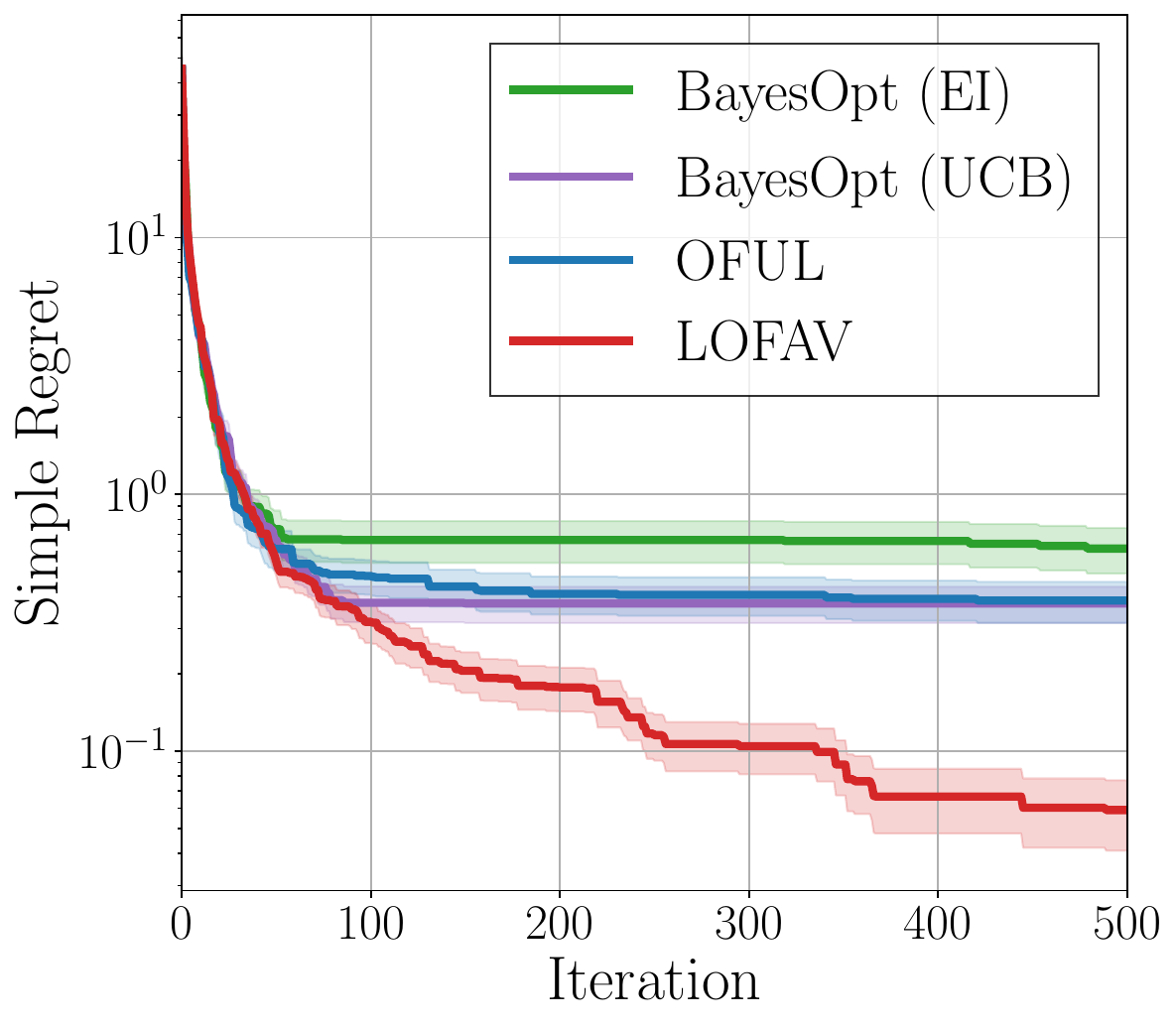}
    }
    \subfigure[Three-Hump Camel]{
        \includegraphics[width=0.225\textwidth]{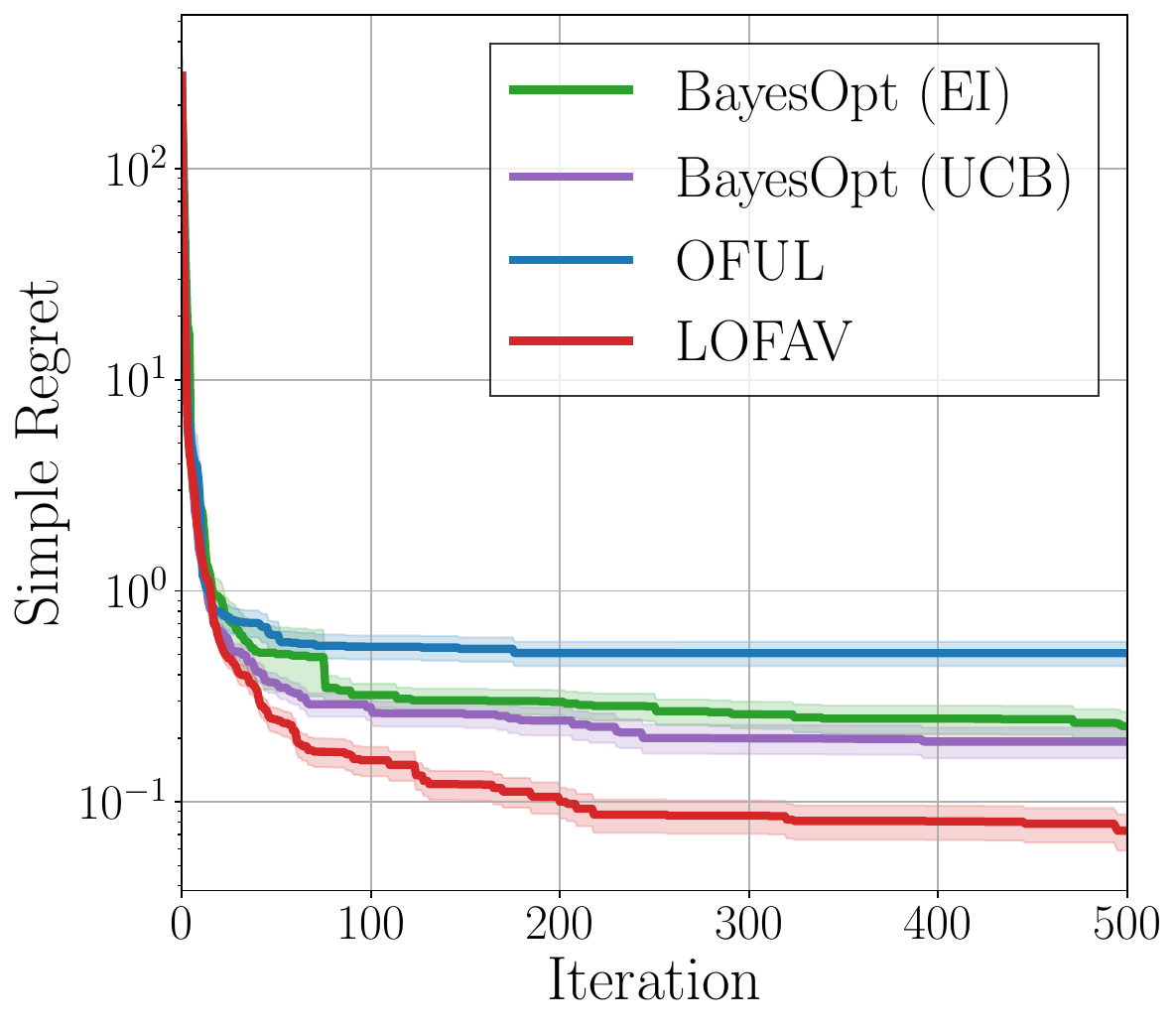}
    }
    \subfigure[Zakharov 4D]{
        \includegraphics[width=0.225\textwidth]{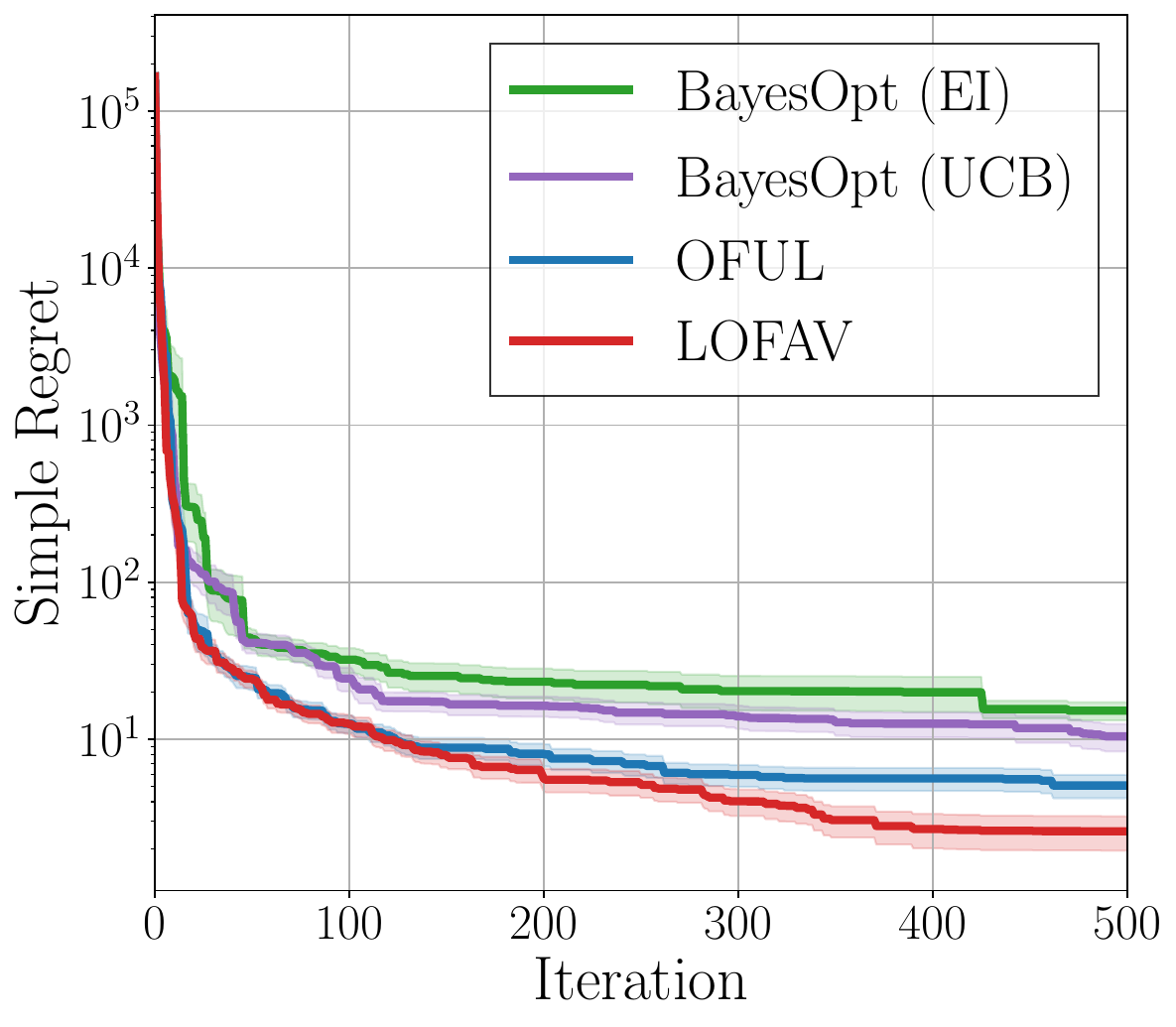}
    }
    \caption{Bayesian optimization results of LOFAV with random Fourier features and bounded noises for four benchmark functions. We perform each experiment over 50 rounds where $S= 1.0$, $d = 128$, $|\cX_t| = 512$, and $R = 1.0$.}
    \label{fig:bo_benchmarks_rademacher}
\end{figure*}

We conduct several experiments with two noise types,
i.e., the sub-Gaussian and bounded noises,
to verify the theoretical analysis of our algorithms.
First,
synthetic experiments are tested to show empirical results of our methods and OFUL in terms of cumulative regrets.
Second, the experiments of Bayesian optimization,
i.e., simple benchmark functions and neural architecture search benchmarks,
are carried out to compare our algorithms to OFUL and simple Bayesian optimization algorithms.
Unless noted otherwise, we perform each experiment with 50 random trials.
Moreover, the sample mean and the standard error of the sample are depicted in~\cref{fig:synthetic,fig:bo_benchmarks_gaussian,fig:bo_benchmarks_rademacher,fig:bo_natsbench}.
For sub-Gaussian and bounded noises, we assume that the true noises are sampled from the distributions with $\sig_*^2=0.01^2$ (Guassian) and $\sig_t^2 = 0.01^2$ ($\eta \sim \mathrm{Uniform}\{-0.01,0.01\}$) respectively. 
For LOFAV, we implement a practical version discussed in~\cref{sec:fully}.

\subsection{Synthetic Experiments}
\label{sec:synthetic}

To generate synthetic experiments, we sample an unknown parameter $\th^*$ and an arm set $\cX_t$ from multivariate normal distributions with zero mean and unit variance and then normalize them in order to locate them on a hypersphere of radius $S$.  
As presented in~\cref{fig:synthetic}, our methods LOSAN and LOFAV exhibit better performance in terms of cumulative regrets than OFUL.
We perform a more comprehensive experiments in Appendix~\ref{sec:expr-more}.

\subsection{Application to Bayesian Optimization}
\label{sec:bayesopt}

We expand the application of our linear bandit models to Bayesian optimization~\citep{GarnettR2023book}.
Along with OFUL, we compare our methods to simple Bayesian optimization algorithms with Gaussian process regression~\citep{RasmussenCE2006book} and either expected improvement acquisition function~\citep{JonesDR1998jgo} or Gaussian process upper confidence bound~\citep{SrinivasN2010icml}.
For the Gaussian process regression, we use a linear kernel where kernel parameters are sought by marginal likelihood maximization.
We select this linear kernel for fair comparison to linear bandit models.
Multi-start L-BFGS-B~\citep{ByrdRH1995siamjsc} is utilized in the process of acquisition function optimization.
To deal with a fixed number of arms with Bayesian optimization,
we modify the standard Bayesian optimization algorithm to choose the nearest arm after determining the next point through Bayesian optimization~\citep{GarridoEC2020neucom}.

For LOSAN, LOFAV, OFUL, and Bayesian optimization approaches,
we make use of random Fourier features~\citep{RahimiA2007neurips}
in order to solve Bayesian optimization problems using linear models or Gaussian process regression with the linear kernel.
Each original point is transformed into a 128-dimensional random feature
following the work by~\citet{RahimiA2007neurips}.
Other configurations for these algorithms are the same as the configurations used in~\cref{sec:synthetic}.
For arm selection,
we uniformly sample a fixed number of arms from a specific search space depending on benchmarks.
Moreover,
instead of cumulative regret,
we use simple regret as the performance measure in the experiments of Bayesian optimization tasks
where the simple regret at time $t$ is set to be the best instantaneous regret until $t$.

\paragraph{Benchmark functions.}

We test four benchmark functions:
Beale, Branin, Three-Hump Camel,
and Zakharov 4D functions.
As illustrated in~\cref{fig:bo_benchmarks_gaussian,fig:bo_benchmarks_rademacher},
our LOSAN and LOFAV are better than OFUL and Bayesian optimization.

\begin{figure*}[t]
    \centering
    \subfigure[CIFAR-100 (Sub-Gaussian)]{
        \includegraphics[width=0.225\textwidth]{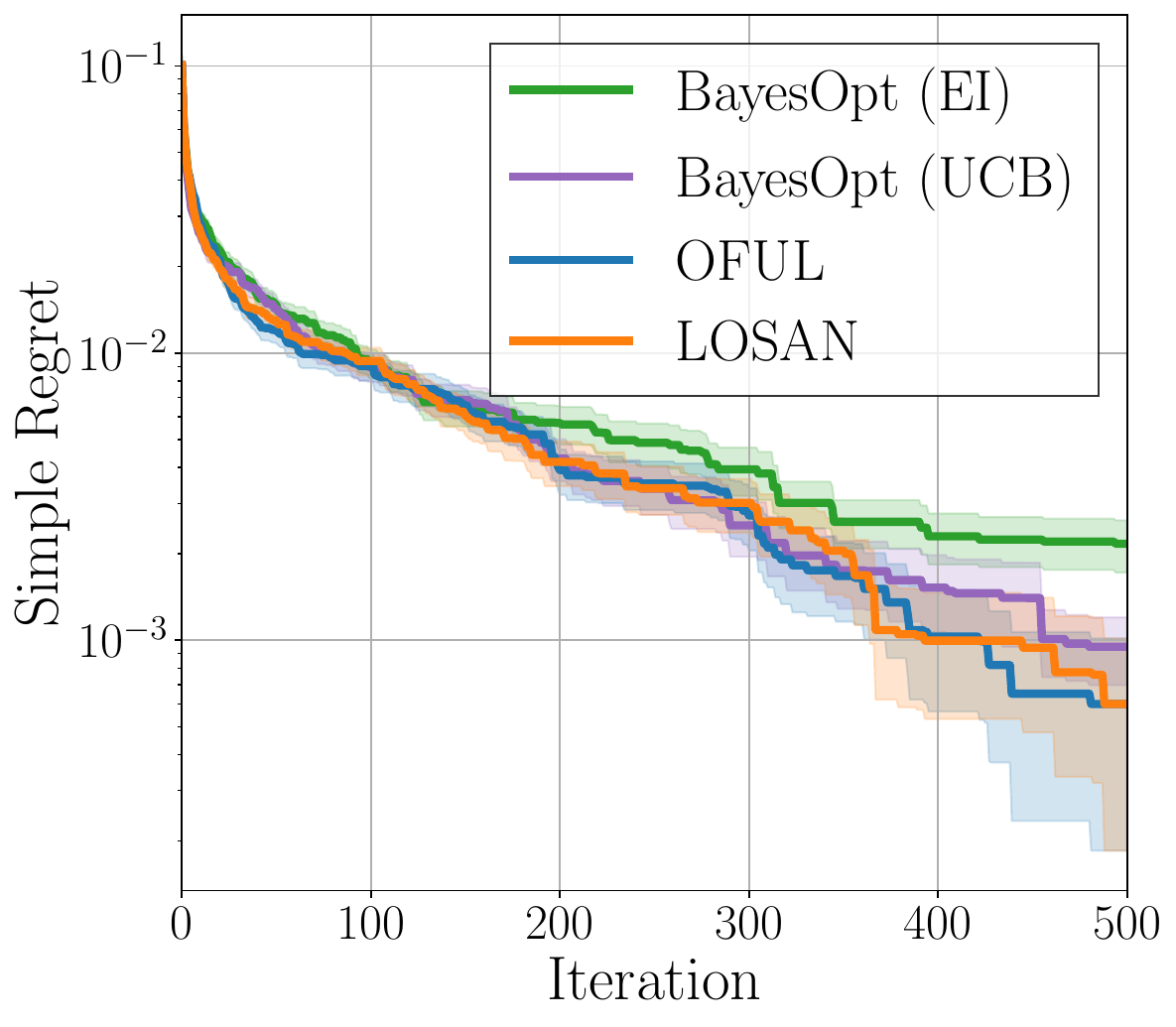}
    }
    \subfigure[CIFAR-100 (Bounded)]{
        \includegraphics[width=0.225\textwidth]{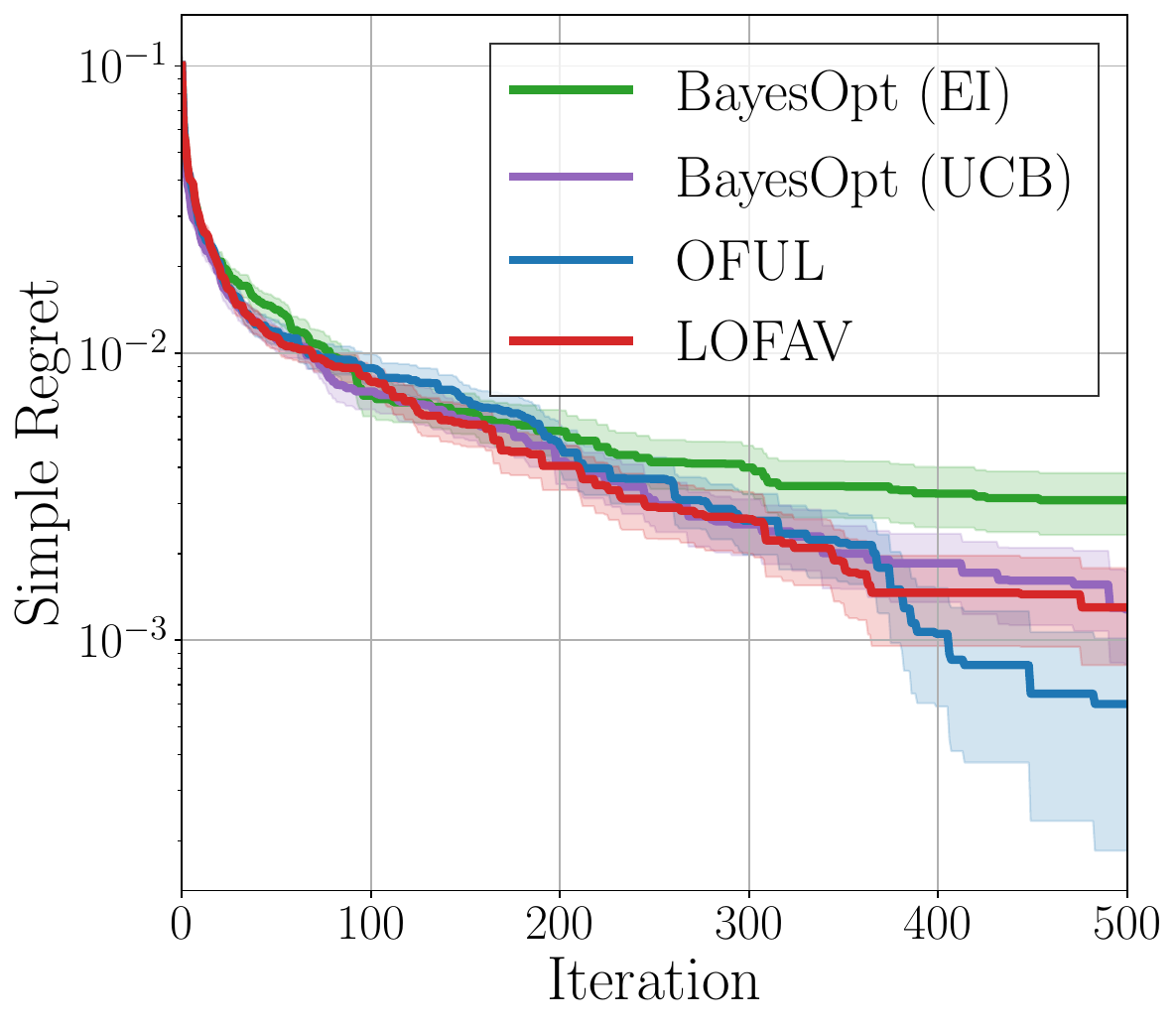}
    }
    \subfigure[ImageNet (Sub-Gaussian)]{
        \includegraphics[width=0.225\textwidth]{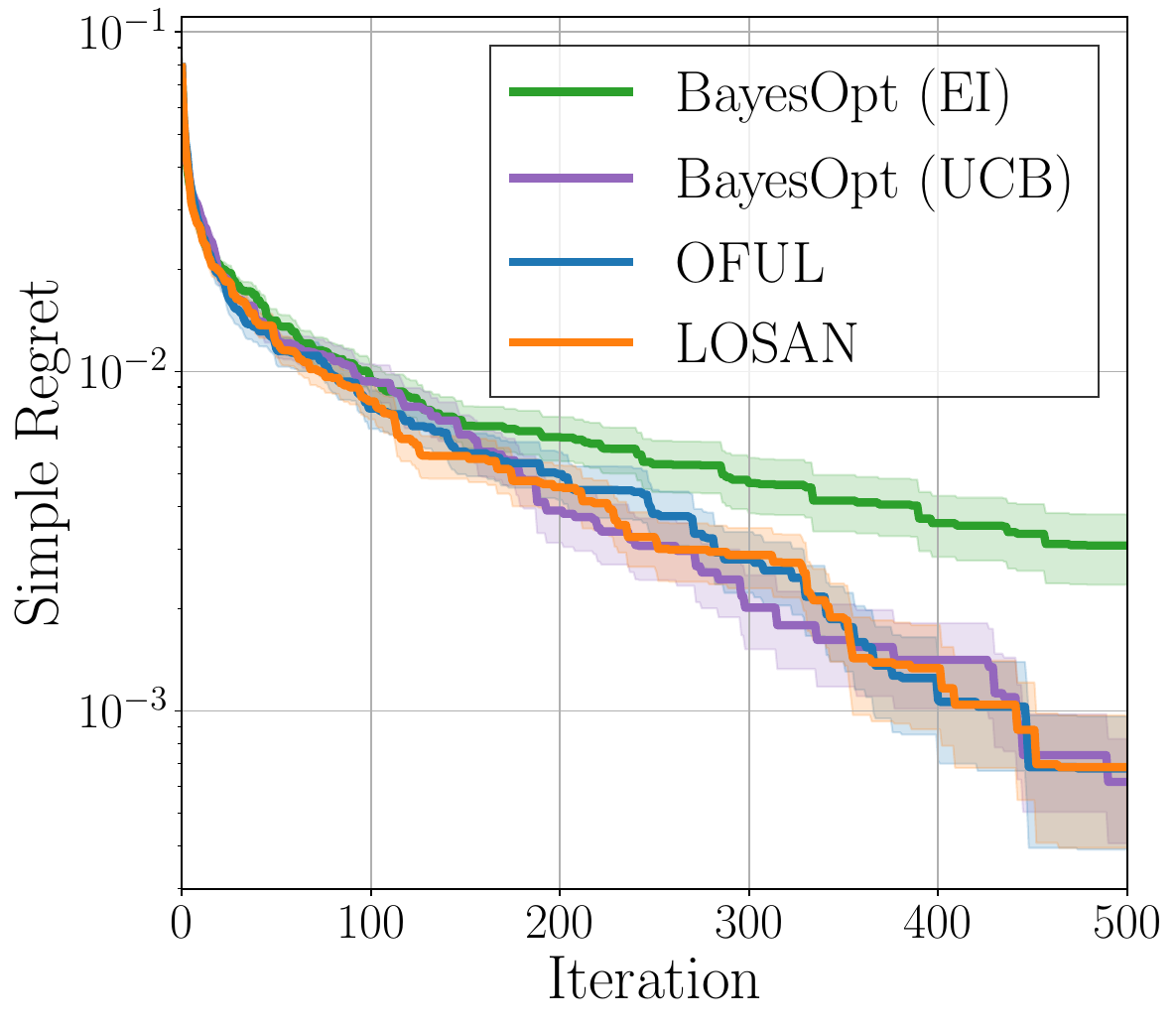}
    }
    \subfigure[ImageNet (Bounded)]{
        \includegraphics[width=0.225\textwidth]{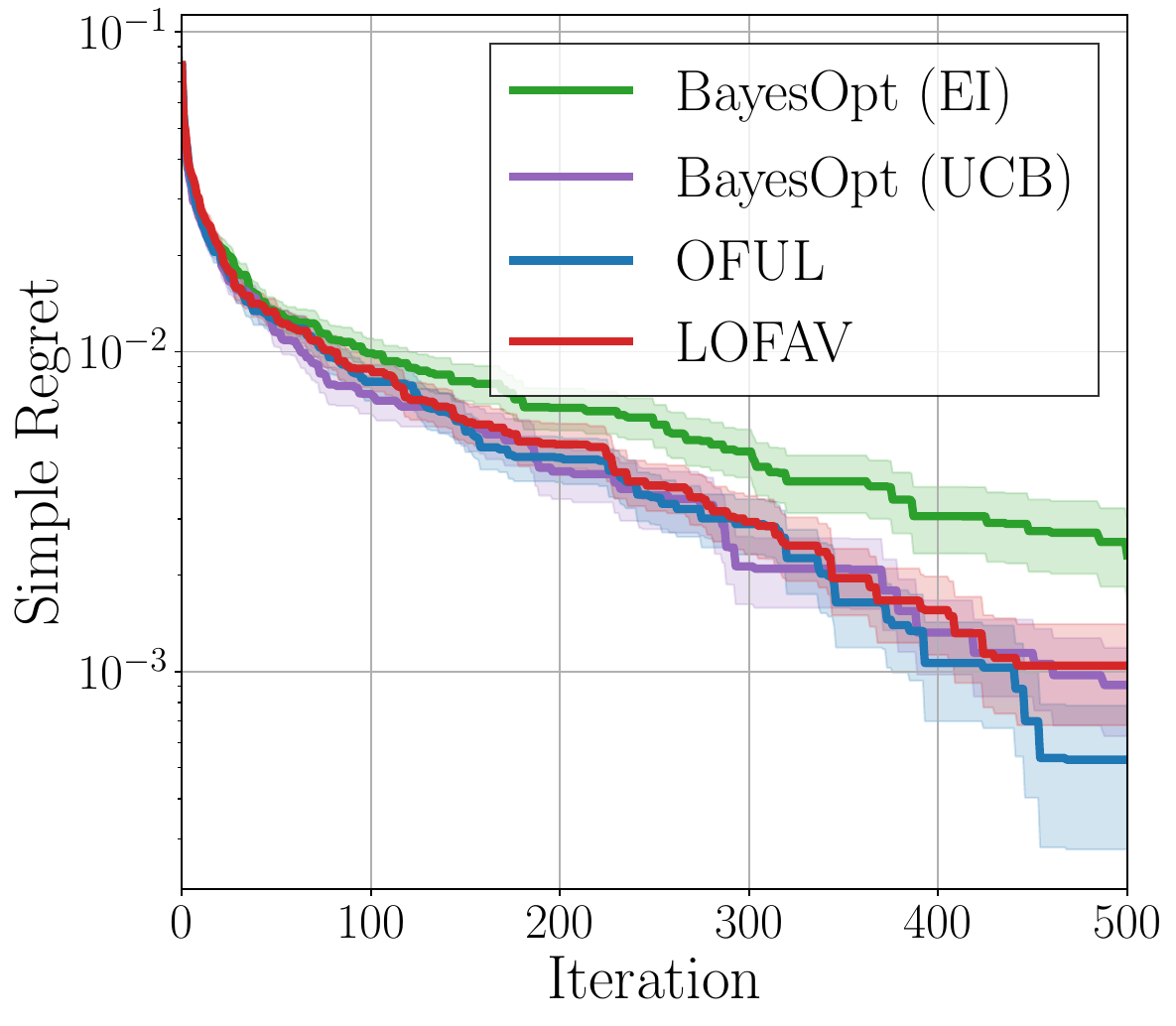}
    }
    \caption{Bayesian optimization results of LOSAN and LOFAV with random Fourier features and sub-Gaussian or bounded noises for NATS-Bench. We perform each experiment over 50 rounds where $S= 1.0$, $d = 128$, $|\cX_t| = 512$, and $\sigma_0 \ \textrm{or} \ R = 1.0$.}
    \label{fig:bo_natsbench}
\end{figure*}

\paragraph{NATS-Bench.}

We utilize NATS-Bench~\citep{DongX2021ieeetpami},
which is a set of benchmarks on neural architecture search~\citep{ZophB2017iclr},
in order to examine our algorithms and baseline methods.
We employ the size search space of NATS-Bench
for CIFAR-100 and ImageNet-16-120.
Since the number of possible architecture candidates, i.e., 32,768, is significantly greater than the number of arms often used in the linear bandits,
we sample 512 arms from the continuous space of the size search space.
\cref{fig:bo_natsbench} demonstrates that there is no single method that dominates the rest.
LOSAN shows comparable results for CIFAR-100 (sub-Gaussian) and ImagenNet (sub-Gaussian).

\section{Related Work}
\label{sec:related}

\paragraph{Heteroscedastic linear bandits.}
For the sub-Gaussian noise, \citet{kirschner18information} first considered the linear bandit problem where the noise at time $t$ is assumed to be $\sig_t^2$-sub-Gaussian where $\sig_t^2$ is known to the learner.
\citet{zhou21nearly} and \citet{zhou22computationally} consider the setup where the noise $\sig_t^2$ is known only after the learner chooses the arm at time $t$.

For the bounded \textit{noise} setup $\eta_t\in[-1,1]$ (often appear as \textit{rewards} being bounded), the seminal work of~\citet{zhang21variance} proposes a linear bandit algorithm called VOFUL whose regret bound was shown to be adaptive to the unknown variances $\{\sig^2_t\}_{t=1}^n$.
They achieved a regret upper bound of order $d^{4.5}\sqrt{1+\sum_{t=1}^n \sig_t^2} + d^5$. 
\citet{kim22improved} then improved the regret bound of VOFUL to $d^{1.5}\sqrt{1+\sum_{t=1}^n \sig_t^2} + d^2$.
However, VOFUL has an exponential time complexity.
\citet{zhao23variance} has made a breakthrough by achieving the optimal variance-adaptive worst-case regret bound of $d\sqrt{1+\sum_{t=1}^n \sig_t^2} + d$ with a computationally efficient algorithm called SAVE.
However, as detailed in \cref{sec:fully}, SAVE makes a limiting assumption on the variance, and the algorithm is not practical. 
Our algorithm LOFAV overcomes these two limitations, which we claim to be the first practical variance-adaptive algorithm.
\citet{xu23noise} further extends SAVE to a Thompson sampling style algorithm, which adds an extra factor of $\sqrt{d}$ in the regret bound -- this is known to be unavoidable for linear Thompson sampling style algorithms \cite{hamidi20worst}.
Unfortunately, they also use the inefficient SupLinRel-style algorithm.
This also means that their time complexity per round w.r.t. the arm set size $|\cX_t|$ scales with $d^2 |\cX_t|$ rather than $d|\cX_t|$ of the standard linear Thompson sampling~\citet{agrawal13thompson}, which is one of the main benefits of linear Thompson sampling.

\paragraph{Improved confidence sets.}
Recently, there have been quite a few studies that improve confidence sets for linear models upon the standard self-normalized confidence set (SNCS)~\cite[Theorem 2]{ay11improved}.
\citet{emmenegger23likelihood} propose a novel confidence set construction based on weighted sequential likelihood ratio tests, which is empirically shown to be tighter than SNCS.
However, the analysis does not show a tighter bound than SNCS due to an extra factor of $S$ (the bound on $\|\th^*\|_2$), which requires further investigation.
\citet{flynn23improved} propose a novel adaptive martingale mixture to construct an improved confidence set that is both numerically and provably tighter than SNCS.
However, the degree of tightness was not precisely quantified as an orderwise improvement.
In stark contrast, our confidence set for LOSAN enjoys an improvement that is precisely quantified in terms of the true noise level $\sig^2_*$ and the specified noise level $\sig^2_0$.
We are not aware of any comparable results in prior work.

\paragraph{Confidence sets via online learning regret bounds.} 
In one way or another, regret bounds of online learning (OL) algorithms play an important role in constructing or analyzing novel confidence bounds or sets.
The seminal work of \citet{rakhlin17on} makes a strong case by showing that the existence of an OL regret bound implies a confidence bound.
For (generalized) linear models, we have found that there have been three types of results that leverage OL regret bounds to construct confidence sets.
The first is to construct a confidence set by running a specific OL algorithm (e.g., online Newton step~\cite{hazan07logarithmic}) and leverage its regret bound to quantify the confidence width, which relies critically on a negative term in the regret bound~\cite{dekel12selective,crammer13multiclass,gentile14onmultilabel,zhang16online}.
The second is the so-called online-to-confidence-set conversion~\cite{ay12online,jun17scalable}, which constructs a confidence set by \textit{regressing} on the prediction made by running an OL algorithm $\cA$ whose confidence width becomes a function of the regret bound of $\cA$.
The advantage of this method is that we are not married to those OL algorithms that have a particular negative term in the regret bound, which provides more flexibility.
The third is the so-called regret-to-confidence-set conversion~\cite{lee24improved}, which constructs a confidence set with the maximum likelihood estimator but characterizes the confidence width with an achievable OL regret bound.
Similar to \citet{rakhlin17on}, this technique only requires the \textit{existence} of a regret bound achieved by an online learner, and thus one can even use the regret bounds of computationally intractable OL algorithms such as those in~\citet{mayo22scale}.
Interestingly, the role of OL regret bounds here is solely an analysis tool, perhaps providing a shortcut to otherwise complicated analysis.
Our confidence set for LOSAN belongs to the first technique above since we leverage the negative term in the regret bound directly.
In this regard, we remark that \citet{emmenegger23likelihood} also use OL regret bounds for the analysis of their confidence set algorithm.

\section{Conclusion}
\label{sec:conclusion}

Our advances in noise-adaptive confidence sets and applications to linear bandits and Bayesian optimization open up numerous exciting future directions.
First, investigating whether similar variance-adaptive worst-case regret bounds are possible in more generic hypothesis classes and various noise models is an open question.
Second, the weighted ridge regression considered in this paper computes the weights in a sequential manner.
It would be interesting to study if there exists a batch counterpart of the weights, which could be more useful for the fixed design case.
Finally, it would be interesting to develop a Thompson sampling version of the variance-adaptive algorithm whose time complexity w.r.t. the arm set size $|\cX_t|$ is $O(d|\cX_t|)$ per iteration.

\section*{Acknowledgements}

This work used, in part, the H2P cluster of the University of Pittsburgh Center for Research Computing, which is supported by National Science Foundation award number OAC-2117681.
Kwang-Sung Jun was supported in part by the National Science Foundation under grant CCF-2327013.

\section*{Impact Statement}

This paper presents work whose goal is to advance the field of machine learning. There are many potential societal consequences of our work, none which we feel must be specifically highlighted here.

\bibliography{library-overleaf,bo}
\bibliographystyle{icml2024}

\newpage
\appendix
\onecolumn

\addcontentsline{toc}{section}{Appendix} 
\part{Appendix} 
\parttoc 

\section{Details for Semi-Adaptation}
\label{sec:details-semi} 

For convenience, we introduce the following model assumption, which helps us use simpler notations for both Section~\ref{sec:details-semi} and~\ref{sec:details-fully}.
\begin{assumption}\label{ass:model}
  Let $\{(z_t \in \RR^d, u_t \in \RR)\}_{t=1}^\infty$ be a sequence of feature vectors and observed labels.
  Let $\cF_t = \sig(z_1,u_1,\ldots,z_t,u_t,z_{t+1})$.
  Assume that $u_t = z_t^\T \th^* + \nu_t$ where $\nu_t \mid \cF_{t-1}$ is $\blue{\sig_*^2}$-sub-Gaussian (i.e., $\forall \lam\in\RR, \EE[\exp(\lam\nu_t) \mid \cF_{t-1}] \le e^{\lam^2\sig^2/2}$). 
  Let $\blue{\sig^2_t} = \EE[\nu_t^2 \mid \cF_{t-1}]$.
  Let $\blue{L_t(\th)} = \sum_{s=1}^t f_s(\th) + \fr{\lam}{2}\normz{\th}^2$ where $\blue{f_s(\th)} = \fr12(z_s^\T \th - u_s)^2$ and define
  \begin{align*}
    \blue{\hth_t} = \arg \min_{\th} L_t(\th) \text{~~ and ~~}  \blue{\Sig_t} = \lam I + \sum_{s=1}^t z_s z_s^\T ~.
  \end{align*}
\end{assumption}
Specifically, the model assumption in \cref{sec:semi} can be reduced to \cref{ass:model} by setting $z_t = w_{t} x_t$, $u_t = w_{t} y_t$, $\nu_t = w_t \eta_t$, $f_s(\th) = \ellw_s(\th)$ with the same sub-Gaussian parameter $\sig^2_*$.
With this, other notations like $\hth_{t-1}$, $\Sig_t = \lam I + \sum_{s=1}^t w_s^2 x_s x_s^\T = \lam I + \sum_{s=1}^t z_s z_s^\T$, and $D_s^2 = \normz{w_t x_t}^2_{\Sig_t^{-1}} = \normz{z_t}^2_{\Sig_t^{-1}}$ remain the same.
Throughout this section, we use this set of notations to avoid clutter.

\subsection{Proof of the Confidence Set (\cref{thm:confset-semi})}

Let $r_s := z_s^\T(\hth_{s-1} - \th^*)$.
The regret equality of FTRL (\cref{lem:regret_equality}) states that
\begin{align}
  \fr12 \normz{\hth_t - \th^*}^2_{\Sig_t} 
  &= \fr \lam 2 \normz{\th^*}^2 + \sum_{s=1}^{t}  f_s(\hth_{s-1}) D_s^2 + \sum_{s=1}^t f_s(\th^*) - f_s(\hth_{s-1}) ~. \notag
\end{align}
The negative regret bound (\cref{lem:negregret}) implies that, with probability at least $1-\dt$,
\begin{align*}
  \forall t\ge1, \sum_{s=1}^t (f_s(\th^*) - f_s(\hth_{s-1})) 
  &\le \sig_*^2 \ln(1/\dt)~.
\end{align*}
Thus, we have
\begin{align*}
  \forall t\ge1, 
  \fr12 \normz{\hth_t - \th^*}^2_{\Sig_t}
  &\le \fr \lam 2 \normz{\th^*}^2 + \sum_{s=1}^{t} f_s(\hth_{s-1}) D_s^2 + \sig_*^2\ln(1/\dt) 
  \\&\le \fr \lam 2 S^2 + \sum_{s=1}^{t} f_s(\hth_{s-1}) D_s^2 + \sig_0^2\ln(1/\dt)~.  \tag{\cref{ass:semi}}
\end{align*}
Therefore, it is easy to see that with probability at least $1-\dt$, 
\begin{align*}
  \forall t\ge1, \th^* \in \cC^{\semi}_t ~.
\end{align*}
This concludes the proof of \cref{thm:confset-semi}.

\subsection{Analysis of the (Normalized) Confidence Width \texorpdfstring{$\sqrt\gam_t$}{}}

\begin{proposition}\label{prop:gam_t-bound}
  With probability at least $1-3\dt$, 
  \begin{align*}
    \gam_t \lsim \lam S^2 + \sig_*^2 d\ln\del{1 + \fr{t}{d\lam}} + \sig_0^2 \ln(1/\dt)~.
  \end{align*}
\end{proposition}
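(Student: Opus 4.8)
The plan is to start from $\gam_t = \fr\lam2 S^2 + A_t + \sig_0^2\ln(1/\dt)$ with $A_t := \sum_{s=1}^t \ellw_s(\hth_{s-1})D_s^2$, and to control $A_t$ by first invoking \cref{lem:empirical-var}, which on a high-probability event already yields $A_t \lsim \sum_{s=1}^t \ellw_s(\th^*)D_s^2 + \lam S^2 + \sig_0^2\ln(1/\dt)$. Working in the reduced notation of \cref{ass:model} (with $z_s = w_s x_s$, $u_s = w_s y_s$, $\nu_s = w_s\eta_s$, and $\ellw_s = f_s$), I would use the identity $\ellw_s(\th^*) = \fr12(z_s^\T\th^* - u_s)^2 = \fr12\nu_s^2$, so that it only remains to bound the random sum $\sum_{s=1}^t \nu_s^2 D_s^2$, where $D_s^2 = \normz{z_s}^2_{\Sig_s^{-1}}$. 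Since $\sig_*^2 \le \sig_0^2$ by \cref{ass:semi}, the target of \cref{prop:gam_t-bound} follows once I show $\sum_{s=1}^t \nu_s^2 D_s^2 \lsim \sig_*^2 d\ln(1+\fr{t}{d\lam}) + \sig_*^2\ln(1/\dt)$.

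\textbf{Deterministic potential and predictable mean.} Two ingredients drive this. First, a deterministic elliptical-potential estimate: the weights enforce $\rho_s := \normz{z_s}^2_{\Sig_{s-1}^{-1}} \le 1$, and Sherman--Morrison gives $D_s^2 = \fr{\rho_s}{1+\rho_s} \le \ln(1+\rho_s) = \ln\det\Sig_s - \ln\det\Sig_{s-1}$; telescoping together with $\tr\Sig_t \le \lam d + t$ and $\det\Sig_0 = \lam^d$ then yields $\sum_{s=1}^t D_s^2 \le \ln\fr{\det\Sig_t}{\det\Sig_0} \le d\ln(1+\fr{t}{d\lam})$, and in particular $D_s^2 \le \fr12$. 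Second, $D_s^2$ is $\cF_{s-1}$-measurable (it depends only on $x_s$ and $\Sig_{s-1}$), while $\EE[\nu_s^2 \mid \cF_{s-1}] = \sig_s^2 \le \sig_*^2$, so the predictable mean of the random sum obeys $\sum_{s=1}^t \sig_s^2 D_s^2 \le \sig_*^2 \sum_{s=1}^t D_s^2 \le \sig_*^2 d\ln(1+\fr{t}{d\lam})$.

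\textbf{Concentration (the main obstacle).} The crux is to upgrade this predictable-mean control into a time-uniform high-probability bound on $\sum_{s=1}^t \nu_s^2 D_s^2$. Because $\nu_s \mid \cF_{s-1}$ is $\sig_*^2$-sub-Gaussian, the centered increments $\nu_s^2 D_s^2 - \sig_s^2 D_s^2$ form a sub-exponential martingale difference sequence with conditional scale $\lsim \sig_*^2 D_s^2 \le \sig_*^2/2$ and conditional variance $\lsim \sig_*^4 D_s^4$ (using $\EE[\nu_s^4 \mid \cF_{s-1}] \lsim \sig_*^4$). I would apply a time-uniform Freedman/Bernstein inequality for sub-exponential martingales to obtain, with probability $\ge 1-\dt$ and simultaneously for all $t$, $\sum_{s=1}^t (\nu_s^2 - \sig_s^2)D_s^2 \lsim \sqrt{\sig_*^4 \del[1]{\sum_{s=1}^t D_s^4}\ln(1/\dt)} + \sig_*^2\ln(1/\dt)$. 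Bounding $\sum_{s=1}^t D_s^4 \le \fr12\sum_{s=1}^t D_s^2 \le \fr12 d\ln(1+\fr{t}{d\lam})$ via $D_s^2 \le \fr12$, and then splitting the square root by AM--GM as $\sqrt{\sig_*^4 d\ln(\cdots)\ln(1/\dt)} \le \fr12 \sig_*^2 d\ln(\cdots) + \fr12\sig_*^2\ln(1/\dt)$, absorbs the fluctuation into exactly the two target terms. The main subtlety here is securing the \emph{anytime} (time-uniform) form of the Bernstein bound, e.g.\ through a stitching/peeling argument over geometrically growing horizons.

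\textbf{Combining.} Finally I would intersect the high-probability events invoked---the event of \cref{lem:empirical-var} (which itself rests on the negative-regret bound \cref{lem:negregret} and the validity of $\cC^\semi_s$ from \cref{thm:confset-semi}) and the sub-exponential concentration event---whose failure probabilities sum to $3\dt$. On their intersection, chaining the displayed inequalities gives $\gam_t \lsim \lam S^2 + \sig_*^2 d\ln(1+\fr{t}{d\lam}) + (\sig_*^2 + \sig_0^2)\ln(1/\dt)$, and $\sig_*^2 \le \sig_0^2$ collapses the last two terms into $\sig_0^2\ln(1/\dt)$, which is precisely \cref{prop:gam_t-bound}.
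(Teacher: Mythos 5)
Your proposal is correct and follows essentially the same route as the paper: both first invoke \cref{lem:empirical-var} to reduce $A_t$ to $\sum_{s=1}^t f_s(\th^*)D_s^2=\sum_{s=1}^t \fr12\nu_s^2 D_s^2$, then apply a Bernstein-type (sub-gamma/sub-exponential) concentration bound to this sum and control the resulting $\sum_s D_s^2$ and $\sum_s D_s^4\le \sum_s D_s^2$ terms via the elliptical potential lemma, accumulating a total failure probability of $3\dt$. The only cosmetic differences are that the paper centers $\nu_s^2$ directly at $\sig_*^2$ and cites the sub-gamma concentration of Boucheron et al., whereas you center at the conditional variance $\sig_s^2$ and are somewhat more careful about the martingale structure and time-uniformity of the Bernstein step.
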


\begin{proof}
  By \cref{lem:empirical-var}, we have, with probability at least $1-2\dt$, 
\begin{align*}
  \sum_{s=1}^{t} f_s(\hth_{s-1})D_s^2 
  &\le \fr32 \lam \normz{\th^*}_2^2 + 3\sum_{s=1}^t f_s(\th^*)D_s^2 +  10\sig_*^2 \ln(1/\dt)~.
\end{align*}
Thus,
\begin{align*}
  \gam_t
  &\lsim \lam S^2 + 3\sum_{s=1}^t f_s(\th^*)D_s^2 + \sig_0^2 \ln(1/\dt)~.  \tag{$\sig_*^2\le\sig_0^2$ }
\end{align*}

It remains to study how $\sum_{s=1}^t f_s(\th^*) D_s^2$ scales.

Since $\nu_s$ is $\sig_*^2$-sub-Gaussian, $\nu_s^2$ is ($v=2\sig_*^4$,$c=2\sig_*^2$)-subgamma~\citet[Section 2.4]{boucheron13concentration}.
Then, we can use the concentration of subgamma random variables~\citet[Theorem 2.3]{boucheron13concentration} to obtain
\begin{align*}
  \sum_{s=1}^t f_s(\th^*) D_s^2
  &= \sum_{s=1}^t \nu_s^2 D_s^2
  \\&\le \sum_{s=1}^t \sig_*^2 D_s^2 + \sqrt{4 \sig_*^4\sum_{s=1}^t D_s^4 \ln(1/\dt)} + 2\sig_*^2 \ln(1/\dt)
  \\&\sr{(a)}{\le} \sig_*^2 \cd d\ln(1 + \fr{t }{d\lam}) + \sig_*^2\sqrt{4 d\ln(1 + \fr{t }{d\lam}) \ln(1/\dt)} + 2\sig_*^2 \ln(1/\dt)
  \\&\le 2\sig_*^2 \cd d\ln(1 + \fr{t }{d\lam}) + 3\sig_*^2 \ln(1/\dt)
\end{align*}
where $(a)$ is by $D_s^4\le D_s^2$ (since $D_s^2\le1$) and the standard elliptical potential lemma (see \cref{lem:epl}).
This concludes the proof.
\end{proof}

\subsection{Regret Analysis (Proof of \cref{thm:semi-regret-bound})}
Define the event
\begin{align*}
  \blue{\cE_1} := \{\forall t\ge1, \th^* \in \cC_t\}~.
\end{align*}
Let $\blue{\reg_t} = \inp{x_t^*}{\th^*} - \inp{x_t}{\th^*}$ where $x_t^* = \arg \max_{x\in\cX_t} \inp{x}{\th^*}$.
The following lemma is standard in linear bandit analysis.
\begin{lemma}\label{lem:24-0112-basic}
  Let $\blue{\tth_t} = \arg \max_{\th\in \cC_t} \la x_t, \th\ra$.
  Under the event $\cE_1$,
  \begin{align*}
  \reg_t &\le \normz{x_t}_{\Sig^{-1}_{t-1}} \sqrt{8\gam_{n}} ~.
\end{align*}
\end{lemma}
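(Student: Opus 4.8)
The plan is to prove the standard optimism inequality by following the usual two-step argument for UCB-type linear bandit algorithms. The key fact I would exploit is the closed-form expression for the UCB already given in the algorithm description, namely that
\begin{align*}
  \max_{\th \in \cC^\semi_{t-1}} \la x, \th \ra = \inp{x}{\hth_{t-1}} + \sqrt{2\gam_{t-1}} \normz{x}_{\Sig^{-1}_{t-1}}~.
\end{align*}
This follows from the fact that $\cC^\semi_{t-1}$ is the ellipsoid $\{\th : \tfrac12\normz{\hth_{t-1}-\th}^2_{\Sig_{t-1}} \le \gam_{t-1}\}$ and the dual norm characterization of the support function of an ellipsoid.

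First I would establish the optimism step. Under the event $\cE_1$ we have $\th^* \in \cC_{t-1}$, so by the definition of $x_t$ and $\tth_t$ as the maximizers of the inner product over the arm set and the confidence set,
\begin{align*}
  \inp{x_t^*}{\th^*} \le \max_{\th\in\cC_{t-1}}\inp{x_t^*}{\th} \le \max_{x\in\cX_t}\max_{\th\in\cC_{t-1}}\inp{x}{\th} = \inp{x_t}{\tth_t}~,
\end{align*}
where the first inequality uses $\th^* \in \cC_{t-1}$ and the last equality is the definition of the pulled arm. This gives the optimism bound $\inp{x_t^*}{\th^*} \le \inp{x_t}{\tth_t}$.

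Second I would bound the instantaneous regret. Writing $\reg_t = \inp{x_t^*}{\th^*} - \inp{x_t}{\th^*}$ and applying optimism,
\begin{align*}
  \reg_t \le \inp{x_t}{\tth_t} - \inp{x_t}{\th^*} = \inp{x_t}{\tth_t - \th^*}~.
\end{align*}
Since both $\tth_t$ and $\th^*$ lie in $\cC_{t-1}$, each is within $\sqrt{2\gam_{t-1}}$ in the $\Sig_{t-1}$-norm of the center $\hth_{t-1}$, so $\normz{\tth_t - \th^*}_{\Sig_{t-1}} \le 2\sqrt{2\gam_{t-1}}$ by the triangle inequality. Applying Cauchy-Schwarz in the $\Sig_{t-1}$ geometry then yields $\reg_t \le \normz{x_t}_{\Sig^{-1}_{t-1}}\normz{\tth_t - \th^*}_{\Sig_{t-1}} \le 2\sqrt{2}\normz{x_t}_{\Sig^{-1}_{t-1}}\sqrt{\gam_{t-1}}$. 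Finally I would replace $\gam_{t-1}$ by $\gam_n$ using monotonicity of $\gam_t$ in $t$ (which holds because each term defining $\gam_t$ is nondecreasing), absorbing the resulting constant to match the stated $\sqrt{8\gam_n}$.

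The argument is almost entirely routine, so there is no serious obstacle; the only points requiring care are verifying the closed-form support function of $\cC_{t-1}$ and confirming monotonicity $\gam_{t-1}\le\gam_n$, which lets us use a single horizon-dependent radius in the per-step bound. The factor of $2$ from the triangle inequality (doubling the ellipsoid radius) together with the $\sqrt{2}$ inside the UCB is exactly what produces the clean constant $\sqrt{8}$.
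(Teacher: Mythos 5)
Your proof is correct and follows essentially the same route as the paper's: optimism via $\th^*\in\cC_{t-1}$ and the definition of $x_t$, then Cauchy--Schwarz in the $\Sig_{t-1}$ geometry, the triangle inequality through the center $\hth_{t-1}$ to get $\normz{\tth_t-\th^*}_{\Sig_{t-1}}\le 2\sqrt{2\gam_{t-1}}=\sqrt{8\gam_{t-1}}$, and finally monotonicity $\gam_{t-1}\le\gam_n$. The constant bookkeeping and the justification of monotonicity (the sum in $\gam_t$ has nonnegative summands) are both right.
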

\begin{proof}
  We have
  \begin{align*}
  \reg_t 
    &\le \la x_t, \tth_t - \th^*\ra  \tag{def'n of $x_t$}
  \\&\le \normz{x_t}_{\Sig^{-1}_{t-1}} \normz{\tth_t - \th^*}_{\Sig_{t-1}}  \tag{Cauchy-Schwarz}
  \\&\le \normz{x_t}_{\Sig^{-1}_{t-1}} (\normz{\tth_t - \hth_{t-1}}_{\Sig_{t-1}}   + \normz{\hth_{t-1} - \th^*}_{\Sig_{t-1}} ) \tag{triangle inequality}
  \\&\le \normz{x_t}_{\Sig^{-1}_{t-1}} \sqrt{8\gam_{t-1}} \tag{by $\cE_1$}
  \\&\le \normz{x_t}_{\Sig^{-1}_{t-1}} \sqrt{8\gam_{n}}~. \tag{monotonicity}
  \end{align*}
\end{proof}
For the regret analysis, using the fact that $\reg_t \le 2B$,
\begin{align*}
  \sum_t^n \reg_t
  \le \sum_{t=1}^n \onec{w_t \neq 1} 2B + \sum_{t=1}^n \onec{w_t = 1} \reg_t~.
\end{align*}
For the first term,
\begin{align*}
  \sum_{t=1}^n \onec{w_t \neq 1} 2B
  &= 2B\sum_{t=1}^n \onec{w_t \neq 1, \normz{w_t x_t}^2_{\Sig^{-1}_{t-1}} = 1 }  \tag{def'n of $w_t$}
  \\&\le 2B\sum_{t=1}^n \onec{\normz{w_t x_t}^2_{\Sig^{-1}_{t-1}} \ge 1 } 
  \\&\le 6B\cd d \ln\del{1 + \fr{2}{\lam}} ~. \tag{by EPC (\cref{lem:epc})}
\end{align*}
For the second term,
\begin{align*}
  \sum_{t=1}^n \onec{w_t = 1} \reg_t
  &\le \sum_{t=1}^n \onec{w_t = 1} \normz{x_t}_{\Sig_{t-1}^{-1}} \sqrt{8\gam_n}
\\&=   \sum_{t=1}^n \onec{w_t = 1} \normz{w_t x_t}_{\Sig_{t-1}^{-1}} \sqrt{8\gam_n}
\\&\le \sqrt{8\gam_n}\sqrt{n \sum_{t=1}^n \normz{w_t x_t}^2_{\Sig_{t-1}^{-1}}}
\\&\le \sqrt{8\gam_n}\sqrt{n 2d\ln(1 + \fr{n}{d\lam})}
\end{align*}
where the last line is due to the elliptical potential lemma~\cite[Lemma 11]{ay11improved} and the fact that $\normz{w_t x_t}^2_{\Sig_{t-1}^{-1}} \le 1$ by the definition of $w_t$.
Applying \cref{prop:gam_t-bound} and noting that the event $\cE_1$ happens with probability at least $1-\dt$ conclude the proof.

\section{Details for Full Adaptation}
\label{sec:details-fully}

\subsection{Proof of \cref{thm:confset-fa}}

The confidence set is an intersection of $\cC_{t,\ell}$'s.
Therefore, it suffices to prove that the confidence set $\cC_{t,\ell}$ contains $\th^*$ with probability at least $1-\fr{\delta}{L}$.
This is a direct consequence of \cref{thm:fully-conf-set-each} below where we set $z_t = w_{t,\ell} x_t$, $u_t = w_{t,\ell} y_t$, and $\lam = \lam_\ell$, and replace $\dt$ with $\dt / (2L)$.
Then, $f_s(\th) = \ellw_{s,\ell}(\th)$ and other symbols becomes just a matter of adding the extra subscript $\ell$ (e.g., $D_s^2 = D_{s,\ell}^2$, $\xi_t = \xi_{t,\ell}$, $k_t = k_{t,\ell}$, $\beta_t = \beta_{t,\ell}$, $\hth_t = \hth_{t,\ell}$, etc.)

\begin{theorem}
  \label{thm:fully-conf-set-each}
  Take \cref{ass:model} with the added assumption of $\nu_t \in[-R,R]$.
  Define
  \begin{align*}
    \blue{K_t(\th)} &= \sum_{s=1}^t f_s(\th) + \sum_{s=1}^t \fr12 (z_s^\T(\th - \hth_{s-1}))^2 + \fr\lam2\normz{\th}^2
    \\ \text{and ~~~} \blue{\barth_{t}} &= \arg \min_{\th} K_t(\th) ~.
  \end{align*}
  Furthermore, define $\blue{\barSig_t} = 2\sum_{s=1}^t z_s z_s^\T + \lam I$.

  Let $\blue{D^2_s} := \normz{z_s}_{\Sig_{s}^{-1}}^2$ and define $\beta_t$ recursively as follows:
  \begin{align*}
    \blue{\beta_{t}} 
    &= L_{t}(\hth_{t}) - K_{t}(\barth_{t}) +\fr{\lam}{2}S^2 + \sum_{s=1}^{t} f_s(\hth_{s-1}) D_s^2
      + \rho\sqrt{\barbeta_{t-1}}\sqrt{ 8 \del[2]{\sum_{s=1}^t f_s(\hth_{s-1}) +  R^2 \ln(1/\dt)} \xi_t} + 2^{k_t} \rho R\sqrt{2\beta_0}\xi_t  
  \end{align*}
  where $\blue{\bar\beta_{t-1}} = \max_{s=1}^{t-1} \beta_{s}$, $\blue{\beta_0} = \fr{\lam}{2}S^2$, $\blue{\xi_{t}} := \ln( \sqrt{\pi(t+1)} \cd \fr{ 3.4\cd k_{t} \ln^2(1+k_{t}) }{\dt})$ with $\blue{k_t} = 1 \vee \lcl \log_2(\sqrt{\barbeta_{t-1}/\beta_0}) \rcl$.
  Define the confidence set 
  \begin{align*}
    \blue{\cC_t} = \cbr{\th\in\RR^d: \fr12 \normz{\th - \barth_t}_{\barSig_{t}}^2 \le \beta_{t}}~.
  \end{align*}
  Assume $\forall s\ge1, \blue{D^2_s} \le \blue{\rho^2}$ for some $\rho^2 > 0$.
  Then,
  \begin{align*}
    \PP(\forall t\ge1, \th^* \in \cC_t) \ge 1- 2\dt
  \end{align*}
\end{theorem}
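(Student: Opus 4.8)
The plan is to reduce the claim to a single time-uniform martingale tail bound by exploiting that both $L_t$ and $K_t$ are quadratics. First I would observe that $K_t$ has Hessian $\barSig_t$, so it admits the canonical form $K_t(\th) = K_t(\barth_t) + \frac12\normz{\th-\barth_t}_{\barSig_t}^2$; evaluating at $\th=\th^*$ gives the exact identity $\frac12\normz{\th^*-\barth_t}_{\barSig_t}^2 = K_t(\th^*) - K_t(\barth_t)$. Thus $\th^*\in\cC_t$ is equivalent to $K_t(\th^*)-K_t(\barth_t)\le\beta_t$, and the secondary estimator $\barth_t$ enters only through the constant $K_t(\barth_t)$, which cancels against the matching $-K_t(\barth_t)$ term in $\beta_t$.

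Next I would expand $K_t(\th^*)$. Writing $r_s := z_s^\T(\hth_{s-1}-\th^*)$ and $\nu_s = u_s - z_s^\T\th^*$, a direct expansion of the squared losses gives $f_s(\hth_{s-1}) - f_s(\th^*) = \frac12 r_s^2 - r_s\nu_s$, so that $\sum_s \frac12(z_s^\T(\th^*-\hth_{s-1}))^2 = \sum_s f_s(\hth_{s-1}) - \sum_s f_s(\th^*) + \sum_s r_s\nu_s$, and hence $K_t(\th^*) = \sum_s f_s(\hth_{s-1}) + \sum_s r_s\nu_s + \frac\lambda2\normz{\th^*}^2$. The crucial simplification is the FTRL regret equality (the same \cref{lem:regret_equality} used for \cref{thm:confset-semi}), which—after substituting $\frac12\normz{\hth_t-\th^*}_{\Sig_t}^2 = L_t(\th^*) - L_t(\hth_t)$—collapses to the purely algebraic statement $\sum_s f_s(\hth_{s-1}) = L_t(\hth_t) + \sum_s f_s(\hth_{s-1}) D_s^2$. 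Substituting and using $\normz{\th^*}\le S$ reduces the whole theorem to showing that, uniformly in $t$ and with probability at least $1-2\delta$, $\sum_{s=1}^t r_s\nu_s$ is bounded by the last two (martingale) terms of $\beta_t$.

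The core of the argument is therefore a time-uniform, variance-adaptive tail bound for $M_t = \sum_{s=1}^t r_s\nu_s$, which is a martingale since $r_s$ is $\cF_{s-1}$-measurable and $\EE[\nu_s\mid\cF_{s-1}]=0$. I would apply a Freedman/Bernstein-type inequality, for which I need (i) an increment bound $|r_s\nu_s|\le R|r_s|$ and (ii) control of the predictable variation $\sum_s r_s^2\sigma_s^2$. Both follow from bounding $|r_s|$: Cauchy--Schwarz gives $|r_s|\le\normz{z_s}_{\Sig_{s-1}^{-1}}\normz{\hth_{s-1}-\th^*}_{\Sig_{s-1}}$, where $\normz{z_s}_{\Sig_{s-1}^{-1}}\lsim\rho$ follows from $D_s^2\le\rho^2$ via the Sherman--Morrison identity, and $\normz{\hth_{s-1}-\th^*}_{\Sig_{s-1}}\lsim\sqrt{\barbeta_{s-1}}$ on the good event (itself obtained from the regret equality together with the negative-regret bound of \cref{lem:negregret}, which applies since bounded noise is $R^2$-sub-Gaussian). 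This yields an increment of order $\rho R\sqrt{\barbeta_{s-1}}$—matching $2^{k_t}\rho R\sqrt{2\beta_0}\xi_t$ since $2^{k_t}\sqrt{\beta_0}\asymp\sqrt{\barbeta_{t-1}}$—and a predictable variation of order $\rho^2\barbeta_{t-1}\del{\sum_s f_s(\hth_{s-1}) + R^2\ln(1/\delta)}$, matching the square-root term.

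Because the increment and variance bounds depend on the random, growing radius $\barbeta_{s-1}$, the tail bound cannot be invoked at a single scale. The final step is a peeling argument: partition by dyadic shells $2^{k-1}\beta_0\le\barbeta_{t-1}<2^k\beta_0$ and union-bound a fixed-scale Freedman inequality over $k\in[1..k_t]$ (producing the $2^{k_t}$ factor and the $k_t\ln^2(1+k_t)$ cost inside $\xi_t$), combined with a time-uniform stitching over $t$ (the $\sqrt{\pi(t+1)}$ factor in $\xi_t$). I expect this step to be the main obstacle, chiefly because of the circular dependence between the bound $|r_s|\lsim\rho\sqrt{\barbeta_{s-1}}$ and the very confidence-set validity being proved; I would break it with a stopping-time argument, running the martingale bound up to the first time a confidence set fails and showing that the bound itself precludes such a failure. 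The two separate high-probability events (the martingale tail and the negative-regret bound) account for the $1-2\delta$ guarantee.
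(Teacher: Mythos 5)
Your proposal is correct and follows essentially the same route as the paper's proof: the same reduction of $\fr12\normz{\th^*-\barth_t}^2_{\barSig_t}$ to $K_t(\th^*)-K_t(\barth_t)$ via the quadratic canonical form and the FTRL regret equality, the same identification of $\sum_{s}r_s\nu_s$ as the sole stochastic term to be absorbed into the last two terms of $\beta_t$, and the same dyadic peeling over the random radius $\barbeta_{t-1}$ combined with a time-uniform Bernstein-type bound and the negative-regret bound (accounting for the two failure probabilities). The only, interchangeable, differences are that the paper resolves the circularity with an indicator-based induction (working with $X_s = I_{s-1}r_s\nu_s$) rather than a stopping time, and invokes a coin-betting/empirical-Bernstein inequality self-normalized by $\sum_s r_s^2\nu_s^2$ rather than Freedman's inequality with the predictable variation.
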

\begin{proof}
  This proof is inspired by \citet{zhao23revisiting}, but details differ since we leverage the regret equality (\cref{lem:regret_equality}), which helps shorten the proof and provides a numerically tight derivation.
  
To describe the plan, we will show $\blue{\barM_t^2} := \fr12 \normz{\th^* - \barth_t}_{\barSig_{t}}^2 \le \beta_t, \forall t \ge 0$ under an event $\cE_1$ (defined below) that holds with probability at least $1-\dt$.
For this, we use induction.
Define $\blue{I_{s}} = \onec{\barM_{s}^2 \le \beta_{s}}$ and let $\blue{\barI_s} = 1-I_s$.

First, we show the base case of $I_0 = 1$.
This is trivial since $\barM_0^2 = \fr12 \|\th^*\|^2_{\lam I} \le \fr\lam2 S^2 = \beta_0$.
It remains to prove that, assuming  $I_0 = \cdots=I_{t-1}=1$, we have $I_t = 1$.

So, let us assume the inductive hypothesis $I_0 = \cdots=I_{t-1}=1$.
Define $\blue{r_t} = z_t^\T(\hth_{t-1} - \th^*)$, which implies that $f_s(\hth_{s-1}) - f_s(\th^*) = \fr12 r_s^2 - r_s\nu_s $.
Recall the regret equality (\cref{lem:regret_equality}):
\begin{align*}
  M_t^2 := \fr12\normz{\th^* - \hth_t}^2_{\Sig_t}  
  &= \fr{\lam}{2}\|\th^*\|^2 + \sum_{s=1}^t f_s(\hth_{s-1}) D_s^2 - \sum_{s=1}^t \del{f_s(\hth_{s-1}) - f_s(\th^*)} 
  \\&= \fr{\lam}{2}\|\th^*\|^2 + \sum_{s=1}^t f_s(\hth_{s-1}) D_s^2 + \sum_{s=1}^t(-\fr12 r_s^2) + \sum_{s=1}^t r_s \nu_s
\\ \implies
  M_t^2 + \sum_{s=1}^t \fr12 r_s^2 &\le \fr{\lam}{2}\|\th^*\|^2 + \sum_{s=1}^t f_s(\hth_{s-1}) D_s^2 + \sum_{s=1}^t r_s \nu_s~.
\end{align*}
Note that 
\begin{align*}
  \sum_{s=1}^{t}  r_s \nu_s
  = \sum_{s=1}^{t} I_{s-1} r_s \nu_s + \sum_{s=1}^t \bar I_{s-1}  r_s \nu_s
  = \sum_{s=1}^{t} I_{s-1} r_s \nu_s \tag{inductive hypothesis}
\end{align*}  

We assume the event in Corollary~\ref{cor:23-1118-cb-v3} with
\begin{itemize}
  \item $\blue{b_k} = 2^{k} \rho R\sqrt{2\beta_0}.$ 
  \item $\blue{X_s} = I_{s-1} r_s \nu_s$~,
\end{itemize}
which holds with probability at least $1-\dt$.
Recall that $k_t = 1 \vee \lcl \log_2(\sqrt{\barbeta_{t-1}/\beta_0}) \rcl$, which means that $k_t = \min\{k\in \NN_+: b_k \ge \rho R\sqrt{2\barbeta_{t-1}}\}$.
Then, since 
\begin{align*}
  |X_s| 
   \le R I_{s-1}|r_s|
  &\le R\normz{z_s}_{\barSig_{s-1}^{-1}} \normz{\barth_{s-1} - \th^*}_{\barSig_{s-1}} \tag{Cauchy-Schwarz}
\\&\le R\normz{z_s}_{\barSig_{s-1}^{-1}} \sqrt{2\beta_{s-1}} \tag{$I_{s-1} = 1$ }
\\&\le R\rho \sqrt{2\beta_{s-1}} \tag{assumption in the theorem}
\\&\le R\rho \sqrt{2\barbeta_{t-1}}
  ~\le b_{k_t}~.
\end{align*}
Thus, we have $\clip{X_s}_{b_k} = X_s, \forall s\in [t]$, so we have
\begin{align}
  \forall t \ge 1, \sum_{s=1}^{t} I_{s-1} r_s \nu_s 
  &\le \sqrt{2\sum_{s=1}^t I_{s-1} r_s^2 \nu_s^2 \xi_t} + 2^{k_t} \rho R\sqrt{2\beta_0} \xi_t~.
\end{align}

Note that, with a similar derivation as the bound on $|X_s|$, we have
\begin{align*}
  I_{s-1} r_s^2 
  &\le 2\rho^2 \barbeta_{t-1}~.
\end{align*}
Then, using \cref{lem:negregret}, with probability at least $1-\dt$, we have
\begin{align*}
\forall t\ge1,   \sum_{s=1}^t \nu_s^2 
  = \sum_{s=1}^t 2 f_s(\th^*) 
  = 2\sum_{s=1}^t f_s(\hth_{s-1}) + 2\sum_{s=1}^{t} f_s(\th^*) - f_s(\hth_{s-1}) 
  \le 2\sum_{s=1}^t f_s(\hth_{s-1}) +  2 R^2 \ln(1/\dt)  ~. 
\end{align*}
Thus,
\begin{align*}
  M_t^2 + \sum_{s=1}^t \fr12 r_s^2 
  \le \fr{\lam}{2}\normz{\th^*}^2 + \sum_{s=1}^{t} f_s(\hth_{s-1}) D_s^2
  + \sqrt{ 8\rho^2  \barbeta_{t-1}\del[3]{\sum_{s=1}^t f_s(\hth_{s-1}) +  R^2 \ln(1/\dt)}  \xi_t} 
  + 2^{k_t} \rho R\sqrt{2\beta_0} \xi_t =: \blue{\beta_t}~.
\end{align*}
Recall that $L_t(\th) = \sum_{s=1}^{t} f_s(\th) + \fr\lam2\normz{\th}^2$.
Since $M_t^2 = L_t(\th^*) - L_t(\hth_t)$ (verify this with Taylor's theorem), the LHS above can be rewritten as
\def\barSig{{\wbar{\Sig}}}
\begin{align*}
  &\sum_{s=1}^t f_s(\th^*) + \fr\lam2\normz{\th^*}^2 - L_t(\hth_t) + \sum_{s=1}^t \fr12(z_s^\T(\hth_{s-1} - \th^*) )^2
  \\&= K_t(\th^*) - L_t(\hth_t)
  \\&= K_t(\th^*) - K_t(\barth_{t}) + K_t(\barth_{t}) - L_t(\hth_t)
  \\&= \fr12 \normz{\th^* - \barth_t}_{\barSig_t}^2 + K_t(\barth_t) - L_t(\hth_t)
\end{align*}
Rearranging the terms and using the bound $\normz{\th^*}^2 \le S^2$ prove $I_t = 1$, which completes the inductive proof. 
\end{proof}
Next, we find a nondecreasing upper bound on $\beta_t$ that directly depends on the true variances $\{\sig_s^2\}_{s=1}^{t}$.
\begin{lemma} \label{lem:beta_t-true-variance} 
  Take the assumptions of \cref{thm:fully-conf-set-each}.
  Suppose $\lam = \fr{R^2}{S^2}\rho^2 \lam_0$ for some absolute constant $\lam_0 > 0$.
  Then,  with probability at least $1-O(\dt)$, 
\begin{align*}
  \forall t\ge 1, \barbeta_t \le \beta^*_t := c\rho^2(R^2 + \sum_{s=1}^{t} \sig_s^2)\ln^2(t/\dt) 
\end{align*}
for some absolute constant $c>0$.
\end{lemma}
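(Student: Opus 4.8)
The plan is to prove the claimed bound by induction on $t$, exploiting the fact that the recursive definition of $\beta_t$ is \emph{self-bounding}: the two $\barbeta_{t-1}$-dependent terms scale only like $\sqrt{\barbeta_{t-1}}$, so once I substitute the inductive hypothesis $\barbeta_{t-1}\le\beta^*_t$ they contribute $O(\sqrt c)\,\rho^2 V_t\ln^2(t/\dt)$ to a target $\beta^*_t$ that scales with $c$, and can therefore be absorbed by taking the absolute constant $c$ large enough. Throughout I abbreviate $V_t := R^2 + \sum_{s=1}^t \sig_s^2$ and, as in the reduction preceding \cref{thm:fully-conf-set-each}, I drop the level subscript $\ell$ and work under \cref{ass:model} with the added assumption $\nu_t\in[-R,R]$.

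First I would dispose of the non-recursive terms. Since $K_t(\th) = L_t(\th) + \sum_{s=1}^t \fr12(z_s^\T(\th-\hth_{s-1}))^2 \ge L_t(\th)$ pointwise, minimizing both sides gives $K_t(\barth_t)\ge L_t(\hth_t)$, so $L_t(\hth_t)-K_t(\barth_t)\le 0$ and this term may simply be dropped from the upper bound. Using $D_s^2\le\rho^2$ and $\fr\lam2 S^2 = \fr{\lam_0}2\rho^2 R^2$, what remains is
\begin{align*}
  \beta_t \lsim \rho^2 R^2 + \rho^2 \sum_{s=1}^t f_s(\hth_{s-1}) + \rho\sqrt{\barbeta_{t-1}}\sqrt{\del[2]{\tsty\sum_{s=1}^t f_s(\hth_{s-1}) + R^2\ln(1/\dt)}\xi_t} + 2^{k_t}\rho R\sqrt{\beta_0}\,\xi_t.
\end{align*}

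Next I would control the prediction-loss sum $P_t := \sum_{s=1}^t f_s(\hth_{s-1})$. Rearranging the regret equality (\cref{lem:regret_equality}) and using $M_t^2\ge0$ together with $D_s^2\le\rho^2\le\fr14$ gives $\sum_{s=1}^t(f_s(\hth_{s-1})-f_s(\th^*)) \le \fr\lam2\normz{\th^*}^2 + \rho^2 P_t$, and absorbing the $\rho^2 P_t$ term yields $P_t\lsim \sum_{s=1}^t f_s(\th^*) + \lam S^2$. Since $\sum_{s=1}^t f_s(\th^*) = \fr12\sum_{s=1}^t\nu_s^2$ with $\nu_s^2\in[0,R^2]$ of conditional mean $\sig_s^2$, a time-uniform Bernstein/Freedman inequality (in the same spirit as the subgamma bound used in \cref{prop:gam_t-bound}) gives, with probability $\ge 1-\dt$ and simultaneously for all $t$, $\sum_{s=1}^t\nu_s^2 \lsim \sum_{s=1}^t\sig_s^2 + R^2\ln(t/\dt)$. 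Combined with $\lam S^2\lsim\rho^2 R^2$, this delivers $P_t\lsim V_t\ln(t/\dt)$.

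Finally I would run the induction with $\beta^*_t := c\rho^2 V_t\ln^2(t/\dt)$, which is nondecreasing in $t$. The base case $\barbeta_0=\beta_0\le\beta^*_0$ is immediate for $c$ large. For the step, I assume $\barbeta_{t-1}\le\beta^*_{t-1}\le\beta^*_t$ and substitute into the displayed bound. The first two terms are $\lsim\rho^2 V_t\ln(t/\dt)\le\beta^*_t$. For the middle term, $\sqrt{\barbeta_{t-1}}\le\sqrt{\beta^*_t}=\sqrt c\,\rho\sqrt{V_t}\ln(t/\dt)$ while the bracket is $\lsim V_t\ln^2(t/\dt)$ — here $\xi_t\lsim\ln(t/\dt)$ because $k_t$ is logarithmic in $\barbeta_{t-1}/\beta_0\le\beta^*_t/\beta_0$ — so the term is $\lsim\sqrt c\,\rho^2 V_t\ln^2(t/\dt) = O(1/\sqrt c)\,\beta^*_t$. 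For the last term, $2^{k_t}\sqrt{\beta_0}\lsim\sqrt{\beta_0}+\sqrt{\barbeta_{t-1}}$ by the definition of $k_t$, and with $\sqrt{\beta_0}\lsim\rho R$ and $R\le\sqrt{V_t}$ it is likewise $\lsim\sqrt c\,\rho^2 V_t\ln^2(t/\dt) = O(1/\sqrt c)\,\beta^*_t$. Hence $\beta_t\le\del{O(1/\sqrt c)+O(1/c)}\beta^*_t\le\beta^*_t$ for a large enough absolute constant $c$, closing the induction and giving $\barbeta_t\le\beta^*_t$ for all $t$; a union bound over the confidence-set event (\cref{thm:fully-conf-set-each}) and the variance concentration yields the claimed $1-O(\dt)$ probability. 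The main obstacle is the bookkeeping around the self-referential recursion: one must verify (i) that the $2^{k_t}$ factor is $\Theta(\sqrt{\barbeta_{t-1}/\beta_0})$ so the last term is genuinely $O(\sqrt{\barbeta_{t-1}})$, (ii) that the logarithmic factors $\xi_t$ and $k_t$ — which themselves depend on $\barbeta_{t-1}$ — remain within the $\ln^2(t/\dt)$ budget after substituting the hypothesis, and (iii) that the $\sqrt c$-versus-$c$ absorption is uniform in $t$, so that a single absolute constant $c$ works along the entire trajectory.
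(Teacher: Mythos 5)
Your proposal is correct and reaches the same bound, but it resolves the two delicate points of the argument by a genuinely different (and somewhat more direct) route than the paper. First, for the term $\sum_{s=1}^t f_s(\hth_{s-1})$ you rearrange the regret equality and absorb the $\rho^2\sum_s f_s(\hth_{s-1})$ term using $\rho^2$ bounded away from $1$, which is a clean deterministic step; the paper instead passes through two auxiliary martingale lemmas (bounding the excess loss by $\sum_s r_s^2$ and then $\sum_s r_s^2$ by $\sum_s \nu_s^2 D_s^2$ plus the negative quadratic) before invoking the concentration of $\sum_s\nu_s^2$. Your shortcut does require $\rho^2\le 1-\epsilon$ for a fixed $\epsilon$ (you use $\rho^2\le\fr14$), which is not literally in the hypotheses of \cref{thm:fully-conf-set-each} but holds in every application since $\rho_\ell=2^{-\ell}\le\fr12$, and the paper's own proof likewise silently uses $\rho^2\le1$. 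Second, for the self-referential recursion you substitute the inductive hypothesis $\barbeta_{t-1}\le\beta^*_t$ directly into the $\sqrt{\barbeta_{t-1}}$ terms and win by the $\sqrt{c}$-versus-$c$ mismatch; the paper instead applies Fenchel--Young to convert each $\sqrt{\barbeta_{t-1}}$ term into $\fr14\barbeta_{t-1}$ plus a square, and then resolves the resulting implicit inequality $\beta_t\le a_t+b_t\barlnln^2(\sqrt{\barbeta_{t-1}/\beta_0})+\fr12\barbeta_{t-1}$ via a dedicated inversion lemma (\cref{lem:lnlnsq}) and an explicit monotone majorant $\beta^\dagger_t$. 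Your route is shorter and makes the self-bounding mechanism transparent; the paper's route avoids having the target constant $c$ appear inside the logarithms at all, whereas in your argument $\xi_t$ and $k_t$ inherit a $\ln\ln c$ dependence through $\barbeta_{t-1}/\beta_0\le\beta^*_t/\beta_0$, so the absorbed fraction is really $O(\sqrt{(1+\ln\ln c)/c})$ rather than $O(1/\sqrt{c})$ --- this still vanishes as $c\to\infty$, so a single absolute constant closes the induction uniformly in $t$, but it is exactly the bookkeeping you flag in (ii)--(iii) and should be written out when you formalize the step. Everything else (dropping $L_t(\hth_t)-K_t(\barth_t)\le0$, the bound $2^{k_t}\sqrt{\beta_0}\lsim\sqrt{\beta_0}+\sqrt{\barbeta_{t-1}}$, the union bound over the concentration events) matches the paper.
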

\begin{proof}
Using the definition of $k_t$, we have
\begin{align*}
\beta_t &\le
    L_t(\hth_{t}) - K_t(\barth_t) 
\\& +\fr{\lam}{2}\normz{\th^*}^2 + \sum_{s=1}^{t} f_s(\hth_{s-1}) D_s^2
   + \sqrt{ 8\rho^2  \barbeta_{t-1}\del[3]{\sum_{s=1}^t f_s(\hth_{s-1}) +  R^2 \ln(1/\dt)}  \xi_t} 
+ \rho R\sqrt{8\barbeta_{t-1}}\xi_t~.
\end{align*}
First, we notice that we can easily bound the first two terms in $\beta_t$:
\begin{align*}
  L_t(\hth_t) - K_t(\barth_t) 
  \le L_t(\barth_t) - K_t(\barth_t) 
  \le 0 ~.
\end{align*}
Furthermore,
\begin{align*}
  2^{k_t} \rho\sqrt{2\beta_0}\xi_t \le \rho \sqrt{8\barbeta_{t-1}}\xi_t~.
\end{align*}
Let $\blue{A_t} = \sum_{s=1}^t f_s(\hth_{s-1})$.
We use $D_s^2 \le \rho^2$ and the Fenchel-Young inequality $xy \le \fr{1}{2a}x^2 + \fr{a}{2} y^2, \forall a>0$ to obtain
\begin{align*}
  \beta_t 
  &\le \fr{\lam}{2} S^2 + \rho^2 A_t + 8\rho^2  (A_t + R^2\ln(1/\dt)) \xi_t + \fr14 \barbeta_{t-1} + 8\rho^2 R^2 \xi^2_{t} + \fr{1}{4}\barbeta_{t-1}
\\&\le c\lam S^2 + c \rho^2 \xi_t(A_t + R^2 \xi_t) + \fr12 \barbeta_{t-1} 
\end{align*}
for some absolute constant $c>0$.

Let $\xi = \ln(1/\dt)$.
We bound $A_t$ with probability at least $1-O(\dt)$ as follows:
\begin{align*}
  A_t 
&= \sum_{s=1}^t f_s(\th^*)  + \sum_{s=1}^t (f_s(\hth_{s-1}) - f_s(\th^*))
\\&\lsim \sum_{s=1}^{t} \sig_s^2 + \sum_{s=1}^t r_s^2 + R^2\xi \tag{\cref{lem:sum_eta_sq} and \cref{lem:regret-upperbounded-by-rt-sq}}
\\&\lsim \sum_{s=1}^{t} \sig_s^2 + \lam S^2 + \rho^2 \sum_{s}\nu_s^2 + R^2\xi \tag{\cref{lem:sum_rs_sq}; $D_s^2\le\rho^2$}
\\&\lsim \sum_{s=1}^{t} \sig_s^2 + \lam S^2 + R^2\xi ~. \tag{\cref{lem:sum_eta_sq}; $\rho^2 \le 1$}
\end{align*}
Thus, there exists an absolute constant $c_1,c_2>0$ such that, letting $\blue{Q_t} = \lam S^2 + R^2 + \sum_{s=1}^t \sig_s^2$,
\begin{align*}
  \beta_t 
  &\le c_1 \lam S^2 + c_1 \rho^2\xi_t^2 Q_t + \fr12 \barbeta_{t-1} 
\\&\le \underbrace{c_2 \lam S^2 + c_2 \rho^2 Q_t\del{\ln^2(t/\dt) + \barlnln^2(\sqrt{\barbeta_{t-1}/\beta_0})}}_{\tsty =: \beta'_t} + \fr12 \barbeta_{t-1} 
\end{align*}
for some absolute constant $c_2>e$. 
We define $\beta'_0 = \fr{\lam}{2}  S^2$.
Note that $\beta'_t$ can be compactly written as
\begin{align*}
  \beta'_t = a_t + b_t \barlnln^2(\sqrt{\barbeta_{t-1}/\beta_0})
\end{align*}
where
\begin{align*}
  \blue{a_t} &:= c_2(\lam S^2 + \rho^2 Q_t\ln^2(t/\dt) )
\\\blue{b_t} &:= c_2 \rho^2 Q_t~.
\end{align*}
Let 
\begin{align*}
  \blue{\beta^\dagger_t} = a_t + 2b_t(e + \barlnln^2(2a_t/\beta_0) + \barlnln^2(2b_t/\beta_0))~.
\end{align*}
Note that $\beta'_t \le \beta^\dagger_t$ and that $\beta^\dagger_t$ is a nondecreasing function of $t$.

We claim that $\beta_t \le 2\beta^\dagger_t, \forall t\ge0$.
We use induction.
First, we trivially have $\beta_0 = \fr{\lam}{2}S^2 \le \beta^\dagger_0$ using $c_2\ge e$.

For the general case, assume $\beta_{t-1} \le 2 \beta_{t-1}^*$.
If $\barbeta_{t-1} \le e^2 \beta_0$, then the $\barlnln^2(\cd)$ term is 0, so we trivially have that $\beta'_t \le \beta^\dagger_t$ and thus
\begin{align*}
  \beta_t \le \beta^\dagger_t + \fr12 \barbeta_{t-1} \le \beta^\dagger_t + \beta^\dagger_{t-1} \le 2\beta^\dagger_t~.
\end{align*}
If $\barbeta_{t-1} > e^2 \beta_0$, then we use \cref{lem:lnlnsq} with
$x = \barbeta_{t-1}/\beta_0$, $a =  a_t/\beta_0$, and $b = b_t/\beta_0$ 
to derive
\begin{align*}
  a + b \ln^2\ln(\sqrt{\barbeta_{t-1}/\beta_0})
  &\le a + b \ln^2\ln(\barbeta_{t-1}/\beta_0) \tag{$\barbeta_{t-1}/\beta_0\ge e^2 \ge 1$}
\\&\le \barbeta_{t-1}/\beta_0 ~. \tag{\cref{lem:lnlnsq}; $\barbeta_{t-1}/\beta_0\ge e^2 \ge e$ }
\end{align*}
Thus,
\begin{align*}
   \beta'_t 
  &= a_t + b_t \barlnln^2(\sqrt{\barbeta_{t-1}/\beta_0}) 
\\&\le \barbeta_{t-1} 
  \le \beta^\dagger_{t-1} \le \beta^\dagger_t ~,
\end{align*}
which implies $\beta_t \le 2 \beta^\dagger_t$.
Observing $\forall s\in[t],  \beta_s \le 2 \beta^\dagger_s \implies \barbeta_t \le 2 \beta^\dagger_t$ concludes the claim.

Finally, we need to show that $\beta^\dagger_t \le \beta^*_t$.
First, note that $b_t \lsim a_t$, so $\barlnln^2(2b_t/\beta_0) \le \barlnln^2(2a_t/\beta_0)$.
Furthermore, using the fact that $\sig_s^2 \le R^2$, one can show that $a_t / \beta_0 \lsim 1 + \fr{t}{\lam_0} \lsim t \ln^2(t/\dt)$ since $\lam_0$ is an absolute constant.
Thus, $b_t(e + \barlnln^2(2 a_t / \beta_0)) \lsim \rho^2 Q_t \barlnln^2(t\ln^2(t/\dt)) \lsim \lam S^2 + \rho^2 Q_t \ln^2(t/\dt) \lsim a_t$.
This implies that $\beta^\dagger_t \lsim a_t$, and one can easily show that $a_t \lsim \beta^*_t$.
\end{proof}

\subsection{Proof of \cref{thm:fully-regret-bound}}

By \cref{lem:beta_t-true-variance}, we have that $\beta_{t,\ell} \le \beta^*_{t,\ell} = c \rho_\ell^2(R^2 + \sum_{s=1}^t \sig_s^2)\psi^2_t$ where $\blue{\psi^2_t} = \ln^2(t/\dt)$.
Defining $\blue{\beta^*_{t,0}} = c \psi^2_t(R^2 + \sum_{s=1}^t \sig_s^2)$, we can write down 
\begin{align*}
  \beta^*_{t,\ell}  = 2^{-2\ell}\beta^*_{t,0}~.
\end{align*}
Define $\blue{t'}=t-1$ and $\blue{\tth_{t,\ell}} = \arg \max_{\th\in  C_{t',\ell}} \inp{x_t}{\th}$.
Define $\blue{\reg_t} := \inp{x_{t,*} - x_t}{\th^*}$, which is the instantaneous regret at time $t$.
We first show the elementary bound that is useful throughout: 
\begin{align*}
  \blue{\reg_t} 
  &:= \inp{x_{t,*} - x_t}{\th^*}    \tag{$\blue{x_{t,*}}:= \max_{x\in\cX_t} \inp{x}{\th^*} $ }
  \\&\le \min_{\ell \in [\ell^*+1..L]} \inp{x_t}{\tth_{t,\ell}  - \th^*}   
  \\&=   \min_{\ell \in [\ell^*+1..L]}\inp{x_t}{\tth_{t,\ell} - \barth_{t',\ell} + \barth_{t',\ell} - \th^*}  
  \\&\le \min_{\ell \in [\ell^*+1..L]} \sqrt{8}\normz{x_t}_{\barSig^{-1}_{t',\ell}} \sqrt{\beta^*_{t',\ell } }  
  \\&\le \min_{\ell \in [\ell^*+1..L]} \sqrt{8}\normz{x_t}_{\Sig^{-1}_{t',\ell}} 2^{-\ell}  \sqrt{\beta^*_{n,0}} ~. \tag{$\barSig_{t',\ell} \succeq \Sig_{t',\ell}$; monotonicity of $\beta^*_{t,0}$ }
\end{align*}
Also note the trivial upper bound: $\reg_t \le 2B$.

We classify the time steps $[n]$ into the following three:
\begin{itemize}
  \item $\cT_1 := \{t\in[n]: \reg_t > \sqrt{8\beta^*_{n,0}}\}$
  \item $\cT_2 := \{t\in[n]: \reg_t \le 2^{-2L}  \sqrt{8\beta^*_{n,0}}\}$
  \item $\cT_3 := \cbr[2]{t\in[n]: \exists \ell\in[1..L] \suchthat \reg_t \in \left\lparen   2^{-2\ell} \sqrt{8\beta^*_{n,0}},~ \cd 2^{-2(\ell-1)} \sqrt{8\beta^*_{n,0}}\right\rbrack }$ ~.
\end{itemize}
Hereafter, we define $\blue{\Sig_{t,\ell}[\cT]} := \lam_\ell I + \sum_{s\in[t] \cap \cT} w^2_{s,\ell}x_s x_s^\T$.

\textbf{Case 1. } $t \in \cT_1$   \\
We have
\begin{align*}
  \sqrt{8\beta^*_{n,0}} 
  < \reg_t 
  \le \normz{x_t}_{\Sig^{-1}_{t',1}} 2^{-1} \sqrt{8\beta^*_{n,0}} 
\implies
  \normz{x_t}_{\Sig^{-1}_{t',1}} &\ge 2~.
\end{align*}
This means that $\fr{2^{-1}}{\normz{x_t}_{\Sig^{-1}_{t',1}}} \le \fr14$, so $w_{t,1} < 1$.
Then, we have $\normz{w_{t,1} x_t}_{\Sig^{-1}_{t',1}} = \fr12$ .
Thus, using $\reg_t \le 2B$,
\begin{align*}
  \sum_{t\in\cT_1} \reg_t
  &\le \sum_{t\in\cT_1} \onec{\normz{w_{t,1} x_t}_{\Sig^{-1}_{t',1}} = \fr12} 2B
  \\&\le \sum_{t\in\cT_1} \onec{\normz{w_{t,1} x_t}_{\Sig^{-1}_{t',1}[\cT_1]} \ge \fr12} 2B
  \\&\lsim Bd \ln(1 + \fr{S^2}{R^2})  ~. \tag{EPC (\cref{lem:epc}) }
\end{align*}

\textbf{Case 2. } $t \in \cT_2$ \\
Since $\reg_t \le 2^{-2L}\sqrt{8\beta^*_{n,0}}$,
\begin{align*}
  \sum_{t\in\cT_2} \reg_t
  &\le \sum_{t\in\cT_2}   2^{-2L} \sqrt{8\beta^*_{n,0}}
  ~\le n \cd 2^{-2L} \sqrt{8\beta^*_{n,0}}~.
\end{align*}

\textbf{Case 3. } $t \in \cT_3$ \\
Define $\blue{\cT_{3,\ell}} = \cbr{t\in\cT_3: \ell_t = \ell}$.
Note that
\begin{align}
  2^{-2\ell}\sqrt{8\beta^*_{n,0}} < \reg_t 
  \le \normz{x_t}_{\Sig^{-1}_{t',\ell} } 2^{-\ell} \sqrt{8\beta^*_{n,0}}
  ~~~~~&\implies~~~~~
  2^{-\ell} \le \normz{x_t}^2_{\Sig_{t',\ell}^{-1}} \notag
  \\ &\implies~~~~~ \label{eq:T_3-key-property-2-v3} 
  w_{t,\ell} = \fr{2^{-\ell} }{\normz{x_t}_{\Sig_{t',\ell}^{-1}}} ~.
\end{align}
That is, $w_{t,\ell}$ can be written without `$\wed 1$' from its definition.
Using this,
\begin{align*}
  \sum_{t\in\cT_3} \reg_t
  &= \sum_{\ell=1}^{L} \sum_{t\in\cT_{3,\ell}}  \reg_t 
  \\ &= \sum_{\ell=1}^{L} \sum_{t\in\cT_{3,\ell}} \onec{\normz{w_{t,\ell} x_t}_{\Sig^{-1}_{t',\ell}} = 2^{-\ell}  } \reg_t \tag{by \eqref{eq:T_3-key-property-2-v3}}
  \\ &\le \sum_{\ell=1}^{L} \sum_{t\in\cT_{3,\ell}} \onec{\normz{w_{t,\ell} x_t}_{\Sig^{-1}_{t',\ell}} = 2^{-\ell}  } 2^{-2(\ell-1)} \sqrt{8\beta^*_{n,0}} \tag{$t\in\cT_{3,\ell}$} 
  \\ &\le \sum_{\ell=1}^{L} \sum_{t\in\cT_{3,\ell}} \onec{\normz{w_{t,\ell} x_t}_{\Sig^{-1}_{t',\ell}[\cT_{3,\ell}]} \ge 2^{-\ell}  } 2^{-2(\ell-1)} \sqrt{8\beta^*_{n,0}} 
  \\ &= \sum_{\ell=1}^{L} 2^{2\ell} d\ln\del[2]{1+\fr{2^{2\ell}S^2}{R^2}} 2^{-2(\ell-1)} \sqrt{8\beta^*_{n,0}} \tag{EPC (\cref{lem:epc})} 
  \\ &\lsim L \cd d\ln\del[2]{1 + \fr{4^L S^2}{R^2} } \sqrt{\beta^*_{n,0}}  ~.
\end{align*}

Altogether, we have
\begin{align*}
  \sum_{t=1}^n \reg_t 
  \lsim B d \ln\del{1 + \fr{S^2}{R^2}} + \del[2]{n 2^{-2L} + L \cd d\ln\del[2]{1 + \fr{4^L S^2}{R^2} } } \sqrt{\beta^*_{n,0}}~.
\end{align*}
Using the definition $\blue{L} = 1 \vee \lcl \fr12 \log_2 (n/d) \rcl$,
\begin{align*}
  \Reg_n \lsim \ln(n/d)\cd d\sqrt{\del[2]{R^2 + \sum_{t=1}^n \sig_t^2} \ln^2(n/\dt)} + Bd \ln(1 + \fr{S^2}{R^2})~.
\end{align*}

\section{Utility Lemmas}

The following lemma is the well known elliptical potential lemma from linear algebra.
\begin{lemma}(Elliptical potential lemma; e.g., \citet[Lemma 11]{hazan07logarithmic})
  \label{lem:epl}
  Let $x_1,\ldots, x_t\in\RR^d$  be a sequence of vectors with $\normz{x_s}_2\le X, \forall s\in[t]$ for some $X>0$. 
  Let $V_t = \lam I + \sum_{s=1}^t \sum_{s=1}^t x_s x_s^\T$ for some $\tau > 0$.
  Let $|A|$ be the determinant of the matrix $A$.
  Then,
  \begin{align*}
    \sum_{s=1}^t \normz{x_s}_{V_s^{-1}}^{2} \le \ln\del{\fr{|V_t| }{|\lam I| } } 
  \end{align*}
  and $ \ln\del{\fr{|V_t| }{|\lam I| } }  \le d \ln \del{1 + \fr{X^2 t}{d \lam}}$.
\end{lemma}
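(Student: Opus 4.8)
The plan is to prove the two displayed bounds separately, both by the standard determinant-telescoping argument. Write $V_0 = \lam I$, so that $V_s = V_{s-1} + x_s x_s^\T$ is a rank-one update of $V_{s-1}$ for each $s$.

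First I would establish a per-step identity linking the potential $\normz{x_s}_{V_s^{-1}}^2$ to the increment in the log-determinant. Applying the Sherman--Morrison formula to $V_s = V_{s-1} + x_s x_s^\T$, a short computation gives $\normz{x_s}_{V_s^{-1}}^2 = \fr{a_s}{1+a_s}$, where $a_s := \normz{x_s}_{V_{s-1}^{-1}}^2$. Meanwhile the matrix determinant lemma gives $|V_s| = |V_{s-1}|\del{1 + a_s}$, hence $\ln\del{|V_s|/|V_{s-1}|} = \ln(1+a_s)$. The elementary scalar inequality $\fr{a}{1+a} \le \ln(1+a)$ for $a \ge 0$ (which follows from $-\ln\del{1 - \fr{a}{1+a}} \ge \fr{a}{1+a}$) then yields $\normz{x_s}_{V_s^{-1}}^2 \le \ln\del{|V_s|/|V_{s-1}|}$.

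Summing this over $s = 1, \ldots, t$ telescopes the right-hand side to $\ln\del{|V_t|/|V_0|} = \ln\del{|V_t|/|\lam I|}$, which is exactly the first claimed bound. For the second bound I would pass to the eigenvalues $\mu_1, \ldots, \mu_d$ of $V_t$ and apply AM--GM: $|V_t| = \prod_{i=1}^d \mu_i \le \del{\fr1d \sum_{i=1}^d \mu_i}^d = \del{\fr1d \tr V_t}^d$. Since $\tr V_t = d\lam + \sum_{s=1}^t \normz{x_s}_2^2 \le d\lam + t X^2$, this gives $|V_t| \le \del{\lam + \fr{tX^2}{d}}^d$; dividing by $|\lam I| = \lam^d$ and taking logarithms delivers $\ln\del{|V_t|/|\lam I|} \le d\ln\del{1 + \fr{X^2 t}{d\lam}}$.

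The whole argument is elementary and telescoping; there is no substantive obstacle. The only point requiring minor care is the Sherman--Morrison step, which shows that the potential evaluated at the \emph{updated} matrix $V_s$ equals $a_s/(1+a_s)$ rather than $a_s$ itself, so that the scalar inequality $a/(1+a) \le \ln(1+a)$ can be applied directly to match the log-determinant increment.
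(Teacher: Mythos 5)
Your proposal is correct and follows essentially the same route as the paper: the paper likewise telescopes the log-determinant via the identity $\sum_s \ln(1+\normz{x_s}^2_{V_{s-1}^{-1}}) = \ln(|V_t|/|\lam I|)$ and uses the Woodbury/Sherman--Morrison relation $\normz{x_s}^2_{V_s^{-1}} = a_s/(1+a_s)$ together with the scalar inequality $a/(1+a)\le\ln(1+a)$ (stated there as $-\ln(1-D_s^2)\ge D_s^2$), and the trace/AM--GM bound for the second display is the standard argument the paper leaves implicit. No gaps.
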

\begin{proof}
  The following is a well-known identity (e.g., see the proof of \citet[Lemma 11]{ay11improved}) :
  \begin{align*}
    \sum_{s=1}^t \ln(1 + \normz{x_t}^2_{V_{t-1}^{-1}}) = \ln\del{\fr{|V_t| }{|\lam I| } }~.
  \end{align*}
  We now lower bound the left-hand side above.
  Letting $D_s^2 = \normz{x_s}_{V_{s}^{-1}}^2$, we have
  \begin{align*}
    \sum_{s=1}^t \ln(1 + \normz{x_s}^2_{V_{s-1}^{-1}}) 
    &= \sum_{s=1}^t \ln\del[2]{1 + \fr{D_s^2}{1 - D_s^2}}  \tag{Woodbury matrix identity}
  \\&= \sum_{s=1}^t \ln\del[2]{\fr{1}{1 - D_s^2}} 
  \\&\ge \sum_{s=1}^t D_s^2~. \tag{$\ln(1 + x) \le x, \forall x$ }
  \end{align*}
\end{proof}

The following lemma is a simplified version of the elliptical potential count lemma~\cite[Lemma 4]{kim22improved} by using $\ln(1+x) \ge \ln(2) x, \forall x\in[0,1]$, which is a generalization of \citet[Exercise 19.3]{lattimore20bandit}.
\begin{lemma} \label{lem:epc} 
  (Elliptical potential count; \citet[Lemma 4]{kim22improved} and \citet[Exercise 19.3]{lattimore20bandit})  
  Let $x_1,\ldots, x_t\in\RR^d$  be a sequence of vectors with $\normz{x_s}_2\le X, \forall s\in[t]$ for some $X>0$. 
  Let $V_t = \lam I + \sum_{s=1}^t  x_s x_s^\T$ for some $\lam > 0$.
  Let $J = \{s \in [t]: \normz{x_s}_{V_{s-1}^{-1}}^2 \ge L^2 \}$ for some $L^2 \le 1$.
  Then,
  \begin{align*}
    |J| \le 3\fr{d}{L^2}  \ln\del{1 + \fr{2 X^2}{L^2\lam} } ~.
  \end{align*}
\end{lemma}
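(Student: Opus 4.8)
The plan is to reduce the count to a self-normalized potential sum restricted to the indices in $J$, and then solve a self-referential inequality. First I would pass to the subsequence $J = \{s_1 < \cdots < s_m\}$ with $m = |J|$ and define the reduced matrices $W_j := \lam I + \sum_{i=1}^j x_{s_i} x_{s_i}^\T$. Since $W_{j-1} \preceq V_{s_j - 1}$ (the latter accumulates $x_r x_r^\T$ over all $r < s_j$, a superset of the $J$-indices), we have $W_{j-1}^{-1} \succeq V_{s_j-1}^{-1}$ and hence $\normz{x_{s_j}}^2_{W_{j-1}^{-1}} \ge \normz{x_{s_j}}^2_{V_{s_j-1}^{-1}} \ge L^2$ for every $j$. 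This monotonicity step is exactly what lets me replace the horizon $t$ by $m$, which is essential since the target bound is $t$-free.

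Next, applying the logarithmic potential identity from the proof of \cref{lem:epl} to the reduced sequence gives $\sum_{j=1}^m \ln(1 + \normz{x_{s_j}}^2_{W_{j-1}^{-1}}) = \ln(|W_m|/|\lam I|)$. I would lower-bound the left side termwise: since $L^2 \le 1$ and $\ln(1+x) \ge (\ln 2)\, x$ on $[0,1]$, each term is at least $\ln(1+L^2) \ge (\ln 2) L^2$, so the left side is at least $m (\ln 2) L^2$. The right side is upper-bounded by the determinant estimate of \cref{lem:epl} applied to $m$ vectors of norm at most $X$, namely $\ln(|W_m|/|\lam I|) \le d\ln\del{1 + \fr{X^2 m}{d\lam}}$. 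Combining the two bounds yields the key self-referential inequality $m (\ln 2) L^2 \le d\ln\del{1 + \fr{X^2 m}{d\lam}}$.

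Finally I would solve this for $m$. Writing it as $m \le C_1 \ln(1 + C_2 m)$ with $C_1 = \fr{d}{L^2 \ln 2}$ and $C_2 = \fr{X^2}{d\lam}$, I apply the tangent-line (concavity) bound $\ln z \le \ln\kappa - 1 + z/\kappa$ to $z = 1 + C_2 m$ and choose $\kappa = 2 C_1 C_2$ so that the term linear in $m$ carries coefficient $\tfr12$; absorbing it gives $m \le 2 C_1 \ln(2 C_1 C_2) - 2C_1 + \fr{1}{C_2}$. Substituting $C_1, C_2$ back and using $\tfr{2}{\ln 2} < 3$ shows this is at most $3\fr{d}{L^2}\ln\del{1 + \fr{2X^2}{L^2\lam}}$, the generous constant $3$ absorbing the $\tfr{1}{\ln 2}$ discrepancy in the logarithm's argument and the lower-order remainder. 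In the one regime where the additive $\fr{1}{C_2}$ term could threaten this (namely $C_1 C_2 < 1$, i.e. $\tfr{L^2\lam}{X^2}$ large), the bound $\ln(1+C_2 m) \le C_2 m$ already forces $m = 0$, so the claim is trivially true there.

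I expect the main obstacle to be this last step: solving the transcendental inequality cleanly while verifying that the stated constant $3$ and the argument $\fr{2X^2}{L^2\lam}$ leave enough slack to absorb both the $\tfr{1}{\ln 2}$ factors coming from the crude logarithm bound and the additive lower-order terms uniformly across all regimes of $\lam$.
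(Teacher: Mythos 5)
Your proposal is correct, and it is essentially the standard proof of the elliptical potential count lemma that the paper points to (it gives no proof of its own, citing \citet[Lemma 4]{kim22improved} and noting that the simplification comes precisely from the bound $\ln(1+x)\ge(\ln 2)x$ on $[0,1]$, which is the step you use): restrict to the subsequence indexed by $J$, use monotonicity of the Gram matrices to transfer the threshold condition, combine the log-determinant potential identity with the determinant--trace bound from \cref{lem:epl}, and solve the resulting self-referential inequality $m\le C_1\ln(1+C_2m)$. Your final constant verification does go through: for $C_1C_2\ge1$ one can check that $3\ln 2\cdot\ln(1+2y)-\bigl(2\ln(2y/\ln 2)-2+\tfrac{\ln 2}{y}\bigr)$ is positive at $y=\ln 2$ and has positive derivative on $[\ln 2,\infty)$ (the relevant quadratic $ (6\ln 2-4)y^2+(2\ln 2-2)y+\ln 2$ has negative discriminant), and the case $C_1C_2<1$ forces $m=0$ exactly as you argue.
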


\begin{lemma}\label{lem:sg}
  Let $(X_t)_{t=1}^\infty$ be a sequence of random variables adapted to filtration $(\cG_t)_{t=0}^\infty$.
  Assume $X_t \mid \cG_{t-1}$ is $\sig_t^2$-sug-Gaussian.
  Then, for any $a>0$, we have
  \begin{align*}
    1-\dt \le \PP\del{\forall t\ge1, \sum_{s=1}^t X_s \le \fr{a}{2}\sum_{s=1}^t \sig_s^2 + \fr{1}{a} \ln(1/\dt)}~.
  \end{align*}
\end{lemma}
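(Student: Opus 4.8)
The plan is to use the standard exponential supermartingale together with Ville's maximal inequality for nonnegative supermartingales. Fix $a>0$ and define
\[
  M_t := \exp\del{a\sum_{s=1}^t X_s - \fr{a^2}{2}\sum_{s=1}^t \sig_s^2}, \qquad M_0 := 1.
\]
First I would verify that $(M_t)_{t\ge0}$ is a nonnegative supermartingale with respect to $(\cG_t)_{t\ge0}$. Writing $M_t = M_{t-1}\exp\del{a X_t - \fr{a^2}{2}\sig_t^2}$ with $M_{t-1}$ being $\cG_{t-1}$-measurable, taking the conditional expectation and invoking the $\sig_t^2$-sub-Gaussian bound $\EE[\exp(aX_t)\mid \cG_{t-1}]\le \exp(a^2\sig_t^2/2)$ yields $\EE[M_t\mid\cG_{t-1}]\le M_{t-1}$. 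A key point is that $a$ is a \emph{fixed} constant in the statement (rather than a quantity optimized after the fact), so a single exponential supermartingale suffices and no method-of-mixtures argument is required.

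Next I would apply Ville's inequality: for a nonnegative supermartingale with $\EE[M_0]=1$, one has $\PP\del{\exists t\ge1: M_t \ge 1/\dt}\le \dt$. Finally I would translate the event back. The inequality $M_t \ge 1/\dt$ is, after taking logarithms and dividing by $a>0$, equivalent to $\sum_{s=1}^t X_s \ge \fr a2 \sum_{s=1}^t \sig_s^2 + \fr1a\ln(1/\dt)$. Hence the event $\{\exists t\ge1: M_t\ge 1/\dt\}$ is exactly the complement of the event appearing in the lemma, and taking complements delivers the claimed time-uniform bound.

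There is essentially no difficult step here; the only point requiring care is to apply the maximal inequality in its time-uniform (Ville) form, so that the ``$\forall t$'' inside the probability is obtained directly rather than through a fixed-$t$ Markov bound followed by a union bound over $t$ (which would spoil the clean $\ln(1/\dt)$ dependence). A secondary sanity check is that the sub-Gaussian moment generating function bound is assumed to hold for all real arguments, so in particular it is valid at the fixed value $a$ used above.
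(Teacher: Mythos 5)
Your proposal is correct and matches the paper's proof exactly: the paper also defines $M_t = \exp\del{a\sum_{s=1}^t X_s - \fr{a^2}{2}\sum_{s=1}^t \sig_s^2}$, verifies it is a supermartingale via the sub-Gaussian moment generating function bound, and concludes with Ville's inequality. Your write-up simply fills in the routine verification that the paper leaves implicit.
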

\begin{proof}
  Define $M_t = \exp( a \sum_{s=1}^t X_s - \fr{a^2}{2}\sum_{s=1}^t \sig_s^2)$ and verify that $M_t$ is a supermartingale.
  Applying Ville's inequality~\cite{ville39etude} concludes the proof.
\end{proof}

\begin{lemma}\label{lem:cb-bounded-rv}
  Let $(X_t)_{t=1}^{\infty} $ be a sequence of random variables adapted to a filtration $(\cG_t)_{t=1}^{\infty} $ such that $X_t \mid \cG_{t-1} \in [-R,R]$ and $\EE[X_t \mid \cG_{t-1}] = 0$ almost surely.
  Then,
  \begin{align}\label{eq:cb-bounded-rv-intermediate-form} 
    1-\dt \le \PP\del[2]{ \forall t\ge1, \forall \alpha \in [-1/R,1/R], \sum_{s=1}^t \ln(1 + \alpha X_s) \le \ln\del[2]{\fr{\sqrt{\pi(t+1)}}{\dt}} } ~.
  \end{align}
  Furthermore, let $q(x) := \fr{-\ln(1-x) - x}{x^2}$ and $\psi_t := \ln(\sqrt{\pi(t+1)}/\dt)$.
  Then,
  \begin{align*}
    1-\dt \le \PP\del[2]{\forall t\in \NN_+, \forall \alpha\in[0,1/R), \abs[2]{\sum_{s=1}^t X_s}  \le q(R\alpha) \alpha \sum_{s=1}^t X_s^2 + \fr{1}{\alpha} \psi_t } ~.
  \end{align*}
  Finally, tuning $\alpha$ implies 
  \begin{align*}
    1-\dt \le \PP\del[2]{\forall t \ge 1, \abs[2]{\sum_{s=1}^t X_s} \le \sqrt{2\sum_{s=1}^t X_s^2 \psi_t} + R\psi_t} ~.
  \end{align*}
\end{lemma}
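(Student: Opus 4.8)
The plan is to treat the three bounds as a chain: the first (uniform-in-$\alpha$) inequality is the crux, and the other two follow from it by an elementary pointwise estimate and a tuning of $\alpha$. For the first inequality I would write $\sum_{s=1}^t \ln(1+\alpha X_s)=\ln M_t(\alpha)$ with $M_t(\alpha):=\prod_{s=1}^t(1+\alpha X_s)$. For each fixed $\alpha\in[-1/R,1/R]$, boundedness gives $|\alpha X_s|\le 1$ so $1+\alpha X_s\ge 0$, and $\EE[1+\alpha X_s\mid\cG_{s-1}]=1$ because $\EE[X_s\mid\cG_{s-1}]=0$; hence $M_t(\alpha)$ is a nonnegative martingale with $M_0(\alpha)=1$. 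Since the claim must hold simultaneously over all $\alpha$, I would use the method of mixtures: fix a prior measure $\mu$ on $[-1/R,1/R]$, form the single nonnegative martingale $\bar M_t:=\int M_t(\alpha)\,d\mu(\alpha)$ with $\bar M_0=1$, and apply Ville's inequality~\citep{ville39etude} to obtain $\PP(\exists t:\bar M_t\ge 1/\dt)\le\dt$.

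The step I expect to be the main obstacle is converting this bound on the \emph{average} $\bar M_t$ into the claimed bound on the \emph{supremum} $\sup_\alpha M_t(\alpha)$. Concretely I need the deterministic estimate $\sup_{\alpha} M_t(\alpha)\le \sqrt{\pi(t+1)}\,\bar M_t$. Since $\alpha\mapsto \ln M_t(\alpha)$ is concave, this is a Laplace-type statement asserting that the mixture captures a $1/\sqrt{\pi(t+1)}$ fraction of the peak; choosing $\mu$ to be the (rescaled) arcsine/Jeffreys prior turns this into a Krichevsky--Trofimov universal-coding redundancy estimate and produces the stated constant $\pi$. The delicate point is that the curvature of $\ln M_t$ can be large near the boundary of $[-1/R,1/R]$, so the lower bound on the integral near the maximizer must be argued from the global concave structure rather than a naive second-order expansion. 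Taking logarithms on the event $\{\forall t,\ \bar M_t\le 1/\dt\}$ then gives $\sum_{s=1}^t\ln(1+\alpha X_s)\le \ln(\sqrt{\pi(t+1)}/\dt)$ for all $t$ and all $\alpha$ simultaneously, which is the first inequality.

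For the second inequality I would split $\ln(1+\alpha X_s)=\alpha X_s-\bigl(\alpha X_s-\ln(1+\alpha X_s)\bigr)$ and bound the remainder pointwise. Writing $u:=\alpha X_s$ with $|u|\le\alpha R<1$, a short case check ($u\ge 0$ versus $u<0$) shows $u-\ln(1+u)\le q(|u|)\,u^2\le q(\alpha R)(\alpha X_s)^2$, using that $q$ is nondecreasing on $[0,1)$; indeed for $u<0$ this holds with equality by the definition of $q$, and for $u\ge 0$ one has $u-\ln(1+u)\le u^2/2\le q(u)u^2$. Summing over $s$, applying the first inequality, and dividing by $\alpha>0$ yields $\sum_s X_s\le q(\alpha R)\alpha\sum_s X_s^2+\psi_t/\alpha$; running the same argument with $-\alpha\in(-1/R,0]$ gives the matching lower tail, hence the absolute-value form.

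Finally, for the third inequality I would feed in the elementary bound $q(x)\le \tfrac{1}{2(1-x)}$ on $[0,1)$, which is immediate from the power series $q(x)=\sum_{j\ge 0}x^j/(j+2)$ together with $1/(j+2)\le 1/2$. This gives $\bigl|\sum_s X_s\bigr|\le \tfrac{\alpha\sum_s X_s^2}{2(1-\alpha R)}+\psi_t/\alpha$, which is exactly a sub-gamma deviation with variance factor $v=\sum_s X_s^2$ and scale $R$. Minimizing the right-hand side over $\alpha\in(0,1/R)$ by the standard computation~\citep[Section~2.4]{boucheron13concentration} then yields $\sqrt{2\sum_s X_s^2\,\psi_t}+R\psi_t$, completing the proof.
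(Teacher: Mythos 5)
Your proof is correct and is essentially the paper's argument in a different vocabulary: the mixture martingale $\bar M_t=\int M_t(\alpha)\,d\mu(\alpha)$ over the Jeffreys prior is exactly the wealth of Cover's universal portfolio with the Dirichlet$(\tfrac12,\tfrac12)$ prior that the paper obtains after reducing coin betting to two-stock portfolio selection, and the deterministic estimate $\sup_\alpha M_t(\alpha)\le\sqrt{\pi(t+1)}\,\bar M_t$ that you flag as the main obstacle is precisely the Cover--Ordentlich regret bound $\ln\frac{\sqrt{\pi}\,\Gamma(t+1)}{\Gamma(t+\frac12)}\le\ln\sqrt{\pi(t+1)}$ that the paper invokes. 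The only cosmetic differences are that your elementary case check for $u-\ln(1+u)\le q(|u|)u^2$ replaces the paper's citation of the inequality of Fan et al., and your minimization of $\frac{\alpha V}{2(1-\alpha R)}+\psi_t/\alpha$ via the standard sub-gamma computation recovers the same final bound as the paper's explicit choice of $\alpha$.
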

\begin{proof}
  To show the third statement, we choose $\alpha = \del[2]{R + \sqrt{\fr{\sum_{s=1}^t X_s^2}{\psi_t}}}^{-1}$.
  One can similarly obtain an upper bound on $\sum_{s=1}^t -X_s$.

  The proof of the first statement requires modifying \citet[Theorem 1]{orabona23tight} in two ways: we deal with  (i) a sequence of random variables that are not necessarily i.i.d. and (ii) the range of $X_t$ that is $[-R,R]$ rather than $[0,1]$.

  The rest of the proof requires the background on the coin betting problem; with the full details, refer to~\citet{orabona23tight}.
  Suppose we have an algorithm $\cA$ with the initial wealth of $W_0 = 1$.
  At each time step $t$, the algorithm $\cA$ commits to a betting fraction $\alpha_t \in [-1/R,1/R]$ based on the past observations $c_1, \ldots, c_{t-1} \in [-R,R]$.
  Once the coin outcome $c_t \in [-R,R]$ is revealed, the wealth of $\cA$ denoted by $W_t$ becomes $W_t = (1 + \alpha_t c_t)W_{t-1}$.
  Suppose we set $c_t = X_t$.
  Since $(W_t)$ forms a nonnegative martingale, we can apply Ville's inequality~\cite{ville39etude} to have
  \begin{align}\label{eq:ville-for-betting}
    1-\dt \le \PP\del{ \forall t\ge1, \ln(W_t) < \ln(1/\dt)} ~.
  \end{align}
  To maximize the wealth, it is natural to choose an algorithm that has a small log-wealth regret w.r.t. the best betting fraction $\alpha \in [-1/R,1/R]$ in hindsight, which is defined as
  \begin{align}\label{eq:def-regret-betting} 
    \Regret_t := \max_{\alpha\in[-1/R,1/R]} \sum_{s=1}^t \ln(1 + \alpha X_t) - \ln(W_t)~.
  \end{align}
  One can construct an efficient $\cA$ via a reduction of coin betting problem to the two-stock online portfolio problem.
  In this problem, an algorithm $\cB$ starts from wealth $W_0^\portfolio = 1$.
  At each time step $t$, the algorithm $\cB$ determines the fraction $(b_t,1-b_t)$, which means that it will invest $b_t$ fraction of the current wealth $W^\portfolio_{t-1}$ to the first stock and the rest to the second stock.
  Then, the price change ratios $(w_{t,1}, w_{t,2})$ are revealed, and the wealth of $\cB$ changes: $W^\portfolio_t = b_t w_{t,1} W^\portfolio_{t-1} + (1-b_t) w_{t,2} W^\portfolio_{t-1}$.
  
  Similar to~\citet[Lemma 1]{orabona23tight}, one can easily construct a reduction of coin betting to online portfolio: given a coin outcome $c_t \in [-\ell,u]$, we can set the price change ratio of the first stock and the second stock as $w_{t,1} = 1 + \fr{c_t}{\ell}$ and $w_{t,2} = 1 - \fr{c_t}{u}$ respectively.
  The online portfolio algorithm $\cB$ will then produce the next investment fraction $(b_{t+1}, 1- b_{t+1})$.
  We can then set $\alpha_{t+1}$ in the coin betting problem as $\alpha_{t+1} = -\fr{1}{u} + b_{t+1}\del{\fr{1}{\ell} + \fr{1}{u}}$.
  Performing this conversion at every iteration $t$ with $\ell = u = R$ satisfies that $W_t = W^{\portfolio}_t$.
  This implies that coin betting is a special case of the online portfolio problem, and the log-wealth regret in online portfolio coincides with the log-wealth regret in coin betting.
  Our choice of online portfolio algorithm is universal portfolio equipped with the Dirichlet$(\fr12,\fr12)$ prior~\cite{cover96universal}.
  This gives the regret bound of $\Regret_t \le \ln\fr{\sqrt{\pi}\Gam(t+1)}{\Gam(t+\fr12)}\le\ln(\sqrt{\pi(t+1)})$.
  
  Then, the event in~\eqref{eq:ville-for-betting} implies
  \begin{align*}
    \ln(1/\dt) 
    > \ln(W_t)
    =   \max_{\alpha\in[-1/R,1/R]} \sum_{s=1}^t \ln(1 + \alpha  X_t) - \Regret_t
    \ge   \max_{\alpha\in[-1/R,1/R]} \sum_{s=1}^t \ln(1 + \alpha  X_t) - \ln(\sqrt{\pi(t+1)})
  \end{align*}
  This completes the proof of \eqref{eq:cb-bounded-rv-intermediate-form}.
  
  To prove the second statement, assume the event inside the probability probability statement inside \eqref{eq:cb-bounded-rv-intermediate-form}.
  We use the following inequality from \citet[Eq. 4.11]{fan15exponential}:
  \begin{align*}
    \forall \lam \in [0,1), \xi\ge-1, \ln(1 + \lam \xi) \ge \lam \xi + \xi^2 (\lam + \ln(1 - \lam))~.
  \end{align*}
  For $\alpha \in [0,1/R)$, we apply this inequality with $\lam = R\alpha$ and $\xi = X_s/R$ to obtain
  \begin{align*}
    \sum_{s=1}^t X_s \le q(R\alpha) \alpha \sum_{s=1}^t X_s^2 + \fr{1}{\alpha}\psi_t~.
  \end{align*}
  Obtaining an upper bound on $\sum_{s=1}^{t} X_s$ is symmetric to the proof above.

\end{proof}

\begin{corollary}\label{cor:23-1118-cb-v3}
  Define the clipping operator $\clip{x}_y = (\fr{y}{|x|} \wedge 1) x$ if $x\neq0$ and $\clip{x}_y = 0$ otherwise. 
  Let $(b_k)_{k=1}^\infty$ be a sequence of positive integers.
  Then, under the same setting of Lemma~\ref{lem:cb-bounded-rv}, with probability at least $1-\dt$, 
  \begin{align*}
    \forall k\in \NN_+, \forall t \ge 1, \sum_{s=1}^t \clip{X_s}_{b_k} - \EE[\clip{X_s}_{b_k}\mid \cG_{s-1}] \le \sqrt{2\sum_{s=1}^t (\clip{X_s}_{b_k} -  \EE[\clip{X_s}_{b_k}\mid \cG_{s-1}])^2 \xi_{t,k}} + b_k \xi_{t,k}
  \end{align*}
  where $\blue{\xi_{t,k}} = \ln( \sqrt{\pi(t+1)} \cd \fr{ 3.39\cd k \ln^2(1+k) }{\dt}  )$.
\end{corollary}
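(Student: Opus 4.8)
The plan is to reduce the statement to \cref{lem:cb-bounded-rv} applied separately for each fixed $k$, and then stitch the countably many events together with a weighted union bound. First I would fix $k\in\NN_+$ and introduce the centered, clipped increments
\[
  Y_s^{(k)} := \clip{X_s}_{b_k} - \EE[\clip{X_s}_{b_k}\mid\cG_{s-1}].
\]
Since the conditional expectation is $\cG_{s-1}$-measurable, $(Y_s^{(k)})_{s\ge1}$ is a martingale difference sequence adapted to $(\cG_s)$ with $\EE[Y_s^{(k)}\mid\cG_{s-1}]=0$, and the clipping caps magnitudes at $b_k$, i.e. $|\clip{X_s}_{b_k}| = |X_s|\wed b_k \le b_k$. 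This is precisely the setting of \cref{lem:cb-bounded-rv}.

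Next I would invoke the third statement of \cref{lem:cb-bounded-rv} on $(Y_s^{(k)})_s$ with range parameter $b_k$ and a per-$k$ failure budget $\dt_k := \dt/(3.39\,k\ln^2(1+k))$, giving, with probability at least $1-\dt_k$, for all $t\ge1$,
\[
  \sum_{s=1}^t Y_s^{(k)} \le \sqrt{2\sum_{s=1}^t (Y_s^{(k)})^2\,\ln\tfrac{\sqrt{\pi(t+1)}}{\dt_k}} + b_k\ln\tfrac{\sqrt{\pi(t+1)}}{\dt_k},
\]
and one checks directly that $\ln(\sqrt{\pi(t+1)}/\dt_k)$ coincides with $\xi_{t,k}$. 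Then I would union bound over $k\in\NN_+$: the total failure probability is $\sum_{k\ge1}\dt_k = \tfrac{\dt}{3.39}\sum_{k\ge1}\tfrac{1}{k\ln^2(1+k)}$, so it remains only to verify the numerical fact $\sum_{k\ge1}\tfrac{1}{k\ln^2(1+k)}\le 3.39$ (the $k=1$ term $1/\ln^2 2\approx 2.08$ dominates and the tail is controlled by $\int \tfrac{dk}{k\ln^2 k}$), which makes the total at most $\dt$ and yields the claim uniformly over $k$ and $t$.

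The main obstacle is the linear (bias) term: a black-box application of \cref{lem:cb-bounded-rv} to $(Y_s^{(k)})$ would use the range $[-2b_k,2b_k]$ of the centered increments and thus produce $2b_k\xi_{t,k}$ rather than the stated $b_k\xi_{t,k}$. To recover the sharper constant I would re-run the coin-betting/Ville construction underlying \cref{lem:cb-bounded-rv} on the wealth process
\[
  W_t \;=\; \prod_{s=1}^{t}\frac{1+\alpha\,\clip{X_s}_{b_k}}{1+\alpha\,\EE[\clip{X_s}_{b_k}\mid\cG_{s-1}]},
\]
which is a nonnegative unit-mean martingale for every predictable betting fraction $\alpha\in[0,1/b_k)$: it bets on the $b_k$-bounded clipped outcome while dividing out its predictable conditional mean, so the admissible range of $\alpha$ stays at $1/b_k$ and the bias term stays at $b_k$.

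I expect the only genuinely delicate calculation to be confirming that, after this modification, the empirical quantity surviving in the variance term is $\sum_s (Y_s^{(k)})^2$ rather than $\sum_s \clip{X_s}_{b_k}^2$ (these differ by $\sum_s \EE[\clip{X_s}_{b_k}\mid\cG_{s-1}]^2$ plus a martingale cross term). In the regime where the corollary is actually applied (\cref{thm:fully-conf-set-each}, where no clipping occurs and the conditional mean vanishes) the centered and uncentered increments coincide, so $Y_s^{(k)}=X_s$ and $|Y_s^{(k)}|\le b_k$ exactly; hence this subtlety does not affect the downstream confidence-set bounds, and the weighted union bound over $k$ is the substantive remaining step.
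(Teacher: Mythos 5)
Your proof takes the same route as the paper's, whose entire argument is the single sentence that the corollary ``follows by a simple union bound argument applied to \cref{lem:cb-bounded-rv}'': you apply the lemma for each fixed $k$ with failure budget $\dt_k = \dt/(3.39\,k\ln^2(1+k))$, identify $\ln(\sqrt{\pi(t+1)}/\dt_k)$ with $\xi_{t,k}$, and sum the budgets using $\sum_{k\ge1} 1/(k\ln^2(1+k)) \le 3.39$ (your numerical check of the dominant $k=1$ term and the integral tail is exactly the right verification). What you add, correctly, is that the black-box application is not quite licensed: the centered increments $Y_s^{(k)} = \clip{X_s}_{b_k} - \EE[\clip{X_s}_{b_k}\mid\cG_{s-1}]$ are only guaranteed to lie in $[-2b_k,2b_k]$, so \cref{lem:cb-bounded-rv} applied verbatim yields $2b_k\xi_{t,k}$ in the linear term rather than the stated $b_k\xi_{t,k}$, and also forces $|\alpha|\le 1/(2b_k)$. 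This is a genuine imprecision in the corollary as stated in full generality, which the paper's one-line proof does not acknowledge. Your ratio-martingale $W_t=\prod_{s\le t}(1+\alpha\clip{X_s}_{b_k})/(1+\alpha\,\EE[\clip{X_s}_{b_k}\mid\cG_{s-1}])$ is a sensible repair for the linear term, but, as you anticipate, pushing Fan's inequality through the numerator naturally leaves $\sum_s \clip{X_s}_{b_k}^2$ rather than $\sum_s (Y_s^{(k)})^2$ as the variance proxy, so even that route does not reproduce the stated display without further massaging. Your closing observation is the operative one: in the only place the corollary is invoked (\cref{thm:fully-conf-set-each}), the bound $|X_s|\le b_{k_t}$ is established before the corollary is applied, so the clipping is inactive, $\EE[\clip{X_s}_{b_k}\mid\cG_{s-1}]=\EE[X_s\mid\cG_{s-1}]=0$, the increments are genuinely $b_k$-bounded and centered, and \cref{lem:cb-bounded-rv} applies with $R=b_k$ exactly as written, so the downstream confidence-set and regret bounds are unaffected. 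In short, your proposal matches the paper's proof and improves on it by flagging, and essentially resolving for the relevant use case, the one step where ``simple union bound'' is too quick.
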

\begin{proof}
  The statement follows by a simple union bound argument applied to \cref{lem:cb-bounded-rv}.
\end{proof}

\begin{lemma}\label{lem:lnlnsq} 
  Define $\blue{\barlnln(x)} := \ln\ln(e \vee x)$ and let $\barlnln^2(x) = (\barlnln(x))^2$.
  Let $x \ge e$.
  Throughout, we take $\ln\ln(x) = 0$ for $x \le e$.
  Then, for $a,b>0$,
  \begin{align*}
    x \ge a + 2b(e + \barlnln^2(2a) + \barlnln^2(2b))
    \implies x \ge a + b \ln^2\ln(x)~.
  \end{align*}
\end{lemma}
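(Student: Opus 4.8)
The plan is to eliminate the implicit self-reference in the conclusion by a change of variables, reducing everything to a clean one-sided bound on $\ln^2\ln$. First I would set $z := (x-a)/b$; the hypothesis reads exactly $z \ge 2(e + \barlnln^2(2a) + \barlnln^2(2b))$, and since $x = a + bz$ the desired conclusion $x \ge a + b\ln^2\ln(x)$ is equivalent to $\ln^2\ln(a+bz) \le z$. Thus it suffices to prove, for all $a,b>0$ and all $z \ge 2(e + \barlnln^2(2a) + \barlnln^2(2b))$, that $\ln^2\ln(a+bz) \le z$; the case $a+bz < e$ is immediate from the convention $\ln\ln(\cdot)=0$ there, so I may assume $x:=a+bz \ge e$.

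Next I would peel one logarithm by a two-way split that mirrors the $2a,2b$ arguments appearing on the right-hand side. If $x \le 2a$, then monotonicity of $\ln\ln$ gives $\ln\ln(x) \le \barlnln(2a)$ (here $2a \ge x \ge e$), and since the hypothesis yields $\barlnln^2(2a) \le z/2 \le z$ we are done. Otherwise $x > 2a$ forces $bz > a$, hence $x < 2bz$, and I split once more: when $2b \ge z$ I use $2bz \le (2b)^2$ to get $\ln\ln(x) \le \ln 2 + \barlnln(2b)$, so that $\ln^2\ln(x) \le 2\ln^2 2 + 2\barlnln^2(2b)$, which is $\le z$ because the hypothesis gives $z \ge 2e + 2\barlnln^2(2b)$ and $2\ln^2 2 < 2e$; when $2b < z$ I use $2bz \le z^2$ to get $\ln\ln(x) \le \ln 2 + \ln\ln(z)$. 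This is exactly where the constants of the lemma are engineered to line up: the factor $2$ produced by $(\ln 2 + \barlnln(2b))^2 \le 2\ln^2 2 + 2\barlnln^2(2b)$ matches the coefficient $2$ multiplying $\barlnln^2(2b)$ inside the hypothesis.

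The only genuinely analytic step, and the one I expect to be the main obstacle, is the residual single-variable inequality $(\ln 2 + \ln\ln z)^2 \le z$ for $z \ge 2e$ arising from the final sub-case. I would reduce it to $\ln 2 + \ln\ln(z) \le \sqrt z$ and verify this by checking it at $z = 2e$ and showing that $\sqrt z - \ln 2 - \ln\ln(z)$ has nonnegative derivative on $[2e,\infty)$ (i.e.\ $\tfrac{1}{2\sqrt z} \ge \tfrac{1}{z\ln z}$, which follows from $\sqrt z\,\ln z \ge 2$ there). All other steps are elementary manipulations using monotonicity of $\ln\ln$ together with the bounds $x \le 2\max(a,bz)$ and $2bz \le \max\{(2b)^2, z^2\}$; the care required is purely in keeping the $e \vee (\cdot)$ convention and the factor-of-two bookkeeping consistent across the three cases.
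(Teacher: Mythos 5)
Your proposal is correct and follows essentially the same route as the paper's proof: the same substitution $z=(x-a)/b$, the same reduction via $a+bz\le 2a\vee 2bz$ followed by splitting the product $2bz$, and the same residual single-variable inequality $(\ln 2+\ln\ln z)^2\lesssim z$ (the paper merely phrases the argument as a contraposition and splits $\ln(2bz)\le 2\ln(2b)\vee 2\ln z$ instead of $2bz\le (2b)^2\vee z^2$). Your factor-of-two bookkeeping and the monotonicity check of $\sqrt z-\ln 2-\ln\ln z$ on $[2e,\infty)$ both hold, so no gaps.
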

\begin{proof}
  We prove the contraposition: 
  \begin{align*}
    x < a + b \ln^2\ln(x) \implies x < a + 2b(e + \barlnln^2(2a) + \barlnln^2(2b))~.
  \end{align*}
  If $x < a + be$, then the statement follows trivially.
  
  If $x \ge a + be$, then $z := \fr{x-a}{b} \ge e$.
  Then,
  \begin{align*}
    z 
    &< \ln^2\ln(bz + a)
    \\&\le \ln^2\ln(2bz \vee 2a))
    \\&\le \barlnln^2(2bz) \vee \barlnln^2(2a)  
  \end{align*}
  where the last line can be shown by the case-by-case reasoning on whether $2bz \ge e$ or not and whether $2a \ge e$ or not.
  
  If $2bz \ge e$, then
  \begin{align*}
    \barlnln^2(2bz) 
    &\le \ln^2(2\ln(z) \vee 2\ln(2b))   
    \\&\sr{(a)}{\le} \ln^2(2\ln(z)) + 2\ln^2(2) + 2\barlnln^2(2b)  \tag{$ z \ge e$}
    \\&\le \fr{z}{2} + 1 + \barlnln^2(2b)  ~.  \tag{$ z \ge e$; $2\ln^2(2)\le1$}
  \end{align*}
  If $2bz < e$, then the inequality above is also true, trivially.
  
  The two displays above imply
  \begin{align*}
    z < 2(1 + \barlnln^2(2b) + \barlnln^2(2a))~.
  \end{align*}
  Using definition of $z$ concludes the proof.
\end{proof}

\subsection{Online Ridge Regression}

The following regret equality provides an essential tool for analyses in this paper.
\begin{lemma}\label{lem:regret_equality}
  (Regret Equality; \citet[Lemma 7.1]{orabona19modern}) 
  Take \cref{ass:model} except for the stochastic modeling of $u_t$.
  Let $\Sig_t = \lam I + \sum_{s=1}^t z_s z_s^\T$.
  Then,
  \begin{align*}
    \sum_{s=1}^t f_s(\hth_{s-1}) - f_s(\th^*) 
    &= \fr{\lam}{2}\|\th^*\|^2 + \sum_{s=1}^t f_s(\hth_{s-1}) \normz{z_s}^2_{\Sig_t^{-1}}  - \fr12\normz{\th^* - \hth_t}^2_{\Sig_t}~.
  \end{align*}
\end{lemma}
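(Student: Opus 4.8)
The plan is to prove the identity by induction on $t$, treating it as a purely deterministic algebraic fact — no probabilistic structure of $u_t$ is used, which is exactly why the hypothesis drops ``the stochastic modeling of $u_t$''. First I would record the two structural facts about online ridge regression. Writing $b_t := \sum_{s=1}^t z_s u_s$ so that $\hth_t = \Sig_t^{-1} b_t$ and $\hth_0 = 0$, the first-order optimality condition gives $\Sig_t \hth_t = b_{t-1} + z_t u_t = \Sig_{t-1}\hth_{t-1} + z_t u_t$. Combined with the rank-one update $\Sig_t = \Sig_{t-1} + z_t z_t^\T$, this rearranges into the one-step recursion
\begin{align*}
  \hth_t = \hth_{t-1} - e_t\, \Sig_t^{-1} z_t, \qquad e_t := z_t^\T \hth_{t-1} - u_t,
\end{align*}
where $e_t$ is precisely the prediction error making $f_t(\hth_{t-1}) = \tfrac12 e_t^2$ (and likewise $f_t(\th^*) = \tfrac12 e_a^2$ with $e_a := z_t^\T\th^* - u_t$).

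For the base case $t=0$ both sides vanish: the two sums are empty and $\tfrac12\|\th^* - \hth_0\|^2_{\Sig_0} = \tfrac12\|\th^*\|^2_{\lam I} = \tfrac\lam2\|\th^*\|^2$ cancels the regularization term. For the inductive step it then suffices to verify the \emph{one-step identity} obtained by subtracting the claim at $t-1$ from the claim at $t$ (the $s$-th summand of the middle term being $f_s(\hth_{s-1})\|z_s\|^2_{\Sig_s^{-1}}$, so that only the $s=t$ term is newly added), namely
\begin{align*}
  f_t(\hth_{t-1}) - f_t(\th^*)
  = f_t(\hth_{t-1})\,\|z_t\|^2_{\Sig_t^{-1}}
  - \tfrac12\|\th^* - \hth_t\|^2_{\Sig_t}
  + \tfrac12\|\th^* - \hth_{t-1}\|^2_{\Sig_{t-1}}.
\end{align*}

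To establish this, I would substitute the recursion into $\|\th^* - \hth_t\|^2_{\Sig_t}$ and expand, using $\Sig_t = \Sig_{t-1} + z_t z_t^\T$ and the identity $z_t^\T(\th^* - \hth_{t-1}) = e_a - e_t$. The expansion produces a cross term $2e_t(e_a - e_t)$ and a squared term $(e_a-e_t)^2$ from splitting $\|\th^*-\hth_{t-1}\|^2_{\Sig_t}$, together with a rank-one correction $e_t^2\, z_t^\T\Sig_t^{-1} z_t$. The whole computation collapses through the elementary simplification $(e_a - e_t)^2 + 2 e_t(e_a - e_t) = e_a^2 - e_t^2$, and the correction term $e_t^2\,\|z_t\|^2_{\Sig_t^{-1}}$ cancels exactly against $f_t(\hth_{t-1})\|z_t\|^2_{\Sig_t^{-1}} = \tfrac12 e_t^2\,\|z_t\|^2_{\Sig_t^{-1}}$, leaving $\tfrac12 e_t^2 - \tfrac12 e_a^2 = f_t(\hth_{t-1}) - f_t(\th^*)$ and closing the induction. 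The only real obstacle is bookkeeping in this expansion — keeping straight which quadratic form is measured in $\Sig_{t-1}$ versus $\Sig_t$ and matching the rank-one correction against the potential term so the two cancel. Alternatively, one could simply invoke the general FTRL regret equality of \citet[Lemma 7.1]{orabona19modern} and specialize it to the squared loss with quadratic regularizer, but the short induction above is self-contained and makes the cancellation transparent.
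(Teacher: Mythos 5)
Your proof is correct, and it takes a genuinely different (more elementary) route than the paper, which offers no proof at all and simply imports the general FTRL regret equality of \citet[Lemma 7.1]{orabona19modern} specialized to the squared loss with quadratic regularizer. Your induction is sound: the recursive least-squares update $\hth_t = \hth_{t-1} - e_t\,\Sig_t^{-1}z_t$ follows from $\Sig_t\hth_t = \Sig_{t-1}\hth_{t-1} + z_t u_t$ and the rank-one update of $\Sig_t$, and the one-step identity you isolate does check out — expanding $\|\th^*-\hth_t\|^2_{\Sig_t}$ with $z_t^\T(\th^*-\hth_{t-1}) = e_a - e_t$ gives exactly $\|\th^*-\hth_{t-1}\|^2_{\Sig_{t-1}} + e_a^2 - e_t^2 + e_t^2\|z_t\|^2_{\Sig_t^{-1}}$, so the telescoping sum closes. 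What your approach buys is a self-contained, transparent verification that also makes explicit where each term comes from (in particular that $\frac\lam2\|\th^*\|^2$ is just the $t=0$ potential), at the cost of some bookkeeping; the citation route is shorter and emphasizes that the identity is an instance of a general online-learning fact, which is the conceptual point the paper wants to make. One remark worth flagging: you silently read the middle term as $\sum_{s=1}^t f_s(\hth_{s-1})\|z_s\|^2_{\Sig_s^{-1}}$ (subscript $s$), whereas the lemma as printed has $\Sig_t^{-1}$. Your reading is the correct one — it is the only version for which the identity holds and for which the summands telescope, and it is the version used everywhere else in the paper (e.g., $D_s^2 = \|z_s\|^2_{\Sig_s^{-1}}$ in the proof of \cref{thm:confset-semi}) — so the statement contains a typo rather than your proof containing an error, but it would be worth saying so explicitly.
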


\begin{lemma}\label{lem:negregret}
  Take \cref{ass:model}.
  Then,
  \begin{align*}
    1-\dt \le \PP\del[2]{ \sum_{s=1}^t (f_s(\th^*) - f_s(\hth_{s-1})) \le \sig_*^2 \ln(1/\dt)}~.
  \end{align*}
\end{lemma}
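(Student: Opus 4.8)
The plan is to reduce the claim to an exponential-supermartingale tail bound via Ville's inequality. First I would rewrite the per-step increment in closed form. Setting $r_s := z_s^\T(\hth_{s-1} - \th^*)$ and using $u_s = z_s^\T \th^* + \nu_s$, we get $z_s^\T\th^* - u_s = -\nu_s$ and $z_s^\T\hth_{s-1} - u_s = r_s - \nu_s$, so
\begin{align*}
  f_s(\th^*) - f_s(\hth_{s-1}) = \tfrac12\nu_s^2 - \tfrac12(r_s - \nu_s)^2 = r_s\nu_s - \tfrac12 r_s^2.
\end{align*}
The structural point that makes everything work is that $r_s$ is $\cF_{s-1}$-measurable: $\hth_{s-1}$ is determined by $(z_1,u_1,\ldots,z_{s-1},u_{s-1})$ and $z_s\in\cF_{s-1}$ by the definition of the filtration in \cref{ass:model}, so $r_s$ is predictable and only $\nu_s$ is fresh randomness at step $s$.

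Next I would build the supermartingale. Assuming first $\sig_*^2>0$, set $\lam = 1/\sig_*^2$ and define
\begin{align*}
  M_t := \exp\del[3]{\lam\sum_{s=1}^t\del[1]{r_s\nu_s - \tfrac12 r_s^2}}, \qquad M_0 := 1.
\end{align*}
Conditioning on $\cF_{t-1}$ and using that $r_t$ is $\cF_{t-1}$-measurable while $\nu_t\mid\cF_{t-1}$ is $\sig_*^2$-sub-Gaussian, \cref{ass:model} gives $\EE[\exp(\lam r_t\nu_t)\mid\cF_{t-1}]\le\exp(\tfrac12\lam^2 r_t^2\sig_*^2)$, hence
\begin{align*}
  \EE[M_t\mid\cF_{t-1}] \le M_{t-1}\exp\del[2]{\tfrac12 r_t^2(\lam^2\sig_*^2 - \lam)} = M_{t-1},
\end{align*}
where the last equality is the exact cancellation produced by $\lam=1/\sig_*^2$ (for any $\lam\le 1/\sig_*^2$ the exponent is nonpositive, which already suffices). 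Thus $(M_t)_{t\ge0}$ is a nonnegative supermartingale with $\EE[M_0]=1$.

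Finally I would invoke Ville's inequality \cite{ville39etude}: $\PP(\exists t\ge1: M_t\ge 1/\dt)\le\dt$. On the complementary event $M_t<1/\dt$ for all $t$, i.e. $\lam\sum_{s=1}^t(r_s\nu_s-\tfrac12 r_s^2)<\ln(1/\dt)$; multiplying through by $\sig_*^2=1/\lam$ yields $\sum_{s=1}^t(f_s(\th^*)-f_s(\hth_{s-1}))<\sig_*^2\ln(1/\dt)$ for all $t$ simultaneously, which is in fact the time-uniform strengthening used in the applications (the fixed-$t$ statement follows identically from Markov's inequality applied to $M_t$). The degenerate case $\sig_*^2=0$ is trivial: then $\nu_s=0$ almost surely, so the increment is $-\tfrac12 r_s^2\le 0=\sig_*^2\ln(1/\dt)$ deterministically.

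I do not expect a serious obstacle; the only delicate points are confirming the predictability of $r_s$ from the stated filtration and the exact choice $\lam=1/\sig_*^2$ that makes the $-\tfrac12 r_s^2$ term cancel the sub-Gaussian MGF contribution. This cancellation is precisely why the bound carries no $\ln t$, dimension, or pathlength factor, so no slack can be discarded in that step.
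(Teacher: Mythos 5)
Your proposal is correct and follows essentially the same route as the paper: the paper derives the identical identity $f_s(\th^*) - f_s(\hth_{s-1}) = r_s\nu_s - \tfrac12 r_s^2$ and then invokes its Lemma~\ref{lem:sg} (an exponential-supermartingale bound via Ville's inequality) with the tuning $a = 1/\sig_*^2$, which is exactly the cancellation you carry out inline. The only difference is that you unpack the supermartingale argument directly rather than citing the auxiliary lemma, and you note explicitly that the bound is time-uniform, which the paper also relies on downstream.
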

\begin{proof}
  Let $r_s := z_s^\T(\hth_{s-1} - \th^*)$.
  We have the following identity.
  \begin{align}
    f_s(\hth_{s-1}) - f_s(\th^*) 
    &= \fr12 r_s^2 - r_s\nu_s  ~. \label{eq:24-0112-instregret2}
  \end{align}
  Using~\eqref{eq:24-0112-instregret2} and the sub-Gaussianity of $(\nu_s)_{s=1}^{t}$, we can use \cref{lem:sg} to have that, with probability at least $1-\dt$,
  \begin{align*}
    \forall t\ge1, \sum_{s=1}^t f_s(\th^*) - f_s(\hth_{s-1}) \le \fr{a}{2}  \sum_{s=1}^t r_s^2 \sig_*^2 + \fr 1a \ln(1/\dt) - \fr12 \sum_{s=1}^{t} r_s^2 ~.
  \end{align*}
  Choosing $a = \fr{1}{\sig_*^2}$ concludes the proof.
\end{proof}

\begin{lemma} \label{lem:regret-upperbounded-by-rt-sq}
  Take \cref{ass:model}.
  Then, with probability at least $1-\dt$,
  \begin{align*}
  \forall t\ge1,      \sum_{s=1}^t (f_s(\hth_{s-1}) - f_s(\th^*)) \le \sum_{s=1}^t r_s^2 + \sig_*^2\ln(1/\dt) ~.
  \end{align*}
\end{lemma}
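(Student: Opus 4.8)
The plan is to recycle the exact pointwise identity used in the proof of \cref{lem:negregret} and then apply the supermartingale tail bound \cref{lem:sg} to the \emph{noise} term with the opposite sign, tuning the free parameter identically.

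First, I would recall the identity~\eqref{eq:24-0112-instregret2}, namely $f_s(\hth_{s-1}) - f_s(\th^*) = \fr12 r_s^2 - r_s\nu_s$ with $r_s = z_s^\T(\hth_{s-1}-\th^*)$. Summing over $s\in[1..t]$ gives
\begin{align*}
  \sum_{s=1}^t \del{f_s(\hth_{s-1}) - f_s(\th^*)} = \fr12\sum_{s=1}^t r_s^2 - \sum_{s=1}^t r_s\nu_s,
\end{align*}
so the task reduces to producing a time-uniform upper bound on $-\sum_{s=1}^t r_s\nu_s$.

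Second, I would note that $r_s$ is $\cF_{s-1}$-measurable (both $\hth_{s-1}$ and $z_s$ are), so conditioned on $\cF_{s-1}$ the variable $-r_s\nu_s$ is $r_s^2\sig_*^2$-sub-Gaussian by \cref{ass:model}. Applying \cref{lem:sg} with $X_s = -r_s\nu_s$ and conditional variance proxy $\sig_s^2 = r_s^2\sig_*^2$ yields, for any fixed $a>0$, with probability at least $1-\dt$ and simultaneously for all $t\ge1$,
\begin{align*}
  -\sum_{s=1}^t r_s\nu_s \le \fr{a}{2}\sig_*^2\sum_{s=1}^t r_s^2 + \fr1a\ln(1/\dt).
\end{align*}

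Third, I would tune $a = 1/\sig_*^2$, exactly as in \cref{lem:negregret}. This turns the first term into $\fr12\sum_{s=1}^t r_s^2$ and the second into $\sig_*^2\ln(1/\dt)$, so combining with the identity from the first step gives
\begin{align*}
  \sum_{s=1}^t \del{f_s(\hth_{s-1}) - f_s(\th^*)} \le \fr12\sum_{s=1}^t r_s^2 + \fr12\sum_{s=1}^t r_s^2 + \sig_*^2\ln(1/\dt) = \sum_{s=1}^t r_s^2 + \sig_*^2\ln(1/\dt),
\end{align*}
which is precisely the claim.

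There is essentially no real obstacle: this lemma is the positive-sign companion of \cref{lem:negregret}, obtained from the same identity and the same choice $a=1/\sig_*^2$, differing only in which tail of the self-normalized sum $\sum_s r_s\nu_s$ is controlled. The sole point requiring care is that sub-Gaussianity must be invoked \emph{conditionally} so that the predictable factor $r_s^2$ enters the variance proxy, which is licensed by the $\cF_{s-1}$-measurability of $r_s$.
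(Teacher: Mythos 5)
Your proposal is correct and matches the paper's own proof essentially line for line: the same identity $f_s(\hth_{s-1}) - f_s(\th^*) = \fr12 r_s^2 - r_s\nu_s$, the same application of \cref{lem:sg} to the predictably-weighted noise sum, and the same tuning $a = 1/\sig_*^2$. Your remark that the sub-Gaussianity must be invoked conditionally so that $r_s^2$ enters the variance proxy is exactly the (implicit) justification the paper relies on.
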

\begin{proof}
  Note that, with probability at least $1-\dt$,
  \begin{align*}
    \sum_{s=1}^t (f_s(\hth_{s-1}) - f_s(\th^*)) 
    &= \sum_{s=1}^t \fr12 r_s^2 - r_s\nu_s
  \\&\le \sum_{s=1}^t \fr12 r_s^2 + \fr{a}{2} \sum_{s=1}^{t} r_s^2 \sig_*^2 + \fr{1}{a} \ln(1/\dt) 
  \\&= \sum_{s=1}^t r_s^2 + \sig_*^2 \ln(1/\dt) ~. \tag{choose $a = 1/\sig_*^2$}
  \end{align*}
\end{proof}

\begin{lemma}\label{lem:empirical-var}
  Take \cref{ass:model}.
  If $D_t := \normz{z_t}^2_{\Sig_t^{-1}} \le \fr12, \forall t\ge1$, then with probability at least $1-2\dt$, 
  \begin{align*}
    \forall t, \sum_{s=1}^t f_s(\hth_{s-1})D_s^2 
    &\le 10\sig_*^2 \ln(1/\dt) + \fr32 \lam \normz{\th^*}_2^2 + 4\sum_{s=1}^t f_s(\th^*)D_s^2 ~.
  \end{align*}
\end{lemma}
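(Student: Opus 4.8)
The plan is to compare the online predictor's leverage-weighted loss to that of $\th^*$ pointwise, and then dispatch the resulting terms by a martingale argument. Writing $r_s := z_s^\T(\hth_{s-1}-\th^*)$ and recalling the identity $f_s(\hth_{s-1}) - f_s(\th^*) = \fr12 r_s^2 - r_s\nu_s$ (as in the proof of \cref{lem:negregret}) together with $f_s(\th^*) = \fr12\nu_s^2$, I would multiply by $D_s^2$ and sum to obtain
\begin{align*}
  \sum_{s=1}^t f_s(\hth_{s-1})D_s^2 = \sum_{s=1}^t f_s(\th^*)D_s^2 + \fr12\sum_{s=1}^t r_s^2 D_s^2 - \sum_{s=1}^t r_s\nu_s D_s^2 .
\end{align*}
Thus the task reduces to controlling the predictable quadratic term $\sum_s r_s^2 D_s^2$ and the martingale cross term $\sum_s r_s\nu_s D_s^2$.

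For the cross term, observe that under \cref{ass:model} the quantity $r_s D_s^2$ is $\cF_{s-1}$-measurable, so $r_s\nu_s D_s^2 \mid \cF_{s-1}$ is zero-mean and $(r_s D_s^2)^2\sig_*^2$-sub-Gaussian. I would apply \cref{lem:sg} to both $\{-r_s\nu_s D_s^2\}_s$ and $\{r_s\nu_s D_s^2\}_s$ (the union bound over the two directions is the source of the $1-2\dt$ failure probability), and crucially use the hypothesis $D_s^2\le\fr12$ to bound the variance proxy by $\sum_s (r_s D_s^2)^2\sig_*^2 \le \fr12\sig_*^2\sum_s r_s^2 D_s^2$. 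This yields, for a free parameter $a>0$, an estimate of the form $\envert{\sum_s r_s\nu_s D_s^2} \le \fr{a\sig_*^2}{4}\sum_s r_s^2 D_s^2 + \fr1a\ln(1/\dt)$.

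The main obstacle is an absolute bound on the predictable term, namely $\sum_s r_s^2 D_s^2 \lsim \sum_s f_s(\th^*)D_s^2 + \lam\normz{\th^*}^2 + \sig_*^2\ln(1/\dt)$; this is the genuine content of the lemma and cannot be extracted from the decomposition alone. Indeed, the cross-term estimate above only relates $\sum_s r_s\nu_s D_s^2$ back to $\sum_s r_s^2 D_s^2$, so a naive self-bounding loops: $\sum_s f_s(\hth_{s-1})D_s^2$ and $\sum_s r_s^2 D_s^2$ are of the same order, and the coupling constant fails to contract. To break this, I would invoke the regret equality (\cref{lem:regret_equality}), which supplies the clean regularization term $\fr\lam2\normz{\th^*}^2$ and a nonpositive potential term $-\fr12\normz{\th^* - \hth_t}^2_{\Sig_t}$, and combine it with a sub-Gaussian control of the noise that actually generates $r_s$ (via $\hth_{s-1}-\th^* = \Sig_{s-1}^{-1}(\sum_{u<s} z_u\nu_u - \lam\th^*)$). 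Here the condition $D_s^2\le\fr12$ enters again, since it forces $\normz{z_s}^2_{\Sig_{s-1}^{-1}}\le 1$ and thereby lets the elliptical potential lemma (\cref{lem:epl}) convert the accumulated leverage scores into a logarithmic factor that is absorbed into $\sum_s f_s(\th^*)D_s^2$. I expect this step -- showing that the leverage-weighted squared prediction error of online ridge regression is comparable to the leverage-weighted noise variance, rather than to the factor-$d$ larger total confidence radius -- to be the technically delicate part.

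Finally, I would substitute the bound on $\sum_s r_s^2 D_s^2$ and the cross-term estimate back into the decomposition, choose $a$ (e.g.\ $a = 1/\sig_*^2$) so that the $\sum_s r_s^2 D_s^2$ coefficients collapse, and collect the resulting constants to arrive at the stated $4\sum_s f_s(\th^*)D_s^2 + \fr32\lam\normz{\th^*}^2 + 10\sig_*^2\ln(1/\dt)$.
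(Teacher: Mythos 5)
Your decomposition, your reliance on the regret equality, and your identification of $D_s^2\le\fr12$ as the crucial hypothesis all match the paper's proof. The gap is exactly where you place it: the ``absolute bound'' $\sum_s r_s^2D_s^2\lsim\sum_s f_s(\th^*)D_s^2+\lam\normz{\th^*}^2+\sig_*^2\ln(1/\dt)$ is the entire content of the lemma, and the route you sketch for it does not deliver it. Expanding $\hth_{s-1}-\th^*=\Sig_{s-1}^{-1}(\sum_{u<s}z_u\nu_u-\lam\th^*)$, applying Cauchy--Schwarz, and converting leverage scores with \cref{lem:epl} bounds $r_s^2$ by $\normz{z_s}^2_{\Sig_{s-1}^{-1}}\normz{\hth_{s-1}-\th^*}^2_{\Sig_{s-1}}$, where the second factor is the full confidence radius $\lam\normz{\th^*}^2+O(\sig_*^2d\ln s)$; summing against the leverage scores then yields something of order $\del{\lam\normz{\th^*}^2+\sig_*^2d\ln t}\cdot d\ln t$, a factor $d\ln t$ larger than the target $\sum_sf_s(\th^*)D_s^2\approx\sig_*^2d\ln t$. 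You acknowledge this is the ``technically delicate part'' but never resolve it, so the proof is incomplete at its crux. (The intermediate bound you want is in fact true --- it follows from \cref{lem:sum_rs_sq} together with $\sum_sr_s^2D_s^2\le\fr12\sum_sr_s^2$ --- but the proof of that lemma is itself a contraction argument built on the regret equality, not the closed-form-plus-elliptical-potential argument you propose.)

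The deeper issue is your diagnosis that ``the coupling constant fails to contract,'' which is what pushes you toward an absolute bound in the first place; it is incorrect, and the paper's proof is precisely the contracting self-bounding argument you dismiss. The contraction comes from the order of operations: first relax the weighted quadratic to the unweighted one, $\fr12\sum_sr_s^2D_s^2\le\fr14\sum_sr_s^2$ (this is where $D_s^2\le\fr12$ buys a factor of $\fr12$), and only then invoke the regret equality, which gives $\fr14\sum_sr_s^2\le\fr12\sum_sr_s\nu_s+\fr\lam4\normz{\th^*}^2+\fr12\sum_sf_s(\hth_{s-1})D_s^2$ --- coefficient $\fr12$ on the self-referential term. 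The leftover cross term collapses to $\sum_sr_s\nu_s(\fr12-D_s^2)$, which \cref{lem:sg} with $(\fr12-D_s^2)^2\le\fr14$ and the tuning $\xi=\fr12$ controls by a further $\fr14\sum_sf_s(\hth_{s-1})D_s^2$ plus lower-order terms. The total self-referential coefficient is $\fr34<1$, the inequality rearranges, and the constants $4$, $\fr32$, and $10$ fall out. No absolute bound on $\sum_sr_s^2D_s^2$ is needed anywhere.
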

\begin{proof}
  Let $\blue{r_s} := z_s^\T(\hth_{s-1} - \th^*)$.
  
  By the regret equality (\cref{lem:regret_equality}), we have
  \begin{align} \label{eq:24-0112-orr-regret}
    \sum_{s=1}^t \fr12 r_s^2 - r_s \nu_s
    = \sum_{s=1}^t f_s(\hth_{s-1}) - f_s(\th^*) 
    \le \fr{\lam}{2}\normz{\th^*}^2  + \sum_{s=1}^t f_s(\hth_{s-1}) D_s^2 ~.
  \end{align}
  Then, using \cref{lem:sg}, with probability at least $1-\dt$, we have, $\forall t\ge1$,
  \begin{align*}
    \sum_{s=1}^t \fr12 r_s^2 
    &\le \fr14\sum_{s=1}^t r_s^2 + 2\sig_*^2 \ln(1/\dt) + \fr{\lam}{2}\normz{\th^*}^2  + \sum_{s}^t f_s(\hth_{s-1}) D_s^2 ~,
    \\ \implies
    \fr14\sum_{s=1}^t  r_s^2 
    &\le 2\sig_*^2 \ln(1/\dt) + \fr{\lam}{2}\normz{\th^*}^2  + \sum_{s}^t f_s(\hth_{s-1}) D_s^2 ~.
  \end{align*}
  Therefore,
  \begin{align*}
    \sum_{s=1}^t (f_s(\hth_{s-1}) - f_s(\th^*) )D_s^2
    &=   \sum_{s=1}^t \fr12 r_s^2D_s^2 - \sum_{s=1}^t r_s \nu_s D_s^2
    \\&\le   \fr14\sum_{s=1}^t r_s^2 - \sum_{s=1}^t r_s \nu_s D_s^2 \tag{$D_s^2 \le \fr12$ }
    \\&\le  \sum_{s=1}^t r_s \nu_s(\fr12 - D_s^2) + \fr{\lam}{4}\normz{\th^*}^2  + \fr12\sum_{s}^t f_s(\hth_{s-1}) D_s^2 ~. \tag{by \eqref{eq:24-0112-orr-regret}}
  \end{align*}
  Using \cref{lem:sg}, with probability at least $1-\dt$, we have, $\forall t\ge1$,
  \begin{align*}
    \sum_{s=1}^t r_s \nu_s(\fr12 - D_s^2) 
    \le \fr{\xi}{2} \sum_{s=1}^t r_s^2\underbrace{(\fr12 - D_s^2)^2}_{\le \fr14} + \fr{\sig_*^2}{\xi}\ln(1/\dt)
  \end{align*}
  for some $\xi > 0$.
  Thus, 
  \begin{align*}
    \sum_{s=1}^t (f_s(\hth_{s-1}) - f_s(\th^*) )D_s^2 
    &\le \fr{\xi}{8} \sum_{s=1}^t r_s^2 + \fr{\sig_*^2}{\xi}\ln(1/\dt) + \fr{\lam}{4}\normz{\th^*}^2  + \fr12\sum_{s}^t f_s(\hth_{s-1}) D_s^2 
    \\&\le (\xi + \fr{1}{\xi})\sig_*^2\ln(1/\dt) + (\xi + 1)\fr{\lam}{4}\normz{\th^*}^2 + (\fr{\xi}{2} + \fr12) \sum_{s=1}^t f_s(\hth_{s-1}) D_s^2  \tag{use the bound on $\fr14\sum_s r_s^2$}
    \\&=   \fr{5}{2} \sig_*^2\ln(1/\dt) + \fr38 \lam\normz{\th^*}^2 + \fr34\sum_{s=1}^t f_s(\hth_{s-1}) D_s^2  \tag{choose $\xi = \fr12$}
    \\\implies
    \sum_{s=1}^t f_s(\hth_{s-1}) D_s^2
    &\le 10 \sig_*^2\ln(1/\dt) + \fr32 \lam\normz{\th^*}^2 + 4\sum_{s=1}^t f_s(\th^*) D_s^2 ~.
  \end{align*}
\end{proof}

\begin{lemma}\label{lem:sum_rs_sq}
  Take \cref{ass:model}.  
  Assume $\normz{z_s}_{\Sig_{s-1}^{-1}} \le 1, \forall s$. Then, with probability at least $1-\dt$
  \begin{align*}
    \forall t\ge1, \sum_{s=1}^t (z_s^\T(\hth_{s-1} - \th^*))^2 \le 8\del{\fr{\lam}{2}\normz{\th^*}^2 - \fr12 \normz{\th^* - \hth_{t}}^2_{\Sig_t} + \fr{1}{2} \sum_{s=1}^t \nu_s^2 D_s^2}  + 16\sig_*^2 \ln(1/\dt) ~.
  \end{align*}
\end{lemma}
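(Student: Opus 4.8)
The plan is to start from the regret equality (\cref{lem:regret_equality}) and turn it into a statement purely about $r_s := z_s^\T(\hth_{s-1} - \th^*)$. First I would record the two algebraic identities that drive everything: since $u_s = z_s^\T\th^* + \nu_s$, the prediction residual satisfies $z_s^\T\hth_{s-1} - u_s = r_s - \nu_s$, so that $f_s(\hth_{s-1}) = \fr12(r_s - \nu_s)^2$ and $f_s(\hth_{s-1}) - f_s(\th^*) = \fr12 r_s^2 - r_s\nu_s$. Substituting $f_s(\hth_{s-1}) D_s^2 = \fr12(r_s^2 - 2 r_s\nu_s + \nu_s^2)D_s^2$ into \cref{lem:regret_equality} and cancelling, the $r_s\nu_s$ terms combine to $\sum_s r_s\nu_s(1-D_s^2)$, giving the exact identity
\begin{align*}
  \fr12\sum_{s=1}^t r_s^2(1-D_s^2)
  = \fr{\lam}2\normz{\th^*}^2 + \fr12\sum_{s=1}^t \nu_s^2 D_s^2 - \fr12\normz{\th^* - \hth_t}^2_{\Sig_t} + \sum_{s=1}^t r_s\nu_s(1-D_s^2).
\end{align*}

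Next I would control the martingale term. The key observation is that $r_s(1-D_s^2)$ is $\cF_{s-1}$-measurable (both $\hth_{s-1}$ and, via $\Sig_s = \Sig_{s-1}+z_sz_s^\T$, the scalar $D_s^2$ are determined before $\nu_s$ is drawn), so $X_s := r_s\nu_s(1-D_s^2)$ is conditionally $\del{r_s(1-D_s^2)}^2\sig_*^2$-sub-Gaussian. Applying \cref{lem:sg} and then using $(1-D_s^2)^2 \le 1-D_s^2$ (valid since $D_s^2\in[0,1]$) yields, with probability at least $1-\dt$, for all $t\ge1$,
\begin{align*}
  \sum_{s=1}^t r_s\nu_s(1-D_s^2) \le \fr{a\sig_*^2}2\sum_{s=1}^t r_s^2(1-D_s^2) + \fr1a\ln(1/\dt).
\end{align*}

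The final step is the constant bookkeeping, where care is needed. Writing $P := \sum_s r_s^2(1-D_s^2)$, the two displays give $\fr12 P \le \del{\fr{\lam}2\normz{\th^*}^2 + \fr12\sum_s \nu_s^2 D_s^2 - \fr12\normz{\th^* - \hth_t}^2_{\Sig_t}} + \fr{a\sig_*^2}2 P + \fr1a\ln(1/\dt)$. Choosing $a = \fr1{2\sig_*^2}$ makes $\fr{a\sig_*^2}2 = \fr14$, leaving $\fr14 P$ on the left and a remainder of $2\sig_*^2\ln(1/\dt)$. Finally, the hypothesis $\normz{z_s}_{\Sig_{s-1}^{-1}}\le1$ forces $D_s^2 = \normz{z_s}^2_{\Sig_s^{-1}}\le\fr12$ by Sherman--Morrison, hence $P \ge \fr12\sum_s r_s^2$; multiplying through by $8$ then produces exactly the claimed bound with the constants $8$ and $16$. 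The one genuinely delicate point is to not discard the $(1-D_s^2)$ weight too early: crudely bounding $D_s^2\le1$ inside the sub-Gaussian variance (rather than using $(1-D_s^2)^2\le1-D_s^2$ and matching it against the identical weight appearing on the left) would lose a factor of two and give $32\sig_*^2\ln(1/\dt)$ in place of the advertised $16\sig_*^2\ln(1/\dt)$.
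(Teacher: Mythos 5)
Your proof is correct and follows essentially the same route as the paper's: the same rearrangement of the regret equality into $\fr12\sum_s r_s^2(1-D_s^2) = A + \sum_s r_s\nu_s(1-D_s^2)$, the same application of \cref{lem:sg} with the conditional sub-Gaussian parameter $r_s^2(1-D_s^2)^2\sig_*^2$, the same use of $(1-D_s^2)^2\le 1-D_s^2$, and the same final step $D_s^2\le\fr12$ to convert $\sum_s r_s^2(1-D_s^2)$ into $\fr12\sum_s r_s^2$. Your remark about not discarding the $(1-D_s^2)$ weight prematurely is exactly the point that makes the paper's constants come out.
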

\begin{proof}
  Let $r_s = z_s^\T(\hth_{s-1} - \th^*)$
  From the regret equality (\cref{lem:regret_equality})
  \begin{align*}
    \fr12 \sum_{s=1}^t r_s^2 - \sum_{s=1}^t \nu_s r_s 
    &= \fr{\lam}{2}\normz{\th^*}^2 - \fr{1}{2}\normz{\hth_t- \th^*}^2_{\Sig_t} + \fr12\sum_{s=1}^t r_s^2 D_s^2 - \sum_{s=1}^t \nu_s r_s D_s^2 + \fr{1}{2} \sum_{s=1}^t \nu_s^2 D_s^2
    \\ \implies
    \fr12 \sum_{s=1}^t r_s^2(1-D_s^2) 
    &= \underbrace{\fr{\lam}{2}\normz{\th^*}^2 - \fr{1}{2}\normz{\hth_t- \th^*}^2_{\Sig_t}  + \fr{1}{2} \sum_{s=1}^t \nu_s^2 D_s^2}_{\tsty =: A} + \sum_{s=1}^t \nu_s r_s (1-D_s^2) 
  \end{align*}
  where $(a)$ is obtained by tuning the scalar in the exponential martingale carefully.
  
  With \cref{lem:sg}, one can obtain, with probability at least $1-\dt$,
  \begin{align*}
    \forall t\ge1, \sum_{s=1}^{t}  \nu_s r_s (1-D_s^2) \le \fr14 \sum_{s=1}^t r_s^2(1-D_s^2)^2 +  2\sig_*^2 \ln(1/\dt) ~.
  \end{align*}
  Thus,
  \begin{align*}
    \sum_{s=1}^t r_s^2(1-D_s^2) 
    &\le 4A + 8\sig_*^2\ln(1/\dt)
    \\ \implies
    \sum_{s=1}^t r_s^2 &\le 8A + 16\sig_*^2\ln(1/\dt)
  \end{align*}
  where the last line is due to the fact that $\normz{x_s}_{\Sig_{s-1}^{-1}} \le 1 \implies D_s^2 \le \fr12$ using the Woodbury matrix identity.
\end{proof}

\begin{lemma}
  \label{lem:sum_eta_sq}
  Take \cref{ass:model} with the added assumption of $\nu_t \in [-R,R], \forall t$, with probability 1.
  \begin{align*}
    1-\dt \le \PP\del[2]{\forall t\ge1, \sum_{s=1}^t \nu_s^2 \le \fr32\sum_{s=1}^t \sig_s^2 + R^2\ln(1/\dt)}  ~.
  \end{align*}
\end{lemma}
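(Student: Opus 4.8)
The plan is to reuse the exponential–supermartingale recipe behind \cref{lem:sg}, but applied to the \emph{squared} noise $\nu_s^2$, which is sub-exponential rather than sub-Gaussian, so \cref{lem:sg} cannot be invoked verbatim. The key structural facts are that, under \cref{ass:model} with $\nu_t\in[-R,R]$, each $\nu_s^2$ lies in $[0,R^2]$ and has conditional mean $\EE[\nu_s^2\mid\cF_{s-1}]=\sig_s^2$. I therefore aim for a one-sided, time-uniform Chernoff bound on $\sum_{s=1}^t\nu_s^2$ whose dominant term scales with the cumulative conditional variance $\sum_{s=1}^t\sig_s^2$ rather than with $tR^2$.

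First I would control the conditional moment generating function of $\nu_s^2$. For $\lam>0$, convexity of $y\mapsto e^{\lam y}$ on $[0,R^2]$ gives the chord bound $e^{\lam y}\le 1+\fr{y}{R^2}\del{e^{\lam R^2}-1}$, so taking conditional expectations and using $\EE[\nu_s^2\mid\cF_{s-1}]=\sig_s^2$ yields
\begin{align*}
  \EE[e^{\lam\nu_s^2}\mid\cF_{s-1}]
  \le 1+\fr{\sig_s^2}{R^2}\del{e^{\lam R^2}-1}
  \le \exp\del[2]{\fr{\sig_s^2}{R^2}\del{e^{\lam R^2}-1}}.
\end{align*}
This is precisely the ingredient needed to show that
\begin{align*}
  M_t := \exp\del[3]{\lam\sum_{s=1}^t\nu_s^2 - \fr{e^{\lam R^2}-1}{R^2}\sum_{s=1}^t\sig_s^2}
\end{align*}
is a nonnegative supermartingale with $M_0=1$, exactly mirroring the construction in the proof of \cref{lem:sg}. (Equivalently, one may phrase this as a Bennett/Bernstein-type bound using $\EE[\nu_s^4\mid\cF_{s-1}]\le R^2\sig_s^2$.)

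Next I would apply Ville's inequality~\cite{ville39etude} to $(M_t)$ to conclude that, with probability at least $1-\dt$, simultaneously for all $t\ge1$,
\begin{align*}
  \sum_{s=1}^t\nu_s^2
  \le \fr{e^{\lam R^2}-1}{\lam R^2}\sum_{s=1}^t\sig_s^2 + \fr{1}{\lam}\ln(1/\dt).
\end{align*}
Writing $c:=\lam R^2$, the two coefficients are $\fr{e^c-1}{c}$ and $\fr{R^2}{c}$; choosing $c$ of order one turns both into absolute constants and yields a bound of exactly the form in the statement, namely $C_1\sum_{s}\sig_s^2+C_2 R^2\ln(1/\dt)$. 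Matching the specific constants $\fr32$ and $1$ of the statement is then a matter of the final numerical tuning of $\lam$, and in any case only absolute constants propagate downstream (e.g.\ \cref{lem:beta_t-true-variance} invokes this bound under $\lsim$).

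The main obstacle, and the reason \cref{lem:sg} does not apply directly, is the transition from sub-Gaussian to sub-exponential MGF control: because $\nu_s^2$ is merely nonnegative and bounded, its cumulant generating function grows like $e^{\lam R^2}$ rather than quadratically in $\lam$, and the entire point of the lemma — that the bound tracks $\sum_s\sig_s^2$ and not $tR^2$ — rests on inserting the conditional-mean identity $\EE[\nu_s^2\mid\cF_{s-1}]=\sig_s^2$ into the chord bound above. Everything else (the supermartingale verification, the Ville step, and the optimization over $\lam$) is elementary once this moment-generating-function estimate is in hand.
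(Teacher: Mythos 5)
Your proposal is correct in substance and follows essentially the same route as the paper: an exponential supermartingale with a variance-adaptive MGF bound, closed by Ville's inequality. The paper instead writes $\nu_s^2 = \sig_s^2 + (\nu_s^2-\sig_s^2)$ and invokes \cref{lem:sg} on the centered increments with variance proxy controlled by $\EE[(\nu_s^2-\sig_s^2)^2\mid\cF_{s-1}]\le R^2\sig_s^2$; your chord bound $e^{\lam y}\le 1+\fr{y}{R^2}\del{e^{\lam R^2}-1}$, multiplied by $e^{-\lam\sig_s^2}$, is exactly the Bennett bound for that centered increment, so the two derivations are equivalent --- and yours is arguably the more careful one, since $\nu_s^2-\sig_s^2$ is not literally sub-Gaussian with the needed parameter for all $\lam$. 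One quantitative caveat: the coefficient pairs achievable from your bound are $\bigl(\fr{e^{c}-1}{c},\fr{1}{c}\bigr)$ with $c=\lam R^2$, and since $\fr{e^c-1}{c}$ is increasing with $\fr{e-1}{1}>\fr32$, no choice of $c$ makes both coordinates $\le\fr32$ and $\le1$ simultaneously (e.g.\ $c=1$ yields $(e-1)\sum_{s}\sig_s^2+R^2\ln(1/\dt)$); so the exact constants of the statement are not recovered by tuning alone. This is immaterial downstream, where \cref{lem:beta_t-true-variance} only uses the bound up to absolute constants.
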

\begin{proof} 
  Using the standard sub-Gaussian inequality (\cref{lem:sg}) with the fact that $\nu_s^2 - \sig_s^2$ is $R^2$-sub-Gaussian, we have, with probability at least $1-\dt$, 
  \begin{align*}
    \forall t\ge1, \sum_{s=1}^t \nu_s^2
    &=   \sum_{s=1}^t \sig_s^2 + \sum_{s=1}^t (\nu_s^2 - \sig_s^2)
    \\&\le \sum_{s=1}^t \sig_s^2 + \fr{a}{2} \sum_{s=1}^t \EE[(\nu_s^2 - \sig_s^2)^2 \mid \cF_{s-1}] + \fr{R^2} a \ln(1/\dt) 
    \\&\le \sum_{s=1}^t \sig_s^2 + \fr{a}{2} \sum_{s=1}^t \sig_s^2 + \fr{R^2}a \ln(1/\dt)
    \\&\le \fr32\sum_{s=1}^t \sig_s^2 + R^2\ln(1/\dt)    ~. \tag{choose $a = 1$} 
  \end{align*}
  Note that the above holds true for every $t$ simultaneously, with probability at least $1-\dt$.
\end{proof}

\section{Implementation Details of LOFAV}
\label{sec:lofav-implementation} 

For LOFAV, we can define an extra confidence set:
\begin{align*}
\forall \ell \in [L],   \cC^{\full}_{t,L+\ell} := \cbr[3]{\th:
  \fr12 \normz{\hth_{t,\ell} - \th}^2_{\Sig_{t,\ell}} \le \fr{\lam_\ell}{2}S^2 + \sum_{s=1}^t \ellw_{s,\ell}(\hth_{s-1,\ell})D_{s,\ell}^2 + R^2 \ln(2L/\dt) =: \blue{\gam_{t,\ell}}} ~.
\end{align*}
Then, change the algorithm so we pull $x_t 
= \arg \max_{x\in \cX_t} \max_{\th \in \wbar{\cC}^\full_{t-1}}~ \la x, \th \ra~$
where $\wbar{\cC}^\full_{t-1} := \cap_{\ell=1}^{2L} \cC^\full_{t-1,\ell}$.
Specifically,
\begin{align*}
  \max_{\th \in \wbar\cC^\full_{t-1}} \la x, \th \ra 
  = \min\cbr[2]{\min_{\ell \in [L]} \inp{x}{\barth_{t-1,\ell}} + \sqrt{2\beta_{t-1,\ell}} \normz{x}_{\barSig^{-1}_{t-1,\ell}},
  \min_{\ell \in [L]} \inp{x}{\hth_{t-1,\ell}} + \sqrt{2\gam_{t-1,\ell}} \normz{x}_{\Sig^{-1}_{t-1,\ell}}
  } ~.
\end{align*}
One can verify easily that
\begin{align*}
  1-\dt \le \PP(\forall t\in[n], \th^* \in \cC^\full_{t}  )
\end{align*}
since the event we need for $\{\cC^\full_{t,\ell}\}_{\ell=L+1}^{2L}$ to be true (i.e., the upper deviation of the negative regret) is already assumed in the proof of the correctness of $\{\cC^\full_{t,\ell}\}_{\ell=1}^L$

\clearpage
\section{Additional Experiments for LOSAN}
\label{sec:expr-more}

In this section, we provide additional experiments on toy dataset to verify our theory and identify strengths and weaknesses of LOSAN.

In addition to OFUL, we accommodate recent advances of confidence sets by using OFUL with the improved confidence set proposed in \citet[Appendix F]{chowdhury23bregman}, which coincides with the version of AMM-UCB in \citet[Appendix C.2]{flynn23improved}.
Specifically, instead of the confidence set of OFUL~\cite{ay11improved,lattimore20bandit}, i.e., 
\begin{align*}
  \cC_t = \cbr{\th \in \RR^d: \normz{\th - \hth_t}_{V_t}^2 \le \del{\sqrt{\lam}S + \sqrt{\sig_0^2\ln\del{\fr{|V_t|}{|\lam I|}} + 2\sig_0^2\ln(1/\dt) }}^2 }
\end{align*}
where $V_t = \lam I + \sum_{s=1}^t x_s x_s^\T$, $\hth_t = V_t^{-1} X_t^\T y_t$, $X_t \in \RR^{t\times d}$ is the design matrix, and $y_t \in \RR^t$ is the vector of rewards from time step 1 to $t$,
we use
\begin{align*}
  \cC_t = \cbr{\th \in \RR^d: \normz{\th - \hth_t}_{V_t}^2 \le \lam S^2 + \sig_0^2\ln\del{\fr{|V_t|}{|\lam I|}} + 2\sig_0^2\ln(1/\dt) }~.
\end{align*}
We call this variation OFUL-C.
We exclude any methods that are computationally more demanding (i.e., orderwise more computations) than OFUL such as CMM-UCB of \citet{flynn23improved}.
For all methods, we set $\lambda = 10\cd \fr{\sig_0^2}{S^2}$ where $\sig_0^2$ is the specified noise and $S$ is the known upper bound of the norm of the unknown parameter $\|\th^*\|_2$.

\textbf{More experiments for LOSAN.}
We consider a set of hard instances where the suboptimality gap of all the suboptimal arms is $\Delta := 4\sqrt{\sig_0^2 d^2/n}$ as we describe below, which is meant to achieve the worst-case regret bound for OFUL, roughly.
We set $d=20$, $n=$ 50,000, $\sig_0^2 = 1.0$, and $\th^* = (S,0,\ldots,0)\in\RR^d$. 
We take a fixed arm set $\cX$ and use it throughout the time steps (i.e., $\cX_t = \cX, \forall t$) with $|\cX| = 400$.
The arm set consists of a single best arm $x^* = \fr1S\th^*$ and the rest of the arms whose suboptimality gap is all equal to $\Delta$.
To ensure this, each suboptimal arm $x$ has its first coordinate $x(1)$ as $1-\fr{\Delta}{S}$ and the other coordinates as a vector $(x(2), \ldots, x(d))\in\RR^{d-1}$ uniformly drawn from a sphere of radius $\sqrt{1 - (1 - \fr{\Delta}{S})^2}$.
This ensures that every arm has a unit Euclidean norm and the suboptimality gap is equal to $\Delta$.

We set the parameter $S$ of all the baseline algorithms as $\|\th^*\|$ and the specified noise as  $\sig_0^2 = 1$.
We try $\|\th\|_2 \in \{1, 10\}$ and $\sig_* \in \{10^{-1}, 10^{-1/2}, 10^0\}$ and draw the reward noise by $\eta_t \sim \cN(0, \sig_*^2)$, which results in total 6 experiments.
We repeat each experiment 20 times and report their average regret along with their twice standard error as the error band in Figure~\ref{fig:expr-more-losan-success} 
As one can see, LOSAN significantly outperforms all the other methods when the noise is over-specified (see (a-b) and (d-e)).
Interestingly, in the just-specified setting of (c) and (f), LOSAN outperforms OFUL and is on par with LOSAN, which shows that LOSAN is not only adaptive to the true noise but also numerically tight enough to be competitive with the state-of-the-art confidence set.
Note that, when $\|\th^*\|$ is large ($\|\th^*\|=10$), LOSAN is slightly worse than OFUL-C.

\begin{figure}[t]
  \begin{tabular}{ccc}
    \includegraphics[width=0.3\linewidth]{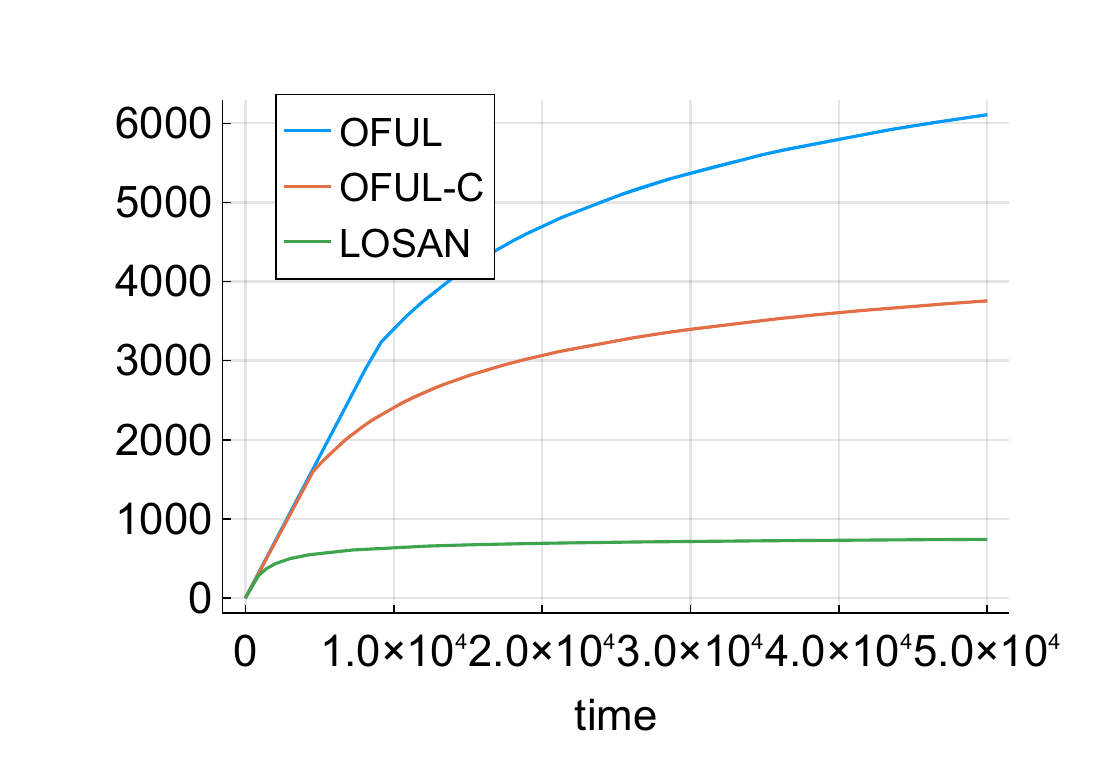}
  & \includegraphics[width=0.3\linewidth]{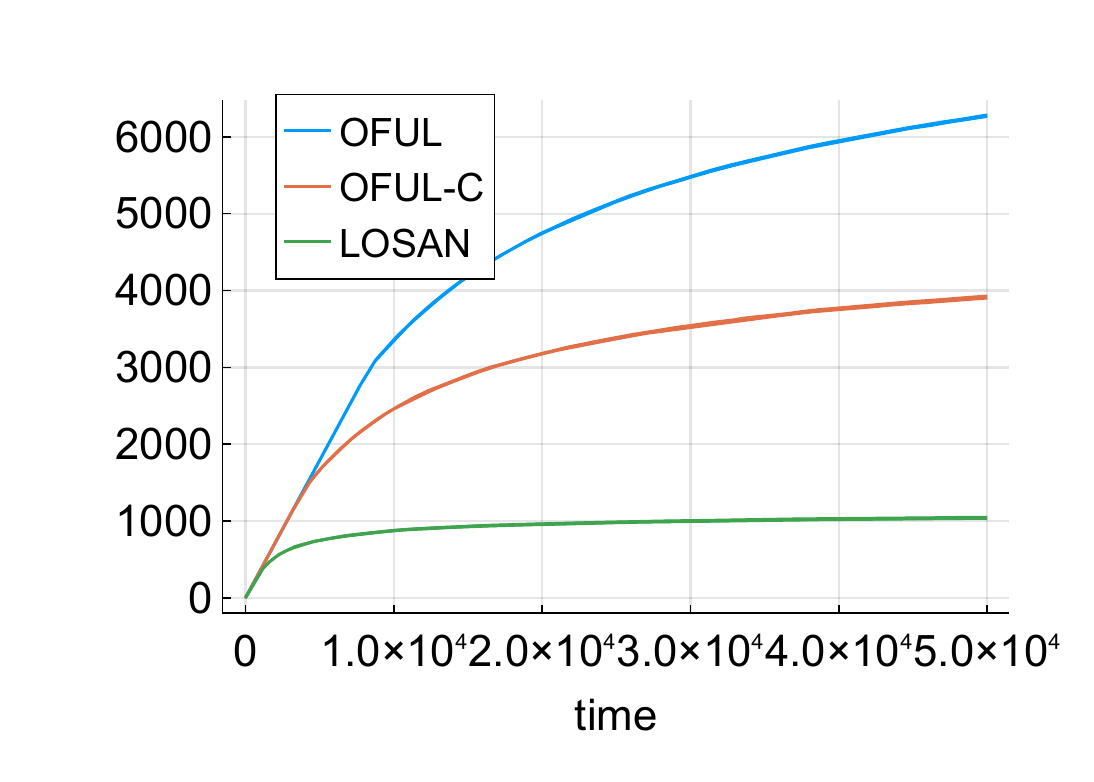}
  & \includegraphics[width=0.3\linewidth]{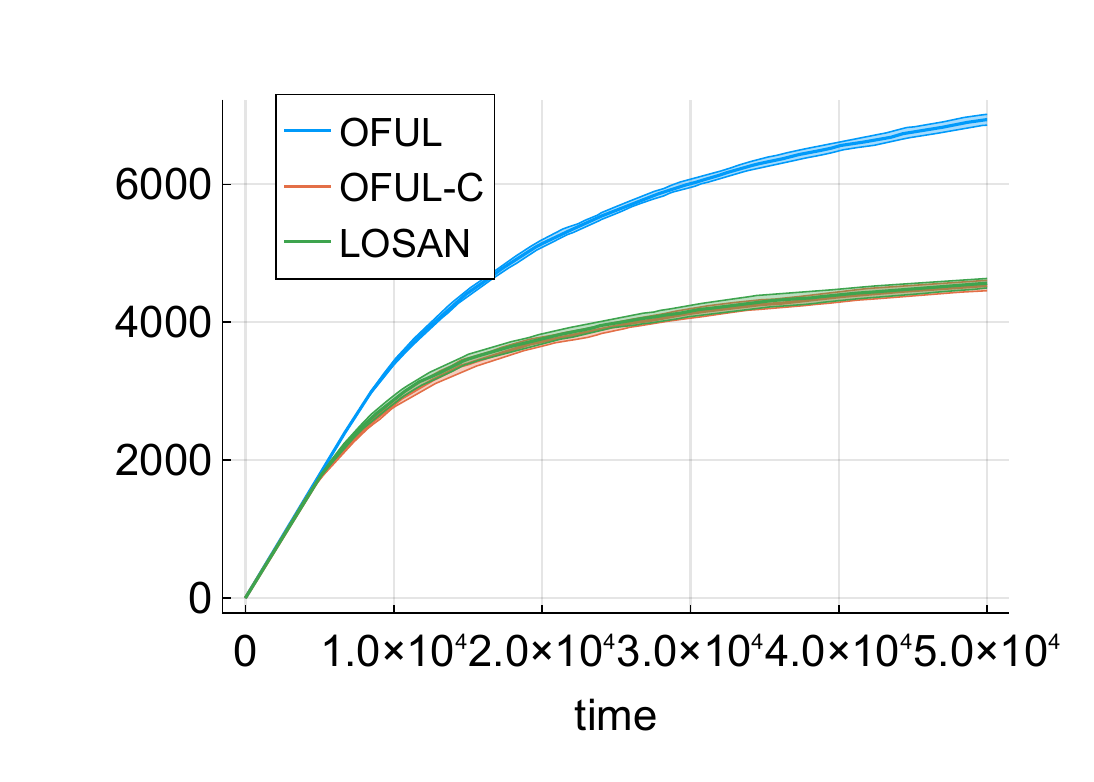} \\
   (a) $\|\th^*\|=1$, $\sig_* = 10^{-1}$    & (b) $\|\th^*\|=1$, $\sig_* = 10^{-1/2}$  & (c) $\|\th^*\|=1$, $\sig_* = 10^{0}$ \\
    \includegraphics[width=0.3\linewidth]{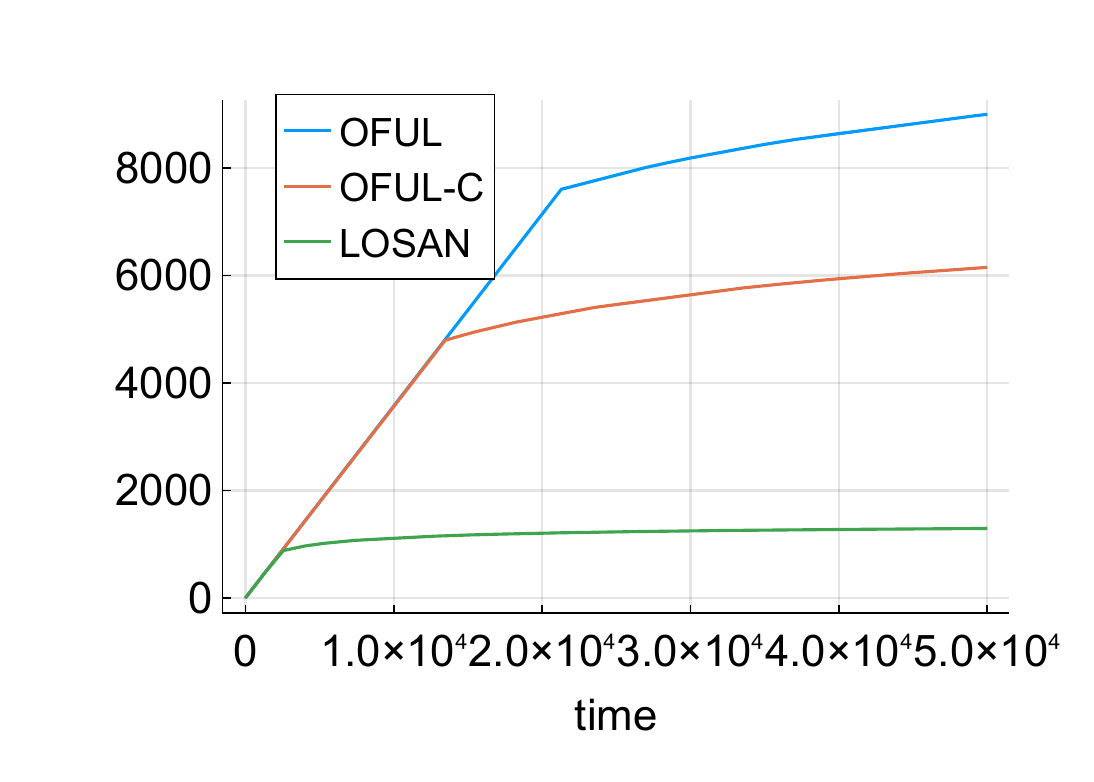}
  & \includegraphics[width=0.3\linewidth]{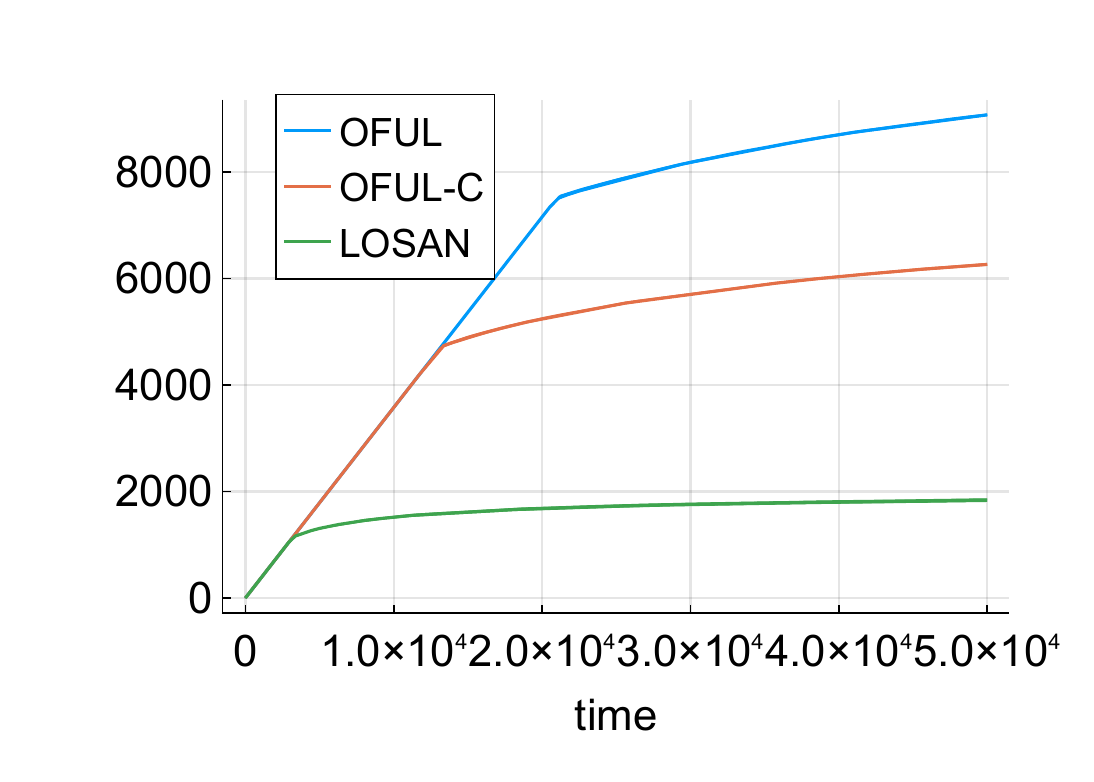}
  & \includegraphics[width=0.3\linewidth]{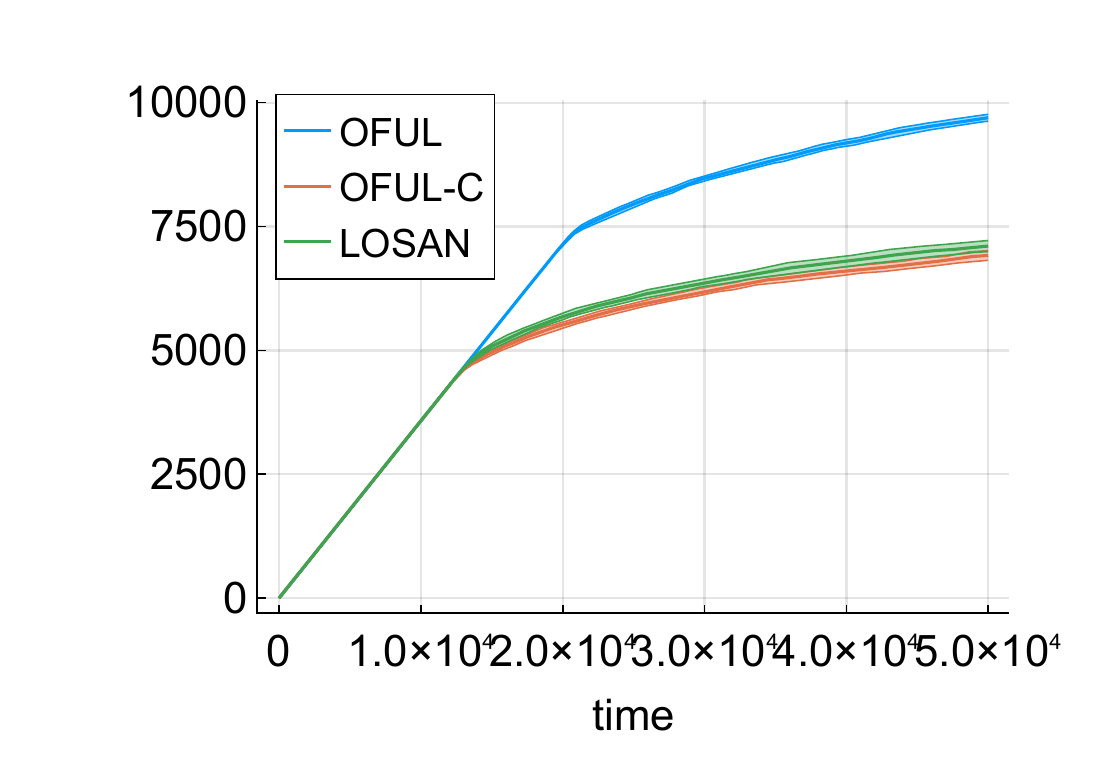} \\
   (d) $\|\th^*\|=10$, $\sig_* = 10^{-1}$ & (e) $\|\th^*\|=10$, $\sig_* = 10^{-1/2}$ & (f) $\|\th^*\|=10$, $\sig_* = 10^{0}$
  \end{tabular}
  \caption{Toy Experiments with Gaussian noise.}
  \label{fig:expr-more-losan-success}
\end{figure}

\textbf{A just-specified setting where LOSAN is worse than OFUL and OFUL-C.}
Note that LOSAN is not without a weakness in the just-specified setting.
We here report a case where LOSAN performs worse than OFUL and OFUL-C and explain why.

The problem instance considered here is an easy problem case in the sense that not many samples are required to have small regret.
This instance has a high signal-noise ratio in the sense that the smallest suboptimality gap $\Delta_{\min} := \min_{x\in \cX, x\neq x^*} \la x^* - x,\th^*\ra$, where $x^*$ is the best arm, is large compared to the noise level $\sigma_*^2$.
Specifically, we draw $\th^*$ uniformly at random from a ($d=20$)-dimensional sphere of radius 15 (thus $\|\th^*\| = 15$), and each arm is drawn uniformly at random from the unit sphere.
We sample 800 such arms ($|\cX| = 800$).

When we created such an instance over 20 trials, we obtained that the average and the standard deviation of $\Delta_{\min}$ are approximately $5.38$ and $0.84$.
When considering Gaussian noise with variance $\sigma_*^2 = 1$, the value of $\Delta_\tmin$ is much larger than $\sigma_*$, and thus one can expect that an algorithm could identify the best arm roughly after pulling 20 arms that are sufficiently linearly independent.
Furthermore, $\Delta_\tmax := \min_{x\in \cX, x\neq x^*} \la x^* - x,\th^*\ra$ has the average and the standard deviation of approximately $24.79$ and $0.68$.
The large value of $\Delta_\tmax$ implies that the cost of pulling a suboptimal arm can be very large.

We run OFUL, OFUL-C, and LOSAN with 20 trials where each trial samples a fresh $\th^*$ and an arm set.
We report the resulting regret in Figure~\ref{fig:expr-more-losan-failure}(a) where the error band is twice the standard error.
As one can see, LOSAN is outperformed by both OFUL and OFUL-C.
The problem is easy enough that once an algorithm finds the best arm, then it rarely increases the regret, which means that they have arrived at a regime where the regret is polylogarithmic.
While the slope of the regret after time 1000 is mostly the same for all methods, LOSAN incurs more regret early on.

To inspect the reason, we plot the largest upper confidence bound (UCB), i.e., $\max_{x\in \cX, \th\in \cC_{t-1}} \la x,\th\ra$), in Figure~\ref{fig:expr-more-losan-failure}(b).
Both OFUL and OFUL-C drop the value of the largest UCB significantly around $t=20$.
While LOSAN initially has a tighter confidence bound at the beginning, its drop of confidence bound happens at a much slower rate during which LOSAN pulls suboptimal arms more frequently, resulting in high regret.
We speculate that such a difference is within a constant factor at best; we would be very surprised if one can show that LOSAN is orderwise worse than OFUL/OFUL-C (i.e., exist a series of instances where the difference can be arbitrarily large). 
Further examination and potential improvement for LOSAN is left as future work.

As a comparison, the instance used in Figure~\ref{fig:expr-more-losan-success} has $\Delta_\tmin = \Delta_\tmax$, which means that not knowing the best arm has the same cost, and the bottleneck of the regret is mainly to precisely locate $\th^*$.
On the other hand, the bottleneck in the spherical instance used here is to quickly figuring out $\th^*$ very roughly (since that is enough to find the best arm), and any delay in doing so results in large cost in regret (due to large $\Delta_\tmax$).

\begin{figure}[t]
  \begin{center}
  \begin{tabular}{cc}
    \includegraphics[width=0.4\linewidth]{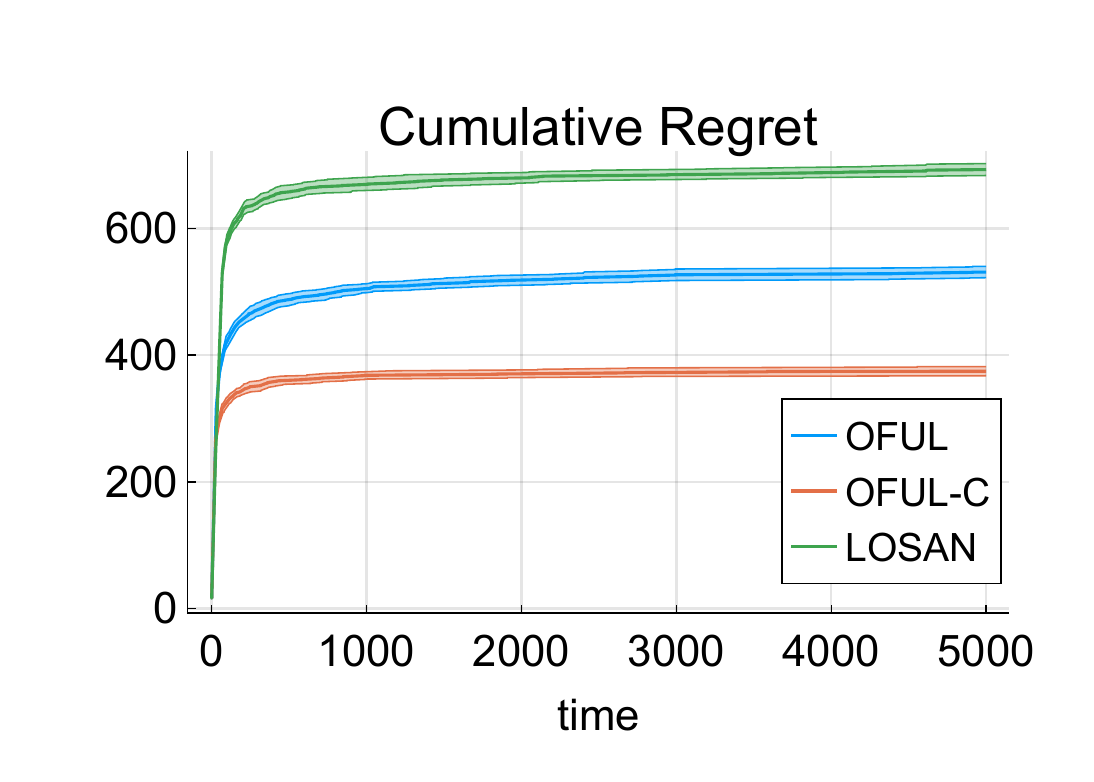}
    & \includegraphics[width=0.4\linewidth]{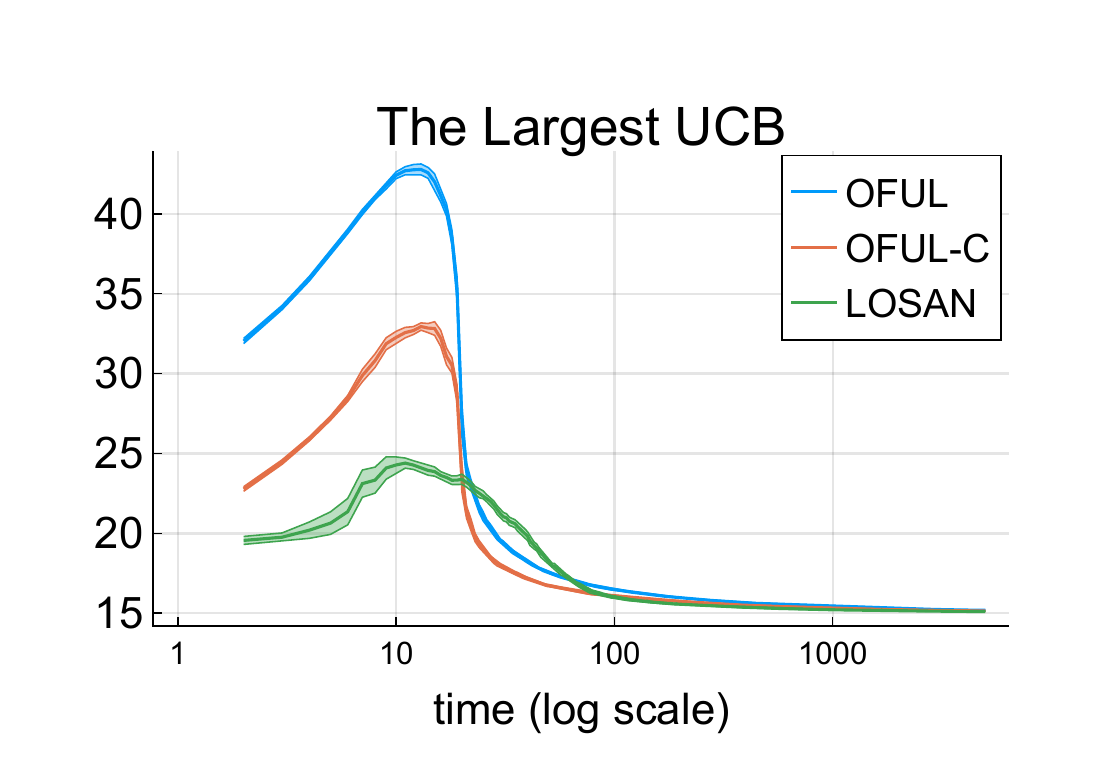}\\
    (a) & (b) 
  \end{tabular}
  \end{center}
  \caption{A case where LOSAN performs worse than baseline method.}
  \label{fig:expr-more-losan-failure}
\end{figure}

\end{document}
